\newcommand{\R}{{\mathbb{R}}}
\newcommand{\Prob}{\mathbb{P}}
\newcommand{\Esp}{\mathbb{E}}
\renewcommand{\leq}{\leqslant}
\renewcommand{\geq}{\geqslant}
\newtheorem{theorem}{Theorem}%
\newtheorem{corollary}[theorem]{Corollary}
\newtheorem{lemma}[theorem]{Lemma}
\newtheorem{proposition}[theorem]{Proposition}
\newtheorem{definition}{Definition}
\newtheorem{remark}{Remark}
\newtheorem*{remark*}{Remark}
\title{Implicit Regularization of Deep Residual Networks towards Neural ODEs}
\newcommand*\samethanks[1][\value{footnote}]{\footnotemark[#1]}
\author{Pierre Marion\thanks{Equal contribution. Correspondence to \texttt{pierre.marion@mines.org}.}, \, Yu-Han Wu\samethanks \\
LPSM \\
Sorbonne Université, CNRS\\
Paris, France
\And
Michael E. Sander \\
DMA \\
ENS, CNRS \\
Paris, France
\And
Gérard Biau \\
LPSM \\
Sorbonne Université, CNRS \\
Paris, France
}
\begin{document}

\maketitle

\begin{abstract}
Residual neural networks are state-of-the-art deep learning models. Their continuous-depth analog, neural ordinary differential equations (ODEs), are also widely used. Despite their success, the link between the discrete and continuous models still lacks a solid mathematical foundation. In this article, we take a step in this direction by establishing an implicit regularization of deep residual networks towards neural ODEs, for nonlinear networks trained with gradient flow. We prove that if the network is initialized as a discretization of a neural ODE, then such a discretization holds throughout training. Our results are valid for a finite training time, and also as the training time tends to infinity provided that the network satisfies a Polyak-Łojasiewicz condition. Importantly, this condition holds for a family of residual networks where the residuals are two-layer perceptrons with an overparameterization in width that is only linear, and implies the convergence of gradient flow to a global minimum. Numerical experiments illustrate our results.
\end{abstract}

\section{Introduction}
\label{sec:intro}

Residual networks are a successful family of deep learning models popularized by breakthrough results in computer vision \citep{heDeepResidualLearning2015}. The key idea behind residual networks, namely the presence of skip connections, is now ubiquitous in deep learning, and can be found, for example, in Trans\-former models \citep{vaswani2017attention}. 
The main advantage of skip connections is to allow successful training with depth of the order of a thousand layers, in contrast to vanilla neural networks, leading to significant performance improvement \citep[e.g.,][]{wang2022deepnet}. This has motivated research on the properties of residual networks in the limit where the depth tends to infinity. One of the main explored directions is the neural ordinary differential equation (ODE) limit \citep{chen2018neural}. 

Before presenting neural ODEs, we first introduce the mathematical formalism of deep residual networks. We 
consider a single model throughout the paper to simplify the exposition, but most of our results apply to more general models, as will be discussed later. We consider the formulation
\begin{equation}    \label{eq:intro-resnets}
h_{k+1}=h_{k} + \frac{1}{L \sqrt{m}} V_{k+1} \sigma\Big(\frac{1}{\sqrt{q}} W_{k+1} h_{k}\Big), \quad k \in \{0, \dots, L-1\},   
\end{equation}
where $L$ is the depth of the network, $h_k \in \R^q$ is the output of the $k$-th hidden layer, $V_k \in \R^{q \times m}$, $W_{k} \in \R^{m \times q}$ are the weights of the $k$-th layer, and $\sigma$ is an activation function applied element-wise. Scaling with the square root of the width is classical, although it often appears as an equivalent condition on the variance at initialization \citep{glorot2010training,lecun1998efficient,He2015DelvingDI}. We make the scaling factors explicit to have weights of magnitude $\cO(1)$ independently of the width and the depth. The $1/L$ scaling factor is less common, but it is necessary for the correspondence with neural ODEs to hold. More precisely, if there exist Lipschitz continuous functions $\mathcal{V}$ and $\mathcal{W}$ such that $V_{k} = \mathcal{V}(k/L)$ and $W_{k} = \mathcal{W}(k/L)$, then the residual network~\eqref{eq:intro-resnets} converges, as $L\to \infty$, to the ODE
\begin{equation}    \label{eq:intro-neural-odes}
\frac{dH}{ds}(s) = \frac{1}{\sqrt{m}} \mathcal{V}(s) \sigma\Big(\frac{1}{\sqrt{q}}\mathcal{W}(s) H(s)\Big), \quad s \in [0,1],
\end{equation}
where $s$ is the continuous-depth version of the layer index.
It is important to note that this correspondence holds for \textit{fixed} limiting functions $\mathcal{V}$ and $\mathcal{W}$. 
This is especially true at initialization, for example by setting the $V_k$ to zero and the $W_k$ to Gaussian matrices weight-tied across the depth. The initial residual network is then trivially equal to the neural ODE $\frac{dH}{ds}(s) = 0$. Of course,  more sophisticated initializations are possible, as shown, e.g., in \citet{marion2022scaling,sander2022do}. However, regardless of an ODE structure at initialization, a more challenging question is that of the structure of the network \textit{during and after} training. Since the weights are updated during training, there is a priori no guarantee that an ODE limit still holds after training, even if it does at initialization.

The question of a potential ODE structure for the trained network is not a mere technical one. In fact, it is important for at least three reasons. First, it gives a precise answer to the question of the connection between (trained) residual networks and neural ODEs, providing more solid ground to a common statement in the community that both can coincide in the large-depth limit \citep[see, e.g.,][]{haber2017stable,e2019mean,dong2020towards,massaroli2020dissecting,kidger2022neural}. Second, it opens 
exciting perspectives for understanding residual networks. Indeed,
if trained residual networks are discretizations of neural ODEs, then it is possible to  apply results from neural ODEs to the large family of residual networks. In particular, from a theoretical point of view, the approximation capabilities of neural ODEs are well understood \citep{teshima2020universal,zhang2020approximation} and it is relatively easy to obtain generalization bounds for these models \citep{hanson2022fitting,marion2023generalization}. From a practical standpoint, advantages of neural ODEs include memory-efficient training \citep{chen2018neural,sander2022do} and weight compression \citep{queiruga2021stateful}. This is important because in practice memory is a bottleneck for training residual networks \citep{gomez2017reversible}.  
Finally, our analysis is a first step towards understanding the implicit regularization \citep{neyshabur2015search,vardi2022implicit} of gradient descent for deep residual networks, that is, characterizing the properties of the trained network among all minimizers of the empirical risk.

Throughout the document, it is assumed that the network is trained with gradient flow, which is a continuous analog of gradient descent. The parameters $V_k$ are updated according to an ODE of the form 
$\frac{d V_k}{d t}(t) = - L \frac{\partial \ell}{\partial V_k}(t)$ for $t \geq 0$,
where $\ell$ is an empirical risk (the exact mathematical context and assumptions are detailed in Section \ref{sec:definitions}), and similarly for $W_k$. The scaling factor $L$ is the counterpart of the factor $1/L$ in~\eqref{eq:intro-resnets}, and prevents vanishing gradients as $L$ tends to infinity. Note that the gradient flow is defined with respect to a time index $t$ different from the layer index $s$. 
\vspace{-1em}
\paragraph{Contributions.} Our first main contribution (Section \ref{sec:finite-training-time}) is to show that
a neural ODE limit holds after training up to time $t$, i.e., there exists a function $\mathcal{V}(s, t)$ such that the residual network converges, as $L$ tends to infinity, to the ODE
\[
\frac{dH}{ds}(s) = \frac{1}{\sqrt{m}} \mathcal{V}(s, t) \sigma\Big(\frac{1}{\sqrt{q}}\mathcal{W}(s, t) H(s)\Big), \quad s \in [0,1].
\]
This large-depth limit holds for any finite training time $t \geqslant 0$.
However, the convergence of the optimization algorithm as $t$ tends to infinity, which we refer to as the \textit{long-time limit} to distinguish it from the large-depth limit $L \to \infty$, is not guaranteed without further assumptions, due to the non-convexity of the optimization problem. We attack the question (Section \ref{subsec:long-training}) when the width is large enough by proving a Polyak-Łojasiewicz (PL) condition, which is now state of the art in analyzing the properties of optimization algorithms for deep neural networks \citep{liu2022loss}. The main assumption for our PL condition to hold is that the width $m$ of the hidden layers should be greater than some constant times the number of data $n$. As a second main contribution, we show that the PL condition yields the long-time convergence of the gradient flow for residual networks with linear overparameterization. Finally, we prove the convergence with high probability in the long-time limit, namely the existence of functions $\mathcal{V}_\infty$ and $\mathcal{W}_\infty$ such that the discrete trajectory defined by the trained residual network \eqref{eq:intro-resnets} converges as \textit{both} $L$ and $t$ tend to infinity to the solution of the neural ODE \eqref{eq:intro-neural-odes} with $\mathcal{V} = \mathcal{V}_\infty$ and $\mathcal{W} = \mathcal{W}_\infty$. %
In addition, our approach points out that this limiting ODE interpolates the training data. Finally, our results are illustrated by numerical experiments (Section \ref{sec:experiments}).

\section{Related work}
\label{sec:related}

\paragraph{Deep residual networks and neural ODEs.}
Several works study the large-depth convergence of residual networks to differential equations, but without considering the training dynamics \citep{thorpe2018deep,cohen2021scaling,marion2022scaling,hayou2023on}.
Closer to our setting, \citet{cont2022convergence} and \citet{sander2022do} analyze the dynamics of gradient descent for deep residual networks, as we do, but with significant differences. \citet{cont2022convergence} consider a $\nicefrac{1}{\sqrt{L}}$ scaling factor in front of the residual branch, resulting in a limit that is not a neural ODE. In addition, only $W$ is trained. Furthermore, to obtain convergence in the long-time limit, it is assumed that the data points are nearly orthogonal.
\citet{sander2022do} prove the existence of an ODE limit for trained residual networks, but in the simplified case of a linear activation and under a more restricted setting. 

\paragraph{Long-time convergence of wide residual networks.} Polyak-Łojasiewicz conditions are a modern tool to prove long-time convergence of overparameterized neural networks \citep{liu2022loss}. These conditions are a relaxation of  convexity, and mean that the gradients of the loss with respect to the parameters cannot be small when the loss is large. They have been applied to residual networks with both linear \citep{bartlett2018gradient,wu2019global,zou2020on} and nonlinear activations \citep{allen2019convergence,frei2019algorithm,barboni2022global,cont2022convergence,macdonald2022global}. Building on the proof technique of \citet{nguyen2020global} for non-residual networks, we need only a linear overparameterization to prove our PL condition, i.e., we require $m = \Omega(n)$. 
This compares favorably with results requiring polynomial overparameterization \citep{allen2019convergence,barboni2022global} or assumptions on the data, either a margin condition \citep{frei2019algorithm} or a sample size smaller than the dimension of the data space \citep{cont2022convergence,macdonald2022global}. 

\paragraph{Implicit regularization.}
Our paper can be related to a line of work on the implicit regularization of gradient-based algorithms for residual networks \citep{neyshabur2015search}. We show that the optimization algorithm does not just converge to any residual network that minimizes the empirical risk, but rather to the discretization of a neural ODE. Note that most implicit regularization results state that the optimization algorithm converges to an interpolator that minimizes some complexity measure, which can be a margin \citep{lyu2020gradient}, a norm \citep{boursier2022gradient}, or a matrix rank \citep{li2021towards}. Thus, an interesting next step is to understand if the neural ODE found by gradient flow actually minimizes some complexity measure, and to characterize its generalization properties. 
\section{Definitions and notation} 
\label{sec:definitions}
This section is devoted to specifying the setup outlined in Section \ref{sec:intro}. Proofs are given in the appendix.

\paragraph{Residual network.}
A (scaled) residual network of depth $L\in\NN^*$ %
is defined by 
\begin{equation}
    \begin{aligned}
        h_0^L &= A^L x\\
        h_{k+1}^L &= h_{k}^L + \frac{1}{L \sqrt{m}} V_{k+1}^L \sigma\Big(\frac{1}{\sqrt{q}} W_{k+1}^L h_{k}^L\Big) , \quad k \in \{0, \dots, L-1\}, \\
        F^L(x) &= B^L h_L^L.
    \end{aligned}
    \label{eq:model-resnet}
\end{equation}
To allow the hidden layers $h_k^L \in \R^q$ to have a different dimension than the input $x \in \R^d$, we first map $x$ to $h_0^L$ with a weight matrix $A^L \in \R^{q \times d}$. We assume that the hidden layers belong to a higher dimensional space than the input and output, i.e., $q \geqslant \max(d, d')$. 
The residual transformations are two-layer perceptrons parameterized by the weight matrices $V_k^L \in \R^{q \times m}$ and $W_k^L \in \R^{m \times q}$. This is standard in the literature \citep[e.g.,][]{he2016identity,dupont2019augmented,barboni2022global}. The last weight matrix $B^L \in \R^{d' \times q}$ maps the last hidden layer to the output $F^L(x)$ in $\R^{d'}$. Also, $\sigma: \R \to \R$ is an element-wise activation function assumed to be $\cC^2$, non-constant, Lipschitz continuous, bounded, and such that $\sigma(0) = 0$. 
The convenient shorthand $Z_k^L = (V_k^L, W_k^L)$ is occasionally  used, and we denote $\|Z_k^L\|_F$ the sum of the Frobenius norms $\|V_k^L\|_F + \|W_k^L\|_F$. 

\paragraph{Data and loss.} The data is a sample of $n$ pairs $(x_i, y_i)_{1\leq i\leq n} \in (\cX \times \cY)^n$ where $\cX \times \cY$ is a compact set of $\RR^d \times \RR^{d'}$. 
The empirical risk is the mean squared error $\ell^L = \frac1n\sum_{i=1}^n \|F^L(x_i) - y_i\|^2$.

T\paragraph{Initialization.} We initialize $A^L= (I_{\R^{d \times d}}, 0_{\R^{(q-d) \times d}})$ as the identity matrix in $\R^{d \times d}$ concatenated row-wise with the zero matrix in $\R^{(q-d) \times d}$, to act as a simple projection of the input onto the higher dimensional space $\R^q$, and similarly $B^L = (0_{\R^{d' \times (q-d')}}, I_{\R^{d' \times d'}})$. The weights $V_k^L$ are initialized to zero and the $W_k^L$ as weight-tied standard Gaussian matrices, i.e., for all $k \in \{1, \dots, L\}$, $W_k^L = W \sim \cN(0, 1)^{\otimes (m \times q)}$. Initializing outer matrices to zero is standard practice \citep{zhang2019fixup}, while taking weight-tied matrices instead of i.i.d.~ones is less common. We show in Section~\ref{sec:experiments} that it is still possible to learn with this initialization scheme on real world data.
As explained in Section \ref{subsec:generalization}, other initialization choices are possible, provided they correspond to the discretization of a Lipschitz continuous function, but we focus on this one in the main text for simplicity.
\paragraph{Training algorithm.} Gradient flow is the limit of gradient descent as the learning rate tends to zero. The  parameters are set at time $t=0$ by the initialization, and then evolve according to the ODE
\begin{equation}   \label{eq:gf}
\frac{dA^L}{dt}(t) = - \frac{\partial \ell^L}{\partial A^L}(t), \quad \frac{dZ_k^L}{dt}(t) = - L \frac{\partial \ell^L}{\partial Z_k^L}(t), \quad \frac{dB^L}{dt}(t) = - \frac{\partial \ell^L}{\partial B^L}(t), \quad t \geqslant 0,
\end{equation}
for $k \in \{1, \dots, L\}$.
In the following, the dependence in $t$ %
is made explicit when necessary, e.g., we write $h_{k}^L(t)$ instead of $h_k^L$, and $F^L(x;t)$ instead of $F^L(x)$.

It turns out that, without further assumptions, the gradient flow can diverge in finite time, because the dynamics are not (globally) Lipschitz continuous. %
A common practice \citep[][Section 10.11.1]{goodfellow2016deep} is to consider instead a clipped gradient flow
\begin{equation}   \label{eq:clipped-gf}
\frac{dA^L}{dt}(t) = \pi \Big(- \frac{\partial \ell^L}{\partial A^L}(t)\Big), \quad \frac{dZ_k^L}{dt}(t) = \pi \Big(- L \frac{\partial \ell^L}{\partial Z_k^L}(t)\Big), \quad \frac{dB^L}{dt}(t) = \pi \Big(- \frac{\partial \ell^L}{\partial B^L}(t)\Big),
\end{equation}
where $\pi$ is a generic notation for a bounded Lipschitz continuous operator. For example, clipping each coordinate of the gradient at some $C > 0$ amounts to taking $\pi$ as the projection on the ball centered at $0$ of radius $C$ for the $\ell_\infty$ norm.
Clipping ensures that the gradient flow does not diverge, hence the
dynamics are well defined, as a consequence of the Picard-Lindelöf theorem (see Lemma~\ref{lemma:picard-lindelof}).
\begin{proposition}     \label{prop:clipped-gf-unique}
The (clipped) gradient flow \eqref{eq:clipped-gf} has a unique solution for all $t \geqslant 0$.
\end{proposition}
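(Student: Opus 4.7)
Let $\theta = (A^L, Z_1^L, \dots, Z_L^L, B^L)$ be the concatenation of all parameters, viewed as an element of a single Euclidean space $\mathbb{R}^N$ with $N = qd + L(qm + mq) + d'q$. Then \eqref{eq:clipped-gf} can be rewritten in the form $\dot \theta(t) = F(\theta(t))$, $\theta(0) = \theta_0$, where $F : \mathbb{R}^N \to \mathbb{R}^N$ is the concatenation of the various blocks $\pi(-\partial \ell^L / \partial A^L)$, $\pi(-L\, \partial \ell^L/\partial Z_k^L)$ and $\pi(-\partial \ell^L / \partial B^L)$. The strategy is to invoke the Picard-Lindel\"of theorem (Lemma~\ref{lemma:picard-lindelof}) to get a unique maximal solution, and then to use the boundedness of $\pi$ to rule out blow-up in finite time.

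The first step is to check that $F$ is locally Lipschitz continuous on $\mathbb{R}^N$. Since $\sigma$ is $\mathcal{C}^2$, a straightforward induction over the layers of \eqref{eq:model-resnet} shows that the forward map $\theta \mapsto F^L(x_i; \theta)$ is $\mathcal{C}^2$ for each fixed $x_i$, hence so is $\theta \mapsto \ell^L(\theta) = \frac{1}{n} \sum_{i=1}^n \|F^L(x_i; \theta) - y_i\|^2$. In particular, $\nabla \ell^L$ is $\mathcal{C}^1$ and therefore locally Lipschitz in $\theta$. By assumption, the clipping operator $\pi$ is Lipschitz continuous, so each block of $F$ is a composition of a Lipschitz map with a locally Lipschitz map, hence is locally Lipschitz. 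Concatenating the blocks preserves this property, and therefore $F$ is locally Lipschitz on $\mathbb{R}^N$.

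By Picard-Lindel\"of, for every initial condition there exists a unique maximal solution $\theta : [0, T_{\max}) \to \mathbb{R}^N$ of the Cauchy problem, with $T_{\max} \in (0, +\infty]$. To conclude, it remains to show that $T_{\max} = +\infty$. This is where the clipping enters: because $\pi$ is bounded by some constant $C > 0$ (componentwise), there exists a constant $M > 0$ depending only on $\pi$, $N$ and $L$ such that $\|F(\theta)\| \leq M$ for all $\theta \in \mathbb{R}^N$. Integrating the ODE yields $\|\theta(t) - \theta_0\| \leq M t$ for all $t \in [0, T_{\max})$, so $\theta$ remains in the compact ball $\overline{B}(\theta_0, M T_{\max})$ whenever $T_{\max} < \infty$. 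The standard blow-up alternative (the maximal solution of a locally Lipschitz ODE must leave every compact set as $t \to T_{\max}^-$ when $T_{\max} < \infty$) then forces $T_{\max} = +\infty$, which proves the proposition.

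The only mildly delicate point is the local Lipschitz continuity of $\nabla \ell^L$, which relies on $\sigma$ being $\mathcal{C}^2$ so that second derivatives of the network with respect to the parameters exist and are continuous; everything else is standard ODE theory, with the boundedness of $\pi$ doing the essential work of preventing finite-time blow-up.
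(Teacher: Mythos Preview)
Your proof is correct and follows essentially the same approach as the paper: both invoke the Picard--Lindel\"of theorem after observing that the clipped dynamics are locally Lipschitz (the paper obtains this from $f$ being $\mathcal{C}^2$ in the general model, you from $\sigma$ being $\mathcal{C}^2$), and then use the boundedness of $\pi$ to rule out finite-time blow-up via the standard alternative. The only cosmetic difference is that the paper routes the argument through its more general Proposition~\ref{prop:general-clipped-bounded} stated on an arbitrary finite interval $[0,T]$, whereas you work directly on $[0,\infty)$.
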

In Section \ref{subsec:long-training}, we make additional assumptions to prove the long-time convergence of the gradient flow. We then prove that these assumptions ensure that the dynamics of the gradient flow~\eqref{eq:gf} are well defined, eliminating the need for clipping (since in this case we show that the gradients are bounded).
\paragraph{Neural ODE.} The neural ODE corresponding to the residual network \eqref{eq:model-resnet} is defined by %
\begin{align} \label{eq:model-neuralode}
    \begin{split}
        H(0) &= Ax \\
        \frac{dH}{ds}(s) &= \frac{1}{\sqrt{m}} \cV(s) \sigma\Big(\frac{1}{\sqrt{q}} \cW(s) H(s)\Big), \quad s \in [0, 1], \\
        F(x) &= BH(1),
    \end{split}
\end{align}
where $x \in \R^d$ is the input, $H\in \R^q$ is the variable of the ODE, $\cV: [0, 1] \to \R^{q \times m}$ and $\cW: [0, 1] \to \R^{m \times q}$ are Lipschitz continuous %
functions, $A \in \R^{q \times d}$ and $B \in \R^{d' \times q}$ are matrices, and the output is $F(x) \in \R^{d'}$. 
The following proposition shows that the neural ODE is well defined. In addition, its output is close to the residual network \eqref{eq:model-resnet} provided the weights are discretizations of $\cV$ and $\cW$. 
\begin{proposition}     \label{prop:neural-ode-simple}
The neural ODE \eqref{eq:model-neuralode} has a unique solution $H: [0, 1] \to \R^q$. Consider, moreover, the residual network \eqref{eq:model-resnet} with $A^L = A$, $V_k^L = \cV(k/L)$ and $W_k^L = \cW(k/L)$ for $k \in \{1, \dots, L\}$, and $B^L = B$. Then there exists $C > 0$ such that, for all $L \in \NN^*$, $\sup_{x \in \cX} \|F(x) - F^L(x)\| \leq \frac{C}{L}$.
\end{proposition}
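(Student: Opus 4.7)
}
The plan is to treat both claims as standard facts about ODEs with a Lipschitz vector field, and then about their Euler-type discretization, while keeping track of the scaling factors in $m,q$. Write the vector field of \eqref{eq:model-neuralode} as $f(s,H) = \frac{1}{\sqrt{m}} \cV(s)\,\sigma(\frac{1}{\sqrt{q}}\cW(s)H)$. Since $\cV,\cW$ are Lipschitz on the compact set $[0,1]$ they are bounded, so $\sup_{s}\|\cV(s)\|_{op}$ and $\sup_{s}\|\cW(s)\|_{op}$ are finite; combined with $\sigma$ being Lipschitz and bounded, $f$ is uniformly bounded and Lipschitz in $H$ uniformly in $s$, and continuous in $s$. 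Picard--Lindelöf (the version invoked in Lemma~\ref{lemma:picard-lindelof}) then yields the unique global solution $H:[0,1]\to\R^q$, proving the first part.

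For the discretization bound, the first step is a uniform $a$ priori bound. From the recursion \eqref{eq:model-resnet}, boundedness of $\sigma$ and of $\cV(s)$ give $\|h_{k+1}^L-h_k^L\| \leq C_1/L$ for some constant $C_1$ independent of $L$, hence $\|h_k^L\| \leq \|Ax\| + C_1$ for all $k \leq L$ and all $x\in\cX$. The same argument bounds $\|H(s)\|$ on $[0,1]$. Because $H$ solves an ODE with bounded $f$, it is Lipschitz, so $s\mapsto f(s,H(s))$ is also Lipschitz on $[0,1]$, with some constant $C_2$ depending on $m,q,\cV,\cW,\sigma$.

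Next, define the error $e_k := h_k^L - H(k/L)$, with $e_0=0$. Writing
\begin{equation*}
H\big(\tfrac{k+1}{L}\big) - H\big(\tfrac{k}{L}\big) = \int_{k/L}^{(k+1)/L} f(s,H(s))\,ds = \tfrac{1}{L} f\big(\tfrac{k+1}{L},H(\tfrac{k+1}{L})\big) + r_k,
\end{equation*}
the Lipschitz bound on $s\mapsto f(s,H(s))$ gives $\|r_k\| \leq C_2/(2L^2)$. Subtracting this from the recursion and using $V_{k+1}^L=\cV((k+1)/L)$, $W_{k+1}^L=\cW((k+1)/L)$, we obtain
\begin{equation*}
e_{k+1} = e_k + \tfrac{1}{L\sqrt{m}}\,\cV\big(\tfrac{k+1}{L}\big)\Big[\sigma\big(\tfrac{1}{\sqrt{q}}\cW(\tfrac{k+1}{L})h_k^L\big) - \sigma\big(\tfrac{1}{\sqrt{q}}\cW(\tfrac{k+1}{L})H(\tfrac{k+1}{L})\big)\Big] - r_k.
\end{equation*}
Using the Lipschitz property of $\sigma$ together with the boundedness of $\cV,\cW$, and the already controlled one-step difference $\|H((k+1)/L)-H(k/L)\| \leq C/L$, the bracketed term is bounded by $C_3(\|e_k\|+1/L)$. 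This yields a recursion of the form $\|e_{k+1}\|\leq (1+C_4/L)\|e_k\| + C_5/L^2$.

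Finally, discrete Gronwall (iterating this inequality from $e_0=0$ over $k\leq L$) gives $\|e_k\|\leq C_5 e^{C_4}/L$ uniformly in $k$ and in $x\in\cX$. Since $F^L(x)-F(x) = B(h_L^L - H(1)) = B\,e_L$, the bound on the outputs follows with $C = \|B\|_{op}\,C_5 e^{C_4}$. I do not anticipate any real obstacle: the only mild subtlety is that constants depend on $m$ and $q$ (through $\|\cV\|_\infty$, $\|\cW\|_\infty$ and the $1/\sqrt{m},1/\sqrt{q}$ factors), but this is acceptable because the proposition is stated for fixed $\cV,\cW$ and varying $L$ only.
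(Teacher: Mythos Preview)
Your proof is correct and follows essentially the same approach as the paper: Picard--Lindel\"of for existence/uniqueness, then a standard Euler-scheme error analysis (local truncation error of order $1/L^2$ from Lipschitz continuity of $s\mapsto f(s,H(s))$, propagation via a recursion $\|e_{k+1}\|\leq(1+C/L)\|e_k\|+C'/L^2$, and discrete Gr\"onwall). The paper routes the argument through its general Proposition~\ref{prop:generalizednonlinearApprox:v2:simple}, but the underlying computation there is exactly the one you carry out directly; the only cosmetic difference is that the paper expands around $k/L$ and absorbs the shift to $(k+1)/L$ into a $\|\Theta(k/L)-\theta_{k+1}^L\|$ term, whereas you expand around $(k+1)/L$ and absorb the shift into $\|H((k+1)/L)-H(k/L)\|\leq C/L$.
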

Clearly, our choices of $V_k^L$ and $W_k^L$ at initialization are discretizations of the Lipschitz continuous (in fact, constant) functions $\cV(s) \equiv 0$ and $\cW(s) \equiv W \sim \cN(0, 1)^{\otimes (m \times q)}$. Thus, Proposition~\ref{prop:neural-ode-simple} holds at initialization, and the residual network is equivalent to the trivial ODE $\frac{dH}{ds}(s) = 0$. The next section shows that after training we obtain non-trivial dynamics, which still discretize neural ODEs.

\section{Large-depth limit of residual networks} \label{sec:maths}

We study the large-depth limit of trained residual networks in two settings. In Section~\ref{sec:finite-training-time}, we consider the case of a finite training time. We move in Section \ref{subsec:long-training} to the case where the training time tends to infinity, which is tractable under a Polyak-Łojasiewicz condition. Proofs are given in the appendix.

\subsection{Clipped gradient flow and finite training time}
\label{sec:finite-training-time}

We first consider the case where the neural network is trained with clipped gradient flow~\eqref{eq:clipped-gf} on some training time interval $[0, T]$, $T > 0$. This allows us to prove large-depth convergence to a neural ODE without further assumptions. We emphasize that stopping training after a finite training time is a common technique in practice, referred to as early stopping \citep[][Section 7.8]{goodfellow2016deep}. It is considered as a form of implicit regularization, and our result sheds light on this intuition by showing that the complexity of the trained networks %
increases with $T$.

The following proposition is a key step in proving the main theorem of this section.
\begin{proposition}
\label{prop:final-finitetrainingtimekey}
There exist $M, K > 0$ such that, for any $t\in[0, T]$, $L \in \NN^*$, and $k \in \{1, \dots, L\}$,
\[
\max \big(\norm{A^L(t)}_F, \norm{V_k^L(t)}_F, \norm{W_k^L(t)}_F, \norm{B^L(t)}_F \big) \leq M,
\]
and, for $k \in \{1, \dots, L-1\}$,
\[
\max \big(\norm{V_{k+1}^L(t)-V_k^L(t)}_F, \norm{W_{k+1}^L(t)-W_k^L(t)}_F\big) \leq \frac{K}{L}.
\]
Moreover, with probability at least $1 - \exp\big(-\frac{3qm}{16}\big)$, the following expressions hold for $M$ and $K$:
\begin{equation}    \label{eq:formula-M-K}
M = T M_\pi + 2 \sqrt{qm}, \quad K = \beta T e^{\alpha T},    
\end{equation}
where $M_\pi$ is the supremum of $\pi$ in Frobenius norm, and $\alpha$ and $\beta$ depend on $\cX$, $\cY$, $M$, and $\sigma$.
\end{proposition}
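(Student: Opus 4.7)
The plan is to split the statement into two parts: the uniform norm bound $M$ follows directly from the clipping together with Gaussian concentration at initialization, while the layer-Lipschitz estimate $K/L$ is the substantive part and requires a Gronwall argument on the coupled differences of consecutive weight matrices.

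For the first bound, since $\pi$ is bounded by $M_\pi$ in Frobenius norm, integrating \eqref{eq:clipped-gf} from $0$ to $t \leq T$ gives $\|V_k^L(t)\|_F \leq \|V_k^L(0)\|_F + t M_\pi$, and analogously for $A^L$, $W_k^L$ and $B^L$. The initialization sets $V_k^L(0) = 0$ and makes $\|A^L(0)\|_F, \|B^L(0)\|_F$ deterministic constants depending only on $d, d'$. The only stochastic contribution comes from $W_k^L(0) = W$, for which $\|W\|_F^2 \sim \chi^2_{qm}$. A Laurent--Massart tail bound yields $\|W\|_F \leq 2\sqrt{qm}$ with probability at least $1 - \exp(-3qm/16)$, and taking $M = TM_\pi + 2\sqrt{qm}$ (which dominates the deterministic projection norms, using $q \geq \max(d, d')$) gives the first claim.

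For the layer-Lipschitz estimate, set $\Delta V_k^L = V_{k+1}^L - V_k^L$ and $\Delta W_k^L = W_{k+1}^L - W_k^L$, both vanishing at $t=0$ by the zero and weight-tied initializations. Since $\pi$ is Lipschitz with some constant $C_\pi$,
\[
\Big\|\frac{d(\Delta V_k^L)}{dt}\Big\|_F \leq C_\pi L \Big\|\frac{\partial \ell^L}{\partial V_{k+1}^L} - \frac{\partial \ell^L}{\partial V_k^L}\Big\|_F,
\]
and similarly for $\Delta W_k^L$. Writing the gradient via the chain rule, one gets (up to numerical factors in $m,q,n$) that $L\,\partial \ell^L/\partial V_k^L \propto \delta_k^L \,\sigma(W_k^L h_{k-1}^L/\sqrt q)^{\top}$, where $\delta_k^L = \partial \ell^L/\partial h_k^L$ is the backward adjoint. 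Two ingredients are then needed. First, a forward induction exploiting the $1/L$ scaling in the residual branch together with the uniform bound $M$ yields that $\|h_k^L\|$ is bounded uniformly in $k, L$ and that $\|h_{k+1}^L - h_k^L\| = O(1/L)$. Second, the adjoint obeys the backward recursion $\delta_{k-1}^L = (I + L^{-1} J_k^L)^{\top} \delta_k^L$ for a bounded Jacobian $J_k^L$; unrolling this product (which stays $O(1)$ because $\prod_j(1 + L^{-1} c_j) \leq \exp(L^{-1}\sum_j c_j)$) shows that $\|\delta_k^L\|$ is uniformly bounded and $\|\delta_{k+1}^L - \delta_k^L\| = O(1/L)$.

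Combining these two ingredients with the Lipschitz continuity of $\sigma$ and expanding the difference of outer products produces
\[
L \Big\|\frac{\partial \ell^L}{\partial V_{k+1}^L} - \frac{\partial \ell^L}{\partial V_k^L}\Big\|_F \leq \frac{\alpha_0}{L} + \alpha_1\big(\|\Delta V_k^L\|_F + \|\Delta W_k^L\|_F\big),
\]
with an analogous inequality for the $W$ gradients and constants depending only on $\cX, \cY, M$, $\sigma$. Setting $\varphi(t) = \max_k(\|\Delta V_k^L(t)\|_F + \|\Delta W_k^L(t)\|_F)$ and integrating in $t$, Gronwall's lemma gives $\varphi(t) \leq (\beta/L)\,t\,e^{\alpha t}$, which is exactly the desired bound with $K = \beta T e^{\alpha T}$. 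The main obstacle in the plan is the backward analysis of the adjoint: because $\delta_k^L$ depends on all deeper weights, a naive estimate would accumulate $L$ terms of size $O(1)$. The crucial observation, which is precisely what the $1/L$ scaling of the residual branch is designed to enable, is that these accumulations telescope through the product of nearly-identity Jacobians and therefore remain bounded, allowing the Gronwall closure to produce constants independent of $L$.
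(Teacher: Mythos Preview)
Your proposal is correct and follows essentially the same route as the paper: bound $M$ by integrating the clipped flow and invoking chi-squared concentration for $\|W\|_F$, then derive the differential inequality $\|\frac{d}{dt}(Z_{k+1}^L-Z_k^L)\|\le \alpha\|Z_{k+1}^L-Z_k^L\|+\beta/L$ via uniform forward/backward bounds on $h_k^L$ and the adjoint $p_k^L$ (both of size $O(1)$ with consecutive differences $O(1/L)$, exactly as you describe), and close with Gr\"onwall. The only cosmetic difference is that the paper applies Gr\"onwall to each $k$ separately rather than to the maximum $\varphi(t)$, which sidesteps the non-differentiability of $\max_k$; your version works just as well in integral form.
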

This proposition ensures that the size of the weights and the difference between successive weights remain bounded throughout training.
We can now state the main result, which states the convergence, for any training time in $[0, T]$, of the neural network to a neural ODE as $L \to \infty$. Recall that a sequence of functions $f^L$ converges uniformly over $u \in U$ to $f$ if $\sup_{u \in U}\|f^L(u) - f(u)\| \to 0$.
\begin{theorem} \label{thm:final-finitetrainingtimeconv}
Consider the residual network \eqref{eq:model-resnet} with the training dynamics \eqref{eq:clipped-gf}. Then the following statements hold \textbf{as $L$ tends to infinity}: 
\begin{enumerate}[(i)]
    \item There exist functions $A: [0, T] \to \R^{q \times d}$ and $B: [0, T] \to \R^{d' \times q}$ such that $A^L(t)$ and $B^L(t)$ converge uniformly over $t \in [0, T]$ to $A(t)$ and $B(t)$.
    \item There exists a Lipschitz continuous function $\cZ: [0,1] \x [0, T] \to \R^{q \times m} \times \RR^{m \times q}$ such that
        \begin{equation}    \label{eq:def-Z}
            \mathcal{Z}^L:[0,1] \x [0, T]\to\RR^{q \times m} \times \RR^{m \times q},\ (s, t)\mapsto \mathcal{Z}^L(s, t) = Z_{\floor{(L-1)s} + 1}^L(t)
        \end{equation}
    converges uniformly over $s \in [0, 1]$ and $t \in [0, T]$ to $\mathcal{Z} := (\cV, \cW)$.
    \item Uniformly over $s \in [0, 1]$, $t \in [0, T]$, and $x \in \cX$, the hidden layer $h_{\floor{Ls}}^L(t)$ converges to the solution at time $s$ of the neural ODE
\begin{align}  \label{eq:main-thm-1}
    \begin{split}
        H(0, t) &= A(t)x \\
        \frac{\partial H}{\partial s}(s, t) &= \frac{1}{\sqrt{m}}\cV(s, t) \sigma\Big(\frac{1}{\sqrt{q}}\cW(s, t) H(s, t)\Big), \quad s \in [0, 1].
    \end{split}
\end{align} 
    \item Uniformly over $t \in [0, T]$ and $x \in \cX$, the output 
    $F^L(x ; t)$ converges to $B(t) H(1, t)$.
\end{enumerate}
\end{theorem}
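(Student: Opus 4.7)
The plan is to proceed in three stages: compactness, passage to the limit in the forward pass, and uniqueness of the limit. I begin with compactness. From Proposition \ref{prop:final-finitetrainingtimekey}, the weights are uniformly bounded by $M$, and consecutive weights satisfy $\|V_{k+1}^L(t) - V_k^L(t)\|_F \leq K/L$ (and similarly for $W$). Since the clipping operator $\pi$ has bounded supremum $M_\pi$, each trajectory $A^L, B^L, V_k^L, W_k^L$ is $M_\pi$-Lipschitz in $t$, with constant independent of $L$ and $k$. Consequently $(A^L)_L$ and $(B^L)_L$ are bounded equicontinuous families on $[0,T]$, and the interpolant $\mathcal{Z}^L$ from \eqref{eq:def-Z} satisfies
\[
\|\mathcal{Z}^L(s_1, t_1) - \mathcal{Z}^L(s_2, t_2)\|_F \leq K\bigl(|s_1 - s_2| + 1/L\bigr) + M_\pi |t_1 - t_2|.
\]
Replacing $\mathcal{Z}^L$ by its piecewise linear interpolant in $s$ to absorb the $K/L$ gap, Arzelà-Ascoli extracts subsequences converging uniformly to Lipschitz limits $(\cV, \cW, A, B)$ on $[0,1]\times[0,T]$ and $[0,T]$ respectively.

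Given such a limit, I would prove (iii) and (iv) next. For each fixed $t$, the discrete weights $V_k^L(t), W_k^L(t)$ are close (uniformly in $k$) to the discretizations $\cV(k/L, t), \cW(k/L, t)$ of the Lipschitz limits. A discrete Grönwall argument---essentially a uniform-in-$t$ strengthening of Proposition \ref{prop:neural-ode-simple}, with an extra term accounting for the $\|V_k^L(t) - \cV(k/L, t)\|_F \to 0$ perturbation and analogously for $W$ and $A$---yields convergence of $h_{\floor{Ls}}^L(t)$ to the solution $H(s, t)$ of \eqref{eq:main-thm-1}, uniformly in $(s, t, x)$. Part (iv) is then immediate from (i), (iii), and continuity of the output map $h \mapsto Bh$ together with uniform boundedness of $B^L(t)$ and $H(1,t)$.

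The main obstacle is upgrading subsequential convergence to convergence of the whole sequence, which requires showing that the limit $(\cV, \cW, A, B)$ is unique. The natural strategy is to pass to the limit in \eqref{eq:clipped-gf}. Backpropagation through the neural ODE expresses the renormalized gradients $L\partial_{V_k^L}\ell^L$ and $L\partial_{W_k^L}\ell^L$, in the limit, in terms of the forward state $H(\cdot, t)$ and of an adjoint state $P(\cdot, t)$ satisfying a linear backward ODE along $H$. This yields a closed evolution equation for $(\cV(\cdot, t), \cW(\cdot, t), A(t), B(t))$ posed in the Banach space of continuous weight functions, whose right-hand side is Lipschitz on bounded sets because $\pi$ is Lipschitz and the map from parameters to forward and adjoint states is smooth, with constants controlled by the uniform bounds of Proposition \ref{prop:final-finitetrainingtimekey}. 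Picard-Lindelöf in this Banach space then gives uniqueness from the common initial condition, so every subsequential limit coincides with the same trajectory, and full-sequence uniform convergence follows, completing (i)--(iv).
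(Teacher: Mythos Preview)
Your proposal is correct and follows essentially the same route as the paper: Arzelà--Ascoli compactness from Proposition~\ref{prop:final-finitetrainingtimekey} and the boundedness of $\pi$, Euler--Grönwall convergence of the forward and adjoint states along the extracted subsequence, and then uniqueness of the limit via Picard--Lindelöf for the induced evolution of $(\mathcal{Z}(\cdot,t),A(t),B(t))$ in the Banach space of bounded weight functions. The only cosmetic difference is that the paper handles the piecewise-constant interpolant $\mathcal{Z}^L$ directly (via a generalized Arzelà--Ascoli that carries the $K/L$ correction, Proposition~\ref{prop:arzela-ascoli}) rather than passing through a piecewise-linear proxy, and it makes explicit the intermediate step of proving convergence of the discrete backward states $p_k^L$ to the adjoint $P$ by reapplying the Euler-consistency corollary to the backpropagation recursion.
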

Let us sketch the proof of statement $(ii)$, which is the cornerstone of the theorem. A first key idea is to introduce in \eqref{eq:def-Z} the piecewise-constant continuous-depth interpolation $\cZ^L$ of the weights, whose ambient space does not depend on $L$, in contrast to the discrete weight sequence $Z_k^L$. Since the weights remain bounded during training by Proposition \ref{prop:final-finitetrainingtimekey}, the Arzelà-Ascoli theorem guarantees the existence of an accumulation point for $\cZ^L$.
We show that the accumulation point is unique because it is the solution of an ODE satisfying the conditions of the Picard-Lindelöf theorem. The uniqueness of the accumulation point then implies the existence of a limit for the weights. 

There are two notable byproducts of our proof. The first one is an explicit description of the training dynamics of the limiting weights $A$, $B$, and $\cZ$, as the solution of an ODE system, as presented in Appendix \ref{apx:general-convergence}. The second one, which we now describe, consists of norm bounds on the weights.
Proposition \ref{prop:final-finitetrainingtimekey} bounds the discrete weights and the difference between two consecutive weights respectively by some $M, K >0$. The proof of Theorem~\ref{thm:final-finitetrainingtimeconv} shows that this bound carries over to the continuous weights, in the sense that $A(t)$, $\cV(s, t)$, $\cW(s, t)$, and $B(t)$ are uniformly bounded by $M$, and $\cV(\cdot, t)$ and $\cW(\cdot, t)$ are uniformly Lipschitz continuous with Lipschitz constant $K$. 
Formally, this last property means that, for any $s, s' \in [0, 1]$ and $t \in [0, T]$,
\[
\|\cV(s',t) - \cV(s,t)\|_F \leq K |s'-s| \quad \text{and} \quad \|\cW(s',t) - \cW(s,t)\|_F \leq K |s'-s|.
\]
A key point to obtain this result is that $K$ and $M$ in Proposition \ref{prop:final-finitetrainingtimekey} are independent of $L$. This would not be the case if we had naively bounded in Proposition \ref{prop:final-finitetrainingtimekey} the difference between two successive weight matrices by a constant, without taking into account the smoothness of the weights.
The boundedness and Lipschitz continuity of the weights are important features because they limit the statistical complexity of neural ODEs \citep{marion2023generalization}. More generally, norm-based bounds are a common approach in the statistical theory of deep learning \citep[see, e.g.,][and references therein]{bartlett2017spectrally}. Looking at the formula~\eqref{eq:formula-M-K} for $M$ and $K$, one can see in particular that the bounds diverge exponentially with $T$, providing an argument in favor of early stopping.%

Our approach so far characterizes the large-depth limit of the neural network for a finite training time $T$, but two questions remain open. A first challenge is to characterize the value of the loss after training. A second one is to provide insight into the convergence of the optimization algorithm in the long-time limit, i.e., as $T$ tends to infinity. To answer these questions, we move to the setting where the width of the network is large enough, which allows us to prove a Polyak-Łojasiewicz (PL) condition and thereby the long-time convergence of the training loss to zero.
\subsection{Convergence in the long-time limit for wide networks}
\label{subsec:long-training}
We now introduce the definition (with the notation $Z^L = (V_k^L, W_k^L)_{k \in \{1, \dots, L\}}$) of the PL condition:
\begin{definition}  \label{def:pl-condition}
For $M, \mu > 0$, the residual network \eqref{eq:model-resnet} is said to satisfy the $(M, \mu)$-local PL condition around a set of parameters $(\Bar{A}^L, \bar{Z}^L, \bar{B}^L)$ if, for every set of parameters $(A^L, Z^L, B^L)$ such that 
\[
\|A^L-\bar{A}^L\|_F \leqslant M, \quad \sup_{k \in \{1, \dots, L\}} \|Z_k^L-\bar{Z}_k^L\|_F \leqslant M, \quad \|B^L-\bar{B}^L\|_F \leqslant M,
\]
one has
\[
\Big\|\frac{\partial \ell^L}{\partial A^L}\Big\|_F^2 + L\sum_{k=1}^L \Big\|\frac{\partial \ell^L}{\partial Z_k^L}\Big\|_F^2 + \Big\|\frac{\partial \ell^L}{\partial B^L}\Big\|_F^2 \geq \mu \ell^L,
\]
where the loss $\ell^L$ is evaluated at the set of parameters $(A^L, Z^L, B^L)$.
\end{definition}
The next important point is to observe that, under the setup of Section \ref{sec:definitions} and some additional assumptions, the residual network satisfies the local PL condition of Definition \ref{def:pl-condition}. 
\begin{proposition}     \label{prop:pl-holds}
Assume that the sample points $(x_i, y_i)$ are i.i.d.~such that $\|x_i\|_2=\sqrt{q}$. 
Then there exist $c_1, \hdots, c_4 > 0$ (depending only on $\sigma$) and $\delta > 0$ such that, if 
\[
q \geq d + d', \quad m \geq c_1 n, \quad L \geq c_2 \sqrt{nq},
\]
then, with probability at least $1-\delta$, the residual network \eqref{eq:model-resnet} satisfies the $(M, \mu)$-local PL condition around its initialization, with
$M = c_3/\sqrt{n q}$ and $\displaystyle \mu =c_4/ (n\sqrt{n} q)$.
\end{proposition}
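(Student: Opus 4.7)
The plan is to establish the PL inequality by considering only the gradient with respect to the residual matrices $V_k^L$; the other terms being non-negative, this suffices. Computing $\partial\ell^L/\partial V_k^L$ by the chain rule through~\eqref{eq:model-resnet}, one finds that at initialization ($V_k^L = 0$) all layer-to-output Jacobians collapse to $(B^L)^{\top}$ and every hidden state coincides with $h_0^L = A^L x_i$. The gradient therefore reduces to $\frac{2}{nL\sqrt{m}} (B^L)^{\top}\Xi U^{\top}$ independently of $k$, where $\Xi = [F^L(x_i) - y_i]_i \in \R^{d'\times n}$ is the residual matrix and $U = [\sigma(W A^L x_i/\sqrt{q})]_i \in \R^{m\times n}$ is the random-feature matrix. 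Summing the $L$ contributions, using that $B^L(B^L)^{\top} = I_{d'}$, and invoking $\|\Xi U^{\top}\|_F^2 \geq \lambda_{\min}(U^{\top}U)\|\Xi\|_F^2$ yields
\[
L\sum_{k=1}^L \Big\|\frac{\partial\ell^L}{\partial V_k^L}\Big\|_F^2 \;\geq\; \frac{4\,\lambda_{\min}(U^{\top}U)}{n\,m}\,\ell^L.
\]

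Next I would prove that $\lambda_{\min}(U^{\top}U) \geq c\,m/(\sqrt{n}\,q)$ with probability at least $1-\delta$, which yields the announced $\mu = c_4/(n\sqrt{n}\,q)$. The matrix $U^{\top}U/m$ is an average of $m$ i.i.d.~rank-one matrices in the rows of $W$, and concentration (Hanson-Wright or matrix Bernstein) bounds its deviation from the deterministic kernel $K_{ij} = \Esp[\sigma(\langle w, x_i\rangle/\sqrt{q})\sigma(\langle w, x_j\rangle/\sqrt{q})]$ at scale $\sqrt{n/m}$, so the condition $m \geq c_1 n$ keeps this small. The entries of $K$ depend only on the normalized inner products $\rho_{ij} = \langle x_i, x_j\rangle/q$, which concentrate around $0$ at scale $1/\sqrt{q}$ for i.i.d.~data on the sphere of radius $\sqrt{q}$; a Hermite-expansion argument in the spirit of \citet{nguyen2020global} then gives $\lambda_{\min}(K) \gtrsim 1/(\sqrt{n}\,q)$.

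Finally, I would transfer the bound from initialization to the full Frobenius ball of radius $M = c_3/\sqrt{nq}$. Perturbations of $V_j^L$, $W_j^L$, $A^L$, $B^L$ of norm at most $M$ shift each layer Jacobian by only $O(M/L)$ via the $1/L$ prefactor in~\eqref{eq:model-resnet}, so the telescoping product stays within $O(M)$ of $(B^L)^{\top}$ when $L \geq c_2\sqrt{nq}$. The features $u_i$ and residuals $\xi_i$ drift by $O(M)$ as well, using the Lipschitzness of $\sigma$ and the initial operator-norm bound $\|W\|_{\mathrm{op}} = O(\sqrt{m} + \sqrt{q})$ from standard Gaussian matrix theory. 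For $c_3$ small enough the perturbed gradient sum remains within a constant factor of its initial value, and the PL inequality of Definition~\ref{def:pl-condition} holds throughout the neighborhood.

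The main obstacle is the spectral lower bound on $U^{\top}U$. A crude estimate $\lambda_{\min}(K) \geq K_{ii} - \|K - K_{ii} I\|_{\mathrm{op}}$ would require $\|K - K_{ii}I\|_{\mathrm{op}} \ll 1$, but the naive bound $\|K - K_{ii}I\|_{\mathrm{op}} = O(n/\sqrt{q})$ obtained from the $O(1/\sqrt{q})$ off-diagonal entries is only useful when $q \gg n^2$. Obtaining the weaker $1/(\sqrt{n}\,q)$ rate under only $m \gtrsim n$ demands the finer Hermite decomposition and rank estimates from \citet{nguyen2020global}, which exploit cancellations specific to kernel matrices built from nearly-orthogonal data.
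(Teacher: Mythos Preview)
Your strategy matches the paper's: lower-bound the PL quantity through the $V$-gradients alone and reduce to a singular-value bound on the random-feature matrix $\sigma(\bar W\bar A\mathbf x/\sqrt q)$, obtained via the Hermite expansion of \citet{nguyen2020global}. The accounting, however, is off in two compensating ways, and this mislocates the role of the hypothesis $L\ge c_2\sqrt{nq}$.

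First, your claim that the Hermite argument yields only $\lambda_{\min}(K)\gtrsim 1/(\sqrt n\,q)$ is incorrect. What \citet[][Lemma~3.4]{nguyen2020global} gives (restated in the paper as Lemma~\ref{prop:proof-global-conv-2}) is $s_{\min}\big(\sigma(\bar W\bar A\mathbf x/\sqrt q)\big)\ge\tfrac14\sqrt m\,\eta_r(\sigma)$ with high probability, a \emph{dimension-free} bound depending only on $\sigma$; equivalently $\lambda_{\min}(K)\gtrsim\eta_r(\sigma)^2$. The factor $1/\sqrt{nq}$ in $\mu$ does not come from the kernel.

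Second, the paper does not establish that all $L$ layers contribute. Rather than computing at initialization and then perturbing, it works directly with parameters in the $M$-ball and bounds the feature drift $\|\sigma(W_{k+1}\mathbf h_k/\sqrt q)-\sigma(\bar W\bar A\mathbf x/\sqrt q)\|_F$. The dominant term is $\tfrac{K_\sigma}{\sqrt q}\|\bar W\|_F\|\mathbf h_k-A\mathbf x\|_F$, which---using the crude bounds $\|\bar W\|_F\lesssim\sqrt{mq}$, $\|V_j\|_F\le 1$, and hence $\|\mathbf h_k-A\mathbf x\|_F\lesssim k\sqrt{qn}/L$---grows like $\sqrt{mnq}\,k/L$. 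This keeps $s_{\min}$ above $\tfrac18\sqrt m\,\eta_r(\sigma)$ only for $k\le L\eta_r(\sigma)/(C_2\sqrt{nq})$; summing those $\Theta(L/\sqrt{nq})$ layers, each contributing $\Theta(\ell/(L^2 n))$ to $L\|\partial\ell/\partial V_{k+1}\|_F^2$, yields $\mu\asymp 1/(n\sqrt{nq})$. The condition $L\ge c_2\sqrt{nq}$ is precisely what guarantees at least one such layer---it is not a Jacobian-stability requirement as you suggest.

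If you push your perturbative route through carefully with the operator-norm bound $\|\bar W\|_{\mathrm{op}}=O(\sqrt m)$ you mention, together with $\|V_j\|_F\le M$ rather than $\le 1$, the drift stays uniformly small and \emph{all} layers remain good; combined with the correct $\lambda_{\min}(U^\top U)\gtrsim m$ this would give $\mu\gtrsim 1/n$, stronger than stated but not the paper's argument.
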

We emphasize that Proposition \ref{prop:pl-holds} requires the width $m$ to scale only linearly with the sample size~$n$, which improves on the literature (see Section \ref{sec:related}). The other assumptions are mild. Note that our proof shows that the parameter $\delta$ is small if $n$ grows at most polynomially with $d$ (see Appendix~\ref{apx:proof-pl}).

We are now ready to state convergence in the long-time and large-depth limits to a global minimum of the empirical risk, when the local PL condition holds and the norm of the targets $y_i$ is small enough. 
\begin{theorem}     \label{thm:pl-main}
Consider the residual network \eqref{eq:model-resnet} with the training dynamics \eqref{eq:gf}, and assume that the assumptions of Proposition \ref{prop:pl-holds} hold.
Then there exist $C, \delta > 0$ such that, if $\frac{1}{n} \sum_{i=1}^n \|y_i\|^2 \leq C$, then, with probability as least $1-\delta$, the gradient flow is well defined on $\R_+$, and, for $t \in \R_+$ and $L \in \NN^*$,
\begin{equation}    \label{eq:thm-linear-conv}
  \ell^L(t) \leq \exp \big(- \frac{C' t}{n\sqrt{n} q}\big) \ell^L(0),     
\end{equation}
for some $C'>0$ depending on $\sigma$. Moreover, the following statements hold \textbf{as} $t$ \textbf{and} $L$ \textbf{tend to infinity}: 
\begin{enumerate}[(i)]
    \item There exist matrices $A_\infty \in \R^{q \times d}$ and $B_\infty \in \R^{d' \times q}$ such that $A^L(t)$ and $B^L(t)$ converge to $A_\infty$ and $B_\infty$.
    \item There exists a Lipschitz continuous function $\cZ_\infty: [0,1] \to \R^{q \times m} \times \RR^{m \times q}$ such that
        \begin{equation*}  
            \mathcal{Z}^L:[0,1] \x \R_+ \to\RR^{q \times m} \times \RR^{m \times q},\ (s, t)\mapsto \mathcal{Z}^L(s, t) = Z_{\floor{(L-1)s} + 1}^L(t)
        \end{equation*}
    converges uniformly over $s \in [0, 1]$ to $\mathcal{Z}_\infty := (\cV_\infty, \cW_\infty)$. 
    \item Uniformly over $s \in [0, 1]$ and $x \in \cX$, the hidden layer $h_{\floor{Ls}}^L(t)$ converges to the solution at time $s$ of the neural ODE
\begin{align*} %
    \begin{split}
        H(0) &= A_\infty x \\
        \frac{d H}{d s}(s) &= \frac{1}{\sqrt{m}} \cV_\infty(s) \sigma\Big(\frac{1}{\sqrt{q}}\cW_\infty(s) H(s)\Big), \quad s \in [0, 1].
    \end{split}
\end{align*} 
    \item Uniformly over $x \in \cX$, the output $F^L(x ; t)$ converges to $F_\infty(x) = B_\infty H(1)$. Furthermore, $F_\infty(x_i)=y_i$ for all $i \in \{1, \dots, n\}$.
\end{enumerate}
\end{theorem}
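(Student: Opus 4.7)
The plan is to bootstrap the local PL condition (Proposition~\ref{prop:pl-holds}) into a global exponential decay of the loss on $\R_+$, and then to combine this with the finite-time large-depth limit of Theorem~\ref{thm:final-finitetrainingtimeconv} through an exchange-of-limits argument. Throughout, the probability $1-\delta$ event is the one on which Proposition~\ref{prop:pl-holds} provides the $(M,\mu)$-local PL inequality around the initialization. Because $q \geq d + d'$ and $A^L(0), B^L(0)$ are the canonical block injections, one checks that $B^L(0) A^L(0) = 0$ and $V_k^L(0) = 0$, hence $F^L(x_i;0) = 0$ and $\ell^L(0) = \frac{1}{n}\sum_i \|y_i\|^2$. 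Let $\tau^L$ be the first exit time of the trajectory from the $M$-neighbourhood of initialization on which PL holds. On $[0, \tau^L)$, PL gives $\dot{\ell}^L \leq -\mu \ell^L$, which is the claimed decay (\ref{eq:thm-linear-conv}) with $C' = c_4$. Differentiating $\sqrt{\ell^L}$ and using PL produces the standard estimate $\int_0^{\tau^L}\sqrt{-\dot{\ell}^L}\,dt \leq 2\sqrt{\ell^L(0)/\mu}$, which bounds $\|A^L(t) - A^L(0)\|$ and $\|B^L(t) - B^L(0)\|$ by $2\sqrt{\ell^L(0)/\mu}$, and each $\|Z_k^L(t) - Z_k^L(0)\|$ by $2\sqrt{L\ell^L(0)/\mu}$ (the extra $\sqrt L$ comes from the scaling $dZ_k/dt = -L\,\partial_{Z_k}\ell^L$). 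Choosing $C$ small enough forces all these displacements below $M$, so $\tau^L = \infty$, the exponential decay is global, the gradient stays bounded along the trajectory, and Picard-Lindelöf gives global well-posedness of (\ref{eq:gf}) without clipping.

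For the long-time limits, $\int_0^\infty \|\partial_A\ell^L\|\,dt < \infty$ yields $A^L(t) \to A^L_\infty$, and similarly $B^L_\infty$. For the continuous-depth weights $\mathcal{Z}^L$ of (\ref{eq:def-Z}), the key identity
\[
\int_0^1 \|\partial_t \mathcal{Z}^L(s, t)\|_F^2 \, ds = L \sum_{k=1}^L \|\partial_{Z_k}\ell^L\|_F^2 \leq -\tfrac{d\ell^L}{dt}
\]
shows that the depth-$L^2$ norm of $\partial_t \mathcal{Z}^L$ admits an $L$-independent bound, giving
\[
\|\mathcal{Z}^L(\cdot, t_2) - \mathcal{Z}^L(\cdot, t_1)\|_{L^2([0,1])}^2 \leq \ell^L(t_1) \leq \ell^L(0) e^{-\mu t_1}.
\]
For each fixed $T$, Theorem~\ref{thm:final-finitetrainingtimeconv} (whose weight bounds transfer to the PL setting by using the previous step in place of Proposition~\ref{prop:final-finitetrainingtimekey}) gives uniform convergence of $A^L, B^L, \mathcal{Z}^L$ to limits $A, B, \mathcal{Z}$ as $L \to \infty$, with $\mathcal{Z}(\cdot, T)$ Lipschitz in $s$. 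Passing the above $L$-uniform Cauchy estimates through the $L$-limit shows that $A(T), B(T), \mathcal{Z}(\cdot, T)$ are themselves Cauchy as $T \to \infty$, with $\mathcal{Z}$-convergence in $L^2([0,1])$; combined with a $T$-uniform depth-Lipschitz bound on $\mathcal{Z}(\cdot, T)$ obtained by revisiting Proposition~\ref{prop:final-finitetrainingtimekey} under the exponential decay of $\ell^L$, Arzelà-Ascoli upgrades this $L^2$-convergence to uniform-in-$s$ convergence, defining $A_\infty, B_\infty, \mathcal{Z}_\infty = (\mathcal{V}_\infty, \mathcal{W}_\infty)$. A triangle-inequality decomposition $\|A^L(t) - A_\infty\| \leq \|A^L(t) - A(t)\| + \|A(t) - A_\infty\|$ (and analogously for $B$ and $\mathcal{Z}$) then yields joint convergence in $t$ and $L$, proving (i) and (ii).

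Claims (iii) and (iv) follow from standard continuous dependence of the neural ODE (\ref{eq:model-neuralode}) on its coefficient functions and initial matrix, applied to the convergences just established; the interpolation $F_\infty(x_i) = y_i$ then follows from $\ell^L(t) \to 0$ passed through the continuity of $F^L$ in the parameters. The main technical obstacle is twofold. First, the pointwise-in-depth estimate $\|dZ_k^L/dt\| \leq \sqrt{L}\,\sqrt{-d\ell^L/dt}$ carries an $L$-dependent $\sqrt{L}$ factor; the fix is to switch to the depth-$L^2$ norm, which matches the natural continuous parameterization of the ODE and admits an $L$-free bound. Second, the depth-Lipschitz constant $K = \beta T e^{\alpha T}$ of Proposition~\ref{prop:final-finitetrainingtimekey} diverges with $T$, so a sharper analysis exploiting the exponential decay of $\ell^L$ is needed to keep $K$ uniform in $T \in \R_+$; this uniform Lipschitz bound is what powers the final Arzelà-Ascoli upgrade from $L^2([0,1])$ to uniform-in-$s$ convergence.
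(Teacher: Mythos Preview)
Your bootstrap argument for $\tau^L = \infty$ has a genuine gap. You correctly obtain $\|Z_k^L(t) - Z_k^L(0)\| \leq 2\sqrt{L\,\ell^L(0)/\mu}$ from the energy identity, but then claim that ``choosing $C$ small enough forces all these displacements below $M$''. This is impossible with an $L$-independent $C$: plugging in $M = c_3/\sqrt{nq}$ and $\mu = c_4/(n\sqrt{n}\,q)$ gives the requirement $\ell^L(0) \lesssim 1/L$, which vanishes as $L\to\infty$. The PL neighbourhood in Definition~\ref{def:pl-condition} is phrased in terms of $\sup_k\|Z_k^L - \bar Z_k^L\|$, so your later ``fix'' of switching to the depth-$L^2$ norm does not help here; you need $L^\infty$-in-$k$ control to stay in the PL region.

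The paper closes this gap by abandoning the generic energy bound and using the explicit backpropagation structure (see Proposition~\ref{prop:plcondition-to-convergence}). From~\eqref{eq:proof-main-thm:backprop-1} one has
\[
\Big\|\frac{dZ_k^L}{dt}\Big\| \;=\; L\,\Big\|\frac{\partial \ell^L}{\partial Z_k^L}\Big\| \;\leq\; \frac{K}{n}\sum_{i=1}^n \|p_{k,i}^L\| \;\leq\; 2e^K M_B\,\sqrt{\ell^L(t)},
\]
where the factor $L$ is exactly cancelled by the $1/L$ in the gradient formula, and the last inequality uses the backward recursion to bound $\|p_{k,i}^L\|$ by $e^K\|p_{L,i}^L\| \leq 2e^K M_B\|F^L(x_i;t)-y_i\|$. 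Integrating against $\sqrt{\ell^L(t)}\leq e^{-\mu t/2}\sqrt{\ell^L(0)}$ then gives an $L$-independent displacement bound $\lesssim \sqrt{\ell^L(0)}/\mu$, which is what translates into the condition $\frac{1}{n}\sum_i\|y_i\|^2 \leq C$. The same structural bound also yields directly the $L$-uniform control of $\|dZ_k^L/dt\|$ by an integrable function of $t$, giving uniform-in-$L$ convergence of $Z_k^L(t)$ as $t\to\infty$ without passing through a depth-$L^2$ norm, and a $T$-uniform depth-Lipschitz constant (your second ``obstacle'') via a Gr\"onwall argument with integrable coefficient. Once these $L$-free estimates are in hand, the double limit follows by combining the $t\to\infty$ convergence with Theorem~\ref{thm:general-convergence:v2} (the unclipped analogue of Theorem~\ref{thm:final-finitetrainingtimeconv}) and a standard exchange-of-limits lemma, with no need for the $L^2$-to-uniform upgrade you outline.
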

This theorem proves two important results of separate interest. On the one hand, equation \eqref{eq:thm-linear-conv} shows the long-time convergence of the gradient flow for deep residual networks under the linear overparameterization assumption $m \geq c_1 n$ of Proposition \ref{prop:pl-holds}. On the other hand, when both $t$ and $L$ tend to infinity, the network converges to a neural ODE that further interpolates the training data. Note that the order in which $t$ and $L$ tend to infinity does not matter by uniform convergence properties.

\subsection{Generalizations to other architectures and initialization}    \label{subsec:generalization}

To simplify the exposition, we have so far considered a particular residual architecture defined in \eqref{eq:model-resnet}. However, most of our results hold for a more general residual network of the form
\begin{equation}   \label{eq:general-resnet}
h_{k+1}^L = h_k^L+\frac{1}{L} f(h_k^L, Z_{k+1}^L), \quad k \in \{0, \dots, L-1\},
\end{equation}
where $f: \R^q \times \R^p \to \R^q$ is a $\mathcal{C}^2$ function such that $f(0, \cdot) \equiv 0$ and $f(\cdot, z)$ is uniformly Lipschitz for $z$ in any compact. All our results are shown in the appendix for this general model, except the PL condition of Proposition \ref{prop:pl-holds}, which we prove only for the specific setup of Section \ref{sec:definitions}. In particular, the conclusions of Theorem \ref{thm:final-finitetrainingtimeconv} hold for the general model~\eqref{eq:general-resnet}, as well as those of Theorem \ref{thm:pl-main} if the network satisfies a $(M, \mu)$-local PL condition with $\mu$ sufficiently large %
(see Appendix~\ref{sec:proofs-main-paper} for details).

Our network of interest \eqref{eq:model-resnet} is a special case of model \eqref{eq:general-resnet}, and other choices include convolutional layers (or any sparse version of \eqref{eq:model-resnet}) or a Lipschitz continuous version of Transformer \citep{kim2021lipschitz}. This latter case is particularly interesting in the light of the literature analyzing Transformer from a neural ODE point of view \citep{lu2019understanding,sander2022sinkformers, geshkovski2023emergence}. 

Moreover, the initialization assumption made in Section \ref{sec:definitions} can also be relaxed to include any so-called \textit{smooth} initialization of the weights \citep{marion2022scaling}. A smooth initialization corresponds to taking 
$V_k^L(0)$ and $W_k^L(0)$ as discretizations of some Lipschitz continuous functions $\cV_0: [0, 1] \to \R^{q \times m}$ and $\cW_0: [0, 1] \to \R^{m \times q}$, that is, for $k \in \{1, \dots, L\}$, $V_k^L(0) = \cV_0(\frac{k}{L})$ and $W_k^L(0) = \cW_0(\frac{k}{L})$.
A typical concrete example is to let the entries of $\cV_0$ and $\cW_0$ be independent Gaussian processes with expectation zero and squared exponential covariance $K(x,x') = \exp(-\frac{(x-x')^2}{2\ell^2})$, for some $\ell > 0$. As shown by Proposition \ref{prop:neural-ode-simple}, a smooth initialization means that the network discretizes a neural ODE.
\section{Numerical experiments}
\label{sec:experiments} 
We now present numerical experiments to validate our theoretical findings, using both synthetic and real-world data. Our code is available on \href{https://github.com/michaelsdr/implicit-regularization-resnets-nodes}{GitHub} (see Appendix \ref{apx:experiments} for details and additional plot). 
\subsection{Synthetic data}

We consider the residual network \eqref{eq:model-resnet} with the initialization scheme of Section \ref{sec:definitions}. 
The activation function is GELU \citep{hendrycks2016gaussian}, which is a smooth approximation of ReLU: $x \mapsto \max(x, 0)$. %
The sample points $(x_i, y_i)_{1\leq i\leq n}$ follow independent standard Gaussian distributions. 
The mean-squared error is minimized using full-batch gradient descent. The following experiments exemplify the large-depth ($t \in [0, T]$, $L \to \infty$) and long-time ($t \to \infty$, $L$ finite) limits.

\begin{figure}[ht]
\centering
\includegraphics[width=1\columnwidth]{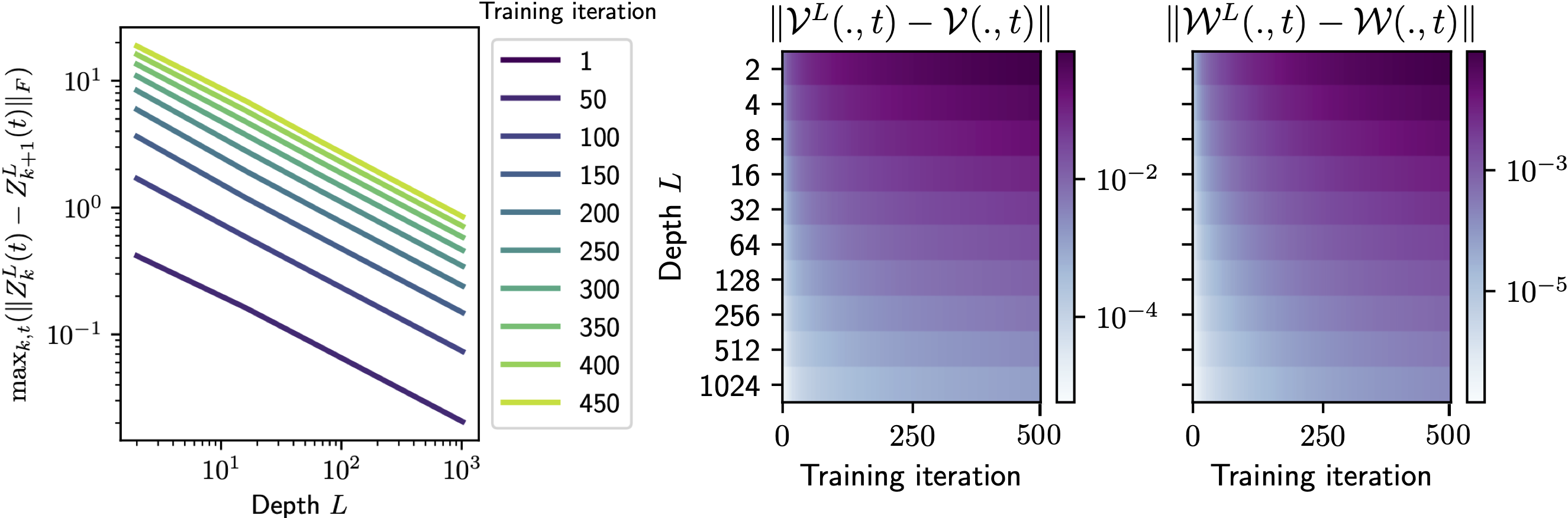} 
\caption{\textbf{Left}: $1/L$ convergence of the maximum distance between two successive weight matrices $\mathrm{max}_{1 \leq k \leq L,t \in [0,T]}(\|Z^L_k(t) - Z^L_{k+1}(t)\|_F)$. \textbf{Right}: uniform convergence of $\mathcal{Z}^L$ to its large-depth limit $\mathcal{Z}$. Here, for a matrix-valued function $f$, $\|f\|$ denotes $(\int_0^1 \|f(s)\|^2_F ds)^{1/2}$.}\label{fig:finite_training}
\end{figure}
\paragraph{Large-depth limit.}
We illustrate key insights of Proposition \ref{prop:final-finitetrainingtimekey} and Theorem \ref{thm:final-finitetrainingtimeconv}, with $T=500$. 
In Figure \ref{fig:finite_training} (left), we plot the maximum distance between two successive weight matrices, i.e., $\mathrm{max}_{1 \leq k \leq L,t \in [0,T]}(\|Z^L_k(t) - Z^L_{k+1}(t)\|_F)$, for different values of $L$ and training time $T$. We observe a $1/L$ convergence rate, as predicted by Proposition~\ref{prop:final-finitetrainingtimekey}.
Moreover, for a fixed $L$, the distance between two successive weight matrices increases with the training time, however at a much slower pace than the exponential upper bound on $K$ given in identity \eqref{eq:formula-M-K}. Figure~\ref{fig:finite_training} (right) depicts the uniform convergence of $\mathcal{Z}^L$ to its large-depth limit $\mathcal{Z}$, illustrating statement $(ii)$ of Theorem~\ref{thm:final-finitetrainingtimeconv}. The function $\cZ$ is computed using $\mathcal{Z}^L$ for $L = 2^{14}$. Note that the convergence is slower for larger training times. 

\paragraph{Long-time limit.}
We now turn to the long-time training setup, training for $80{,}000$ iterations with $L = 64$ and $m$ large enough to satisfy the assumptions of Theorem \ref{thm:pl-main}. 
In Figure \ref{fig:infinite_training}, we plot a specific (randomly-chosen) entry of matrices $V_k^L$ and $W_k^L$ across layers, for different training times. This illustrates Theorem \ref{thm:pl-main} in a practical setting since, visually, the weights behave as a Lipschitz continuous function for any training time and converge to a Lipschitz continuous function as $t \to \infty$.
The loss decays to zero as a function of training time, also corroborating Theorem~\ref{thm:pl-main}.
\begin{figure}[ht]
\centering
\includegraphics[width=1\columnwidth]{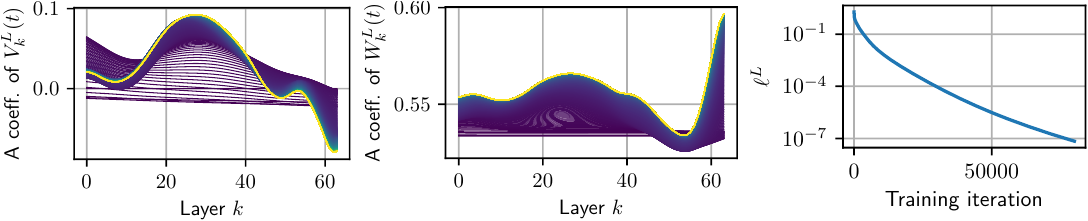} 
\caption{\textbf{Left}: Randomly-chosen entry of the weight matrices across layers ($x$-axis) for various training times $t$ (lighter color indicates higher training time). \textbf{Right}: Loss against training time.}\label{fig:infinite_training}
\end{figure}

\subsection{Real-world data}
We now investigate the properties of deep residual networks on the CIFAR 10 dataset \citep{Krizhevsky2009learningmultiple}. We deviate from the mathematical model \eqref{eq:model-resnet} by using convolutions instead of fully connected layers. More precisely, $A^L$ is replaced by a trainable convolutional layer, and the residual layers write
$h_{k+1}^L = h_k^L+\frac{1}{L} \mathrm{bn}^L_{2,k}(\mathrm{conv}^L_{2,k}(\sigma(\mathrm{bn}^L_{1,k}(\mathrm{conv}^L_{1,k}(h_k^L)))))$,
where $\mathrm{conv}^L_{i,k}$ are convolutions and $\mathrm{bn}^L_{i,k}$ are batch normalizations (see Appendix \ref{apx:experiments} for discussion about normalization). 
The output of the residual layers is mapped to logits through a linear layer~$B^L$.
We initialize $\mathrm{bn^L_{2,k}}$ to $0$, and $\mathrm{bn^L_{1,k}}$ and $\mathrm{conv^L_{i,k}}$ either to weight-tied or to i.i.d. Gaussian. 
Table~\ref{table:results} reports the accuracy of the trained network, and whether it has Lipschitz continuous (or smooth) weights after training, depending on the activation function $\sigma$ and on the initialization scheme. To assess the smoothness of the weights, we simply resort to visual inspection. For example, Figure \ref{fig:cifar}  (left) shows two random entries of the convolutions across layers with GELU and a weight-tied initialization: the smoothness is preserved after training. Smooth weights indicate that the residual network discretizes a neural ODE (see, e.g., Proposition \ref{prop:neural-ode-simple}). %
On the contrary, if an i.i.d.~initialization is used, smoothness is not preserved after training, as shown in  Figure \ref{fig:cifar}  (right), and the residual network does not discretize a neural ODE.
\begin{figure}[ht]%
\begin{subfigure}{0.49\textwidth}
        \includegraphics[width=1\textwidth]{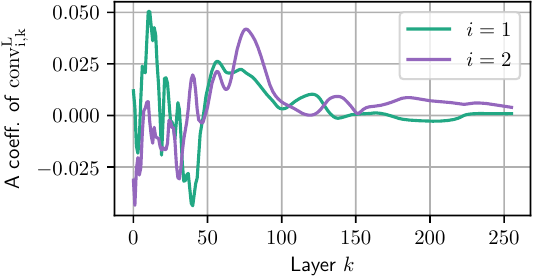}
    \end{subfigure}
    \hfill
    \begin{subfigure}{0.49\textwidth}
        \includegraphics[width=1\textwidth]{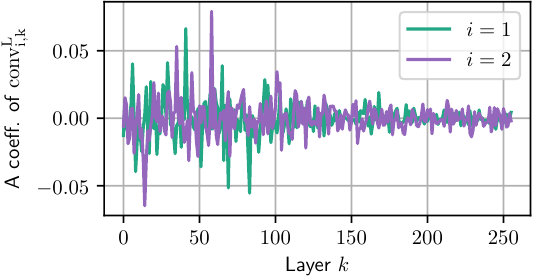}
    \end{subfigure}
\centering
\caption{Random entries of the convolutions across layers ($x$-axis) after training. \textbf{Left:} Weight-tied initialization leads to smooth weights. \textbf{Right:} i.i.d. initialization leads to non-smooth weights.}\label{fig:cifar}
\end{figure}
\begin{table}[ht]
    \centering
    \begin{tabular}{ccccc}
        \toprule
        Act. function &  Init. scheme  & Train Acc. & Test Acc. & Smooth trained weights \\
        \midrule
        \multirow{2}{*}{Identity} & Weight-tied & $56.5 \pm 0.1$ & $59.8 \pm 0.7$ & \checkmark \\
         & i.i.d. & $56.1 \pm 0.3$ & $59.6 \pm 0.7$ & $\bm{\times}$ \\
        \multirow{2}{*}{GELU} & Weight-tied & $80.5 \pm 0.7$ & $79.9 \pm 0.2$ & \checkmark \\
         & i.i.d. & $89.8 \pm 0.5$ & $85.7 \pm 0.1$ & $\bm{\times}$ \\
        \multirow{2}{*}{ReLU} & Weight-tied & $97.4 \pm 0.6$ & $88.1 \pm 0.1$ & $\bm{\times}$ \\
         & i.i.d. & $98.4 \pm 0.1$ & $88.4 \pm 0.5$ & $\bm{\times}$ \\
        \bottomrule
    \end{tabular}
    \caption{Accuracy and smoothness of the trained weights depending on the choice of activation function $\sigma$ and initialization scheme. We display the median over $5$ runs and the interquartile range between the first and third quantile. Smooth weights correspond to a neural ODE structure. }
    \label{table:results}
\end{table}

Table \ref{table:results} conveys several important messages. First, in accordance with our theory (Theorem \ref{thm:final-finitetrainingtimeconv}), we obtain a neural ODE structure when using a smooth activation function and weight-tied initialization (lines $1$ and $3$ of Table \ref{table:results}). This is not the case when using the non-smooth ReLU activation and/or i.i.d.~initialization. In fact, we prove in Appendix \ref{app:relu} that the smoothness of the weights is lost when training with ReLU in a simple setting. %
Furthermore, the third line of Table \ref{table:results} shows that it is possible to obtain a reasonable accuracy with a neural ODE structure, which, as emphasized in Section \ref{sec:intro}, also comes with theoretical and practical advantages.
Nevertheless, we see an improvement in accuracy in cases corresponding to non-smooth weights, i.e., to a network that does \textit{not} discretize an ODE. %

\section{Conclusion} \label{sec:conclusion}

We study the convergence of deep residual networks to neural ODEs. When properly scaled and initialized, residual networks trained with fixed-horizon gradient flow converge to neural ODEs as the depth tends to infinity. This result holds for very general architectures. In the case where both training time and depth tend to infinity, convergence holds under a local Polyak-Łojasiewicz condition. We prove such a condition for a family of deep residual networks with linear overparameterization.

The setting of neural ODE-like networks comes with strong guarantees, at the cost of some 
 performance gap when compared with i.i.d.~initialization as highlighted by the experimental section. Extending the mathematical large-depth study to i.i.d.~instead of weight-tied initialization is an interesting problem for future research. Previous work suggests that the correct limit object is then a \textit{stochastic} differential equation  \citep{cohen2021scaling,cont2022convergence,marion2022scaling}.

\subsubsection*{Acknowledgments} 
P.M.~is supported by a grant from Région Île-de-France and by MINES Paris - PSL. P.M.~and Y.-H.W.~are funded by a Google PhD Fellowship.
M.S.~is supported by the “Investissements d’avenir” program, reference ANR19-P3IA-0001, and by the European Research Council (ERC project NORIA).
This work was granted access to the HPC resources of IDRIS under the allocation 2020-[AD011012073] made by GENCI. Authors thank Ziad Kobeissi for a remark that led to correcting a small error in the paper.

\bibliographystyle{iclr2024_conference}
\bibliography{references}

\newpage
\appendix

\begin{center}
    \LARGE \bf Appendix
\end{center}
 
\paragraph{Organization of the Appendix.} 
In Section \ref{apx:proofs-general}, we give some results on the general residual network~\eqref{eq:general-resnet}. In Section \ref{sec:proofs-main-paper}, these results are instantiated in the specific case of the residual network~\eqref{eq:model-resnet}, thus proving the results of the paper. Section \ref{apx:lemmas} contains some lemmas that are useful for the proofs. We present in Section \ref{app:relu} a counter-example showing that a residual network with the ReLU activation can move away from the neural ODE structure during training. Finally, Section \ref{apx:experiments} presents some experimental details.

\section{Some results for general residual networks}   \label{apx:proofs-general}

\paragraph{Lipschitz continuity.} Let $(\mathscr{U}, \|\cdot\|)$, $(\mathscr{V}, \|\cdot\|)$, and $(\mathscr{W}, \|\cdot\|)$ be generic normed spaces. Then a function of two variables $g: \mathscr{U} \times \mathscr{V} \to \mathscr{W}$ is:
\begin{enumerate}[$(i)$]
    \item (Globally) Lipschitz continuous if 
there exists $K \geq 0$ such that, for $(u, v), (u', v') \in \mathscr{U} \times \mathscr{V}$,
\[
\|g(u, v) - g(u', v')\| \leq K \|u-u'\| + K \|v-v'\|.
\]
    \item Locally Lipschitz continuous in its first variable if, for any compacts $E \subset \mathscr{U}, E' \subset \mathscr{V}$, there exists $K \geq 0$ such that, for $(u, v), (u', v) \in E \times E'$,
\[
\|g(u, v) - g(u', v)\| \leq K \|u-u'\|.
\]
\end{enumerate}
Equivalent definitions hold for a function of one variable.
Moreover, $g(\cdot, v)$ is said to be uniformly Lipschitz continuous for $v$ in $\mathscr{V}$ if there exists $K \geq 0$ such that, for $(u, v), (u', v) \in \mathscr{U} \times \mathscr{V}$,
\[
\|g(u, v) - g(u', v)\| \leq K \|u-u'\|,
\]
and uniformly Lipschitz continuous for $v$ in any compact if, for any compact $E' \subset \mathscr{V}$, there exists $K \geq 0$ such that, for $(u, v), (u', v) \in \mathcal{U} \times E'$,
\[
\|g(u, v) - g(u', v)\| \leq K \|u-u'\|.
\]
Throughout, we refer to a Lipschitz continuous function with Lipschitz constant $K \geq 0$ as $K$-Lipschitz.

\paragraph{Model.} As explained in Section \ref{subsec:generalization}, most of our results are proven for the general residual network
\begin{align}
h_{0}^L(t) &= A^L(t) x \nonumber \\
h_{k+1}^L(t) &= h_k^L(t)+\frac1L f(h_k^L(t), Z_{k+1}^L(t)), \quad k \in \{0, \dots, L-1\}, \label{eq:proof-main-thm:forward} \\
F^L(x; t) &= B^L(t) h_L^L(t), \nonumber
\end{align}
where $Z^L(t) = (Z_1^L(t), \dots, Z_L^L(t)) \in (\R^{p})^L$ and $f: \R^q \times \R^p \to \R^q$ is a $\mathcal{C}^2$ function such that $f(0, \cdot) \equiv 0$ and $f(\cdot, z)$ is uniformly Lipschitz for $z$ in any compact. 
Let us introduce the backpropagation equations, which are instrumental in the study of the gradient flow dynamics. These equations define the backward state $p_k^L(t) = \frac{\partial \ell^L}{\partial h_k^L}(t) \in \R^q$ through the backward recurrence
\begin{align}
p_L^L(t) &= 2 B^L(t)^\top (F^L(x;t) - y) \nonumber \\
p_k^L(t) &= p_{k+1}^L(t) + \frac{1}{L} \partial_1 f(h_k^L(t), Z_{k+1}^L(t)) p_{k+1}^L(t), \quad k \in \{0, \dots, L-1\}, \label{eq:proof-main-thm:backprop}
\end{align}
where $\partial_1 f \in \R^{q \times q}$ stands for the Jacobian matrix of $f$ with respect to its first argument. Similarly, we let $\partial_2 f \in \R^{q \times p}$ be the Jacobian matrix of $f$ with respect to its second argument.
For a sample $(x_i, y_i)_{1 \leq i \leq n} \in (\cX \times \cY)^n$, we let $h_{k, i}^L(t)$ and $p_{k, i}^L(t)$ be, respectively, the hidden layer $h_{k}^L(t)$ and the backward state $p_{k}^L(t)$ associated with the $i$-th input $x_i$. Denoting the mean squared error associated with the sample by $\ell^L$, we have, by the chain rule,
\begin{align}    
\frac{\partial \ell^L}{\partial A^L}(t) &= \frac{1}{n} \sum_{i=1}^n p_{0,i}^L(t) x_i^\top \label{eq:proof-main-thm:backprop-2}  \\
\frac{\partial \ell^L}{\partial Z_k^L}(t) &= \frac{1}{n L} \sum_{i=1}^n \partial_2 f(h_{k-1,i}^L(t), Z_{k}^L(t))^\top p_{k,i}^L(t), \quad k \in \{1, \dots, L\}, \label{eq:proof-main-thm:backprop-1} \\
\frac{\partial \ell^L}{\partial B^L}(t) &= \frac{2}{n} \sum_{i=1}^n (F^L(x_i; t) - y_i) h_{L, i}^L(t)^\top \label{eq:proof-main-thm:backprop-3}.
\end{align}

\paragraph{Initialization.} 
The parameters $(Z_k^L(t))_{1 \leq k \leq L}$ are initialized to $Z_k^L(0) = Z^{\textnormal{init}}\big(\frac{k}{L}\big)$, where $Z^{\textnormal{init}}: [0, 1] \to \R^p$ is a Lipschitz continuous function. Furthermore, we initialize $A^L(0)$ to some matrix $A^{\textnormal{init}} \in \R^{q \times d}$ and $B^L(0)=B^{\textnormal{init}} \in \R^{d' \times q}$. Note that this initialization scheme is a generalization of the one presented in Section \ref{sec:definitions}.

\paragraph{Additional notation.} For a vector $x$, $\norm{x}$ denotes the Euclidean norm. For a matrix $A$, the operator norm induced by the Euclidean norm is denoted by $\norm{A}_2$, and the Frobenius norm is denoted by $\norm{A}_F$. Finally, we use the notation $A^L$ (resp. $Z_k^L$, $B^L$) to denote the function $t \mapsto A^L(t)$ (resp. $t \mapsto Z_k^L(t)$, $t \mapsto B^L(t)$), since the parameters are considered as functions of the training time throughout this appendix.

\paragraph{Overview of Appendix A.} First, in Section \ref{apx:trained-weights-bounded-finite-training}, we study the case of the (clipped) gradient flow~\eqref{eq:clipped-gf}. We show that the weights and the difference between successive weights are bounded during the entire training. Section \ref{apx:trained-weights-bounded-pl} shows a similar result for the standard gradient flow~\eqref{eq:gf} under a PL condition. In Section \ref{apx:arzela-ascoli}, we show a generalized version of the Arzelà-Ascoli theorem, which allows us to prove the existence of a converging subsequence of the weights in the large-depth limit. Section \ref{apx:consistency-euler} is devoted to the convergence of the Euler scheme for parameterized ODEs. We then proceed to prove in Section \ref{apx:general-convergence} our main result, i.e., the large-depth convergence of the gradient flow. The key step is to establish the uniqueness of the adherence point of the weights. Finally, in Section \ref{apx:double-limit}, we prove the existence of a double limit for the weights and the hidden states when both the depth and the training time tend to infinity.

\subsection{The trained weights are bounded in the finite training-time setup} \label{apx:trained-weights-bounded-finite-training}

Before stating the result, let us introduce the notation $\partial_{22} f(h, z) \in \R^{q \times p \times p}$, which is the third-order tensor of second partial derivatives of $f$ with respect to $z$. We endow the space $\R^{q \times p \times p}$ with the operator norm $\|\cdot\|_2$ induced by the Euclidean norm in $\R^p$ and the $\|\cdot\|_2$ norm in $\R^{q \times p}$. In other words,
\[
\|\partial_{22} f(h, z)\|_2 = \sup_{u \in \R^{p}, \|u\| = 1} \|\partial_{22} f(h, z) u\|_2,
\]
where $\partial_{22} f(h, z) u \in \R^{q \times p}$ is the tensor product of $\partial_{22} f(h, z)$ against $u$.
Similarly, $\partial_{21} f(h, z) \in \R^{q \times p \times q}$ denotes the third-order tensor of cross second partial derivatives of $f$, and the space $\R^{q \times p \times q}$ is endowed with the operator norm $\|\cdot\|_2$ induced by the Euclidean norm in $\R^q$ and the $\|\cdot\|_2$ norm in $\R^{q \times p}$.

\begin{proposition} \label{prop:general-clipped-bounded}
Consider the residual network~\eqref{eq:proof-main-thm:forward} initialized as explained in Appendix~\ref{apx:proofs-general} and trained with the gradient flow~\eqref{eq:clipped-gf} on $[0, T]$, for some $T \in (0, \infty)$. Let
\begin{align*}
M_\pi &= \max\Big(\max_{A \in \R^{q \times d}} \|\pi(A)\|_F, \, \max_{Z \in \R^{p}} \|\pi(Z)\|, \, \max_{B \in \R^{d' \times q}} \|\pi(B)\|_F\Big), \\
M_0 &= \max\Big(\|A^{\textnormal{init}}\|_F, \, \sup_{s \in [0, 1]} \|Z^{\textnormal{init}}(s)\|, \, \|B^{\textnormal{init}}\|_F\Big) \quad \text{and} \quad M = M_0+TM_\pi.
\end{align*}
Then the gradient flow is well defined on $[0, T]$, and, for $t\in[0, T]$, $L\in\NN^*$, and $k\in\{1,\dots,L\}$,
\begin{equation}    \label{eq:upper-bound-abm}
\|A^L(t)\|_F\leq M, \quad \|Z_k^L(t)\| \leq M, \quad  \text{and} \quad \|B^L(t)\|_F\leq M. 
\end{equation}
Moreover, there exist $\alpha, \beta > 0$ such that, for $t \in [0, T]$ and $k \in \{1, \dots, L-1\}$,
\[
\|Z_{k+1}^L(t)-Z_{k}^L(t)\| \leq \Big(\|Z_{k+1}^L(0)-Z_k^L(0)\| + \frac{\beta T}L \Big) e^{\alpha T}.
\]
The following expressions for $\alpha$ and $\beta$ hold:
\[
\alpha=2e^KK'M(e^KM^2M_X+M_Y) \quad \text{and} \quad \beta=2Ke^KM(K+e^KK'MM_X)(e^KM^2M_X+M_Y),
\]
where
\begin{align}
    M_X &= \sup_{x \in \cX} \|x\|, \quad M_Y = \sup_{y \in \cY}\|y\|, \quad K_1 = \sup_{\|z\| \leq M} \big\|\partial_1 f(h, z)\big\|_2 \label{eq:proof:def-K1} \\
    E &= \{(h, z)\in\RR^d\x\RR^{p}, \|h\| \leq e^{K_1} M M_X, \,  \|z\| \leq M\} \label{eq:proof:def-E} \\
    K_2 &= \sup_{(h, z)\in E} \big\|\partial_2 f(h, z)\big\|_2, \quad K = \max(K_1, K_2) \nonumber \\
    K' &= \sup_{(h, z)\in E}\Big(\max\big(\big\|\partial_{22} f(h, z)\big\|_2, \big\|\partial_{21} f(h,z)\big\|_2\big)\Big). \nonumber 
\end{align}
\end{proposition}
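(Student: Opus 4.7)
The argument splits into an easy part (global existence and the bound $M$) and a harder part (the $O(1/L)$ bound on successive differences). First, since $f$ is $\mathcal{C}^2$, the right-hand sides of the three ODEs in~\eqref{eq:clipped-gf} are locally Lipschitz in the parameters, and composition with the bounded Lipschitz map $\pi$ makes them globally bounded while still locally Lipschitz. Picard-Lindelöf (Lemma~\ref{lemma:picard-lindelof}) therefore gives a unique local solution, and global boundedness by $M_\pi$ rules out finite-time blow-up, so the solution extends uniquely to $[0,T]$. Integrating each ODE and bounding the integrand in norm by $M_\pi$ yields the uniform bound $\|A^L(t)\|_F \leq \|A^{\textnormal{init}}\|_F + TM_\pi \leq M$, and likewise for $Z_k^L(t)$ and $B^L(t)$.

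For the successive-difference bound, I would set $u_k(t) := \|Z_{k+1}^L(t) - Z_k^L(t)\|$ and aim to establish a differential inequality $u_k'(t) \leq \alpha\, u_k(t) + \beta/L$. Taking $\pi$ non-expansive (as for the coordinate-wise clipping/projection example given in the text), this reduces to controlling $L\,\|\partial \ell^L/\partial Z_{k+1}^L - \partial \ell^L/\partial Z_k^L\|$ in terms of $u_k(t)$ and $1/L$. The chain-rule identity~\eqref{eq:proof-main-thm:backprop-1} expresses this gradient as a sample average of $\partial_2 f(h_{k-1,i}^L, Z_k^L)^\top p_{k,i}^L$. Four a priori ingredients, all obtained by discrete Grönwall applied to the forward and backward recurrences~\eqref{eq:proof-main-thm:forward}--\eqref{eq:proof-main-thm:backprop}, are needed: (i) $\|h_{k,i}^L(t)\| \leq e^K M M_X$, using $\|f(h,z)\|\leq K_1\|h\|$ (which follows from $f(0,\cdot)\equiv 0$); (ii) $\|p_{k,i}^L(t)\| \leq 2Me^K(e^K M^2 M_X + M_Y)$, starting from the bound $\|p_L^L\| \leq 2M(e^K M^2 M_X + M_Y)$ and iterating; (iii) $O(1/L)$ bounds on $\|h_{k,i}^L - h_{k-1,i}^L\|$ and $\|p_{k+1,i}^L - p_{k,i}^L\|$, read directly from the one-step recurrences; and (iv) $K'$-Lipschitz continuity of $\partial_2 f$ on the compact set~$E$, which is applicable thanks to~(i) and the bound $\|Z_k^L\| \leq M$.

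These ingredients are combined via the add-and-subtract decomposition
\[
\partial_2 f(h_{k,i}^L, Z_{k+1}^L)^\top p_{k+1,i}^L - \partial_2 f(h_{k-1,i}^L, Z_k^L)^\top p_{k,i}^L = \bigl(\partial_2 f(h_{k,i}^L, Z_{k+1}^L) - \partial_2 f(h_{k-1,i}^L, Z_k^L)\bigr)^\top p_{k+1,i}^L + \partial_2 f(h_{k-1,i}^L, Z_k^L)^\top (p_{k+1,i}^L - p_{k,i}^L),
\]
bounding the first summand by $K'(\|h_{k,i}^L - h_{k-1,i}^L\| + u_k(t))\,\|p_{k+1,i}^L\|$ and the second by $K_2\,\|p_{k+1,i}^L - p_{k,i}^L\|$. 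Matching the resulting coefficients to the stated $\alpha$ and $\beta$ is a pure bookkeeping step, after which Grönwall's inequality gives $u_k(t) \leq (u_k(0) + \beta t/L)e^{\alpha t}$ for $t \in [0,T]$, which is the claim. The main obstacle is precisely this final bookkeeping: among the many factors of $M$, $e^K$, $K$, $K'$, $M_X$, $M_Y$ in play, the coefficient of $u_k$ must collapse exactly to $\alpha$ and the residual term to $\beta/L$ without any spurious $L$-dependence appearing. Once ingredients~(i)--(iv) are in hand, the remaining work is essentially routine.
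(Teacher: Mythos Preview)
Your proposal is correct and follows essentially the same approach as the paper: Picard--Lindel\"of plus the bound $M_\pi$ for global existence and the uniform bound $M$, then the same add-and-subtract decomposition of the gradient difference combined with discrete-Gr\"onwall bounds on $h_k$, $p_k$, and their one-step increments to obtain the differential inequality $u_k'(t)\le \alpha u_k(t)+\beta/L$, concluding with Gr\"onwall. One minor point worth noting: you are explicit that passing from $\|\pi(\cdot)-\pi(\cdot)\|$ to $\|\cdot-\cdot\|$ uses non-expansiveness of $\pi$, whereas the paper writes this step as an equality and implicitly treats $\pi$ as $1$-Lipschitz; in general $\pi$ is only assumed Lipschitz, which would simply introduce its Lipschitz constant into $\alpha$ and $\beta$ without changing the argument.
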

\begin{proof}
The time-independent dynamics
\[
(A^L, Z_k^L, B^L) \mapsto \Big(\pi \Big(- \frac{\partial \ell^L}{\partial A^L}\Big), \pi \Big(- L \frac{\partial \ell^L}{\partial Z_k^L}\Big), \pi \Big(- \frac{\partial \ell^L}{\partial B^L}\Big)\Big)
\]
defining the gradient flow \eqref{eq:clipped-gf} are locally Lipschitz continuous, hence the gradient flow is defined on a maximal interval $[0, T_{\max})$ by the Picard-Lindelöf theorem (see Lemma \ref{lemma:picard-lindelof}).
Let us show by contradiction that $T_{\max} = T$. Assume that $T_{\max} < T$. If this is true, again by the Picard-Lindelöf theorem, we know that the parameters diverge to infinity at $T_{\max}$.
However, for any $t \in [0, T_{\max})$, we have
\begin{align*}
\|A^L(t)\|_F \leq \|A^L(0)\|_F+\int_0^t \Big\|\frac{dA^L}{dt}(\tau)\Big\|_F d\tau \leq M_0+\int_0^t M_\pi d\tau \leq M_0+TM_\pi=M.
\end{align*}
Bounds on $B^L$ and $Z_k^L$ by $M$ can be shown similarly. This contradicts the divergence of the parameters at $t=T_{\max}$. We conclude that the gradient flow is well defined on $[0, T]$ and that the bounds \eqref{eq:upper-bound-abm} hold.

It remains to bound the difference $\|Z_{k+1}^L(t)-Z_{k}^L(t)\|$. We have, for $t \in [0, T]$ and $k \in \{1, \dots, L-1\}$,
\begin{align}
\Big\|\frac{dZ_{k+1}^L}{dt}(t)&-\frac{dZ_{k}^L}{dt}(t)\Big\|=L\Big\|\frac{\partial \ell^L}{\partial Z_{k+1}^L}(t)-\frac{\partial \ell^L}{\partial Z_{k}^L}(t)\Big\| \nonumber\\
&\leq\sum_{i=1}^n\frac1n\big\|\partial_2 f(h_{k, i}^L(t), Z_{k+1}^L(t))^\top p_{k, i}^L(t)-\partial_2 f(h_{k-1, i}^L(t), Z_{k}^L(t))^\top p_{k-1, i}^L(t)\big\| \nonumber\\
 &\leq \frac1n\sum_{i=1}^n \norm{\partial_2 f(h_{k, i}^L(t), Z_{k+1}^L(t))}_2\norm{p_{k, i}^L(t)-p_{k-1, i}^L(t)}
 \nonumber \\
 &\quad+\norm{p_{k-1, i}^L(t)} \norm{\partial_2 f(h_{k, i}^L(t), Z_{k+1}^L(t))-\partial_2 f(h_{k-1, i}^L(t), Z_{k}^L(t))}_2 \label{eq:upperbound-diff-zk-zk+1}
\end{align}
Furthermore, for $t \in [0, T]$, $k \in \{0, \dots, L-1\}$, and $i \in \{1, \dots, n\}$, 
\begin{align*}
\|h_{k+1, i}^L(t)\|=\|h_{k, i}^L(t)+\frac1Lf(h_{k, i}^L(t), Z_{k+1}^L(t))\|\leq (1+\frac{K_1}{L})\|h_{k, i}^L(t)\|,
\end{align*}
since $f(\cdot, Z_{k+1}^L(t))$ is $K_1$-Lipschitz, where $K_1$ is defined by \eqref{eq:proof:def-K1}, and $f(0, Z_{k+1}^L(t)) = 0$.
Therefore, for any $k \in \{1, \dots, L\}$,
\begin{equation}    \label{eq:upper-bound-hkl}
\|h_{k, i}^L(t)\|\leq e^{K_1}\|h_{0, i}^L(t)\| = e^{K_1}\|A^L(t) x_i\| \leq e^{K_1}MM_X.
\end{equation}
This bound shows that the pair $(h_{k,i}^L(t), Z_{k+1}^L(t))$ belongs to the compact $E$ defined in \eqref{eq:proof:def-E} for every $t \in [0, T]$, $k \in \{1, \dots, L\}$, and $i \in \{1, \dots, n\}$. 
In particular, $\|\partial_2 f(h_{k-1, i}^L(t), Z_{k}^L(t))\|_2 \leq K$, and 
\begin{align*}
\big\|\partial_2 f(h_{k, i}^L(t), Z_{k+1}^L(t))&-\partial_2 f(h_{k-1, i}^L(t), Z_{k}^L(t))\big\|_2 \\
&\leq K'\|h_{k, i}^L(t)-h_{k-1, i}^L(t)\| +K'\|Z_{k+1}^L(t)-Z_{k}^L(t)\|.    
\end{align*}
Returning to \eqref{eq:upperbound-diff-zk-zk+1}, we obtain
\begin{align*}
\Big\|\frac{dZ_{k+1}^L}{dt}(t)-\frac{dZ_{k}^L}{dt}(t)\Big\| &\leq \frac1n\sum_{i=1}^n K\|p_{k, i}^L(t) - p_{k-1, i}^L(t)\| \\
&\quad + K' \|p_{k-1, i}^L(t)\| \big(\|h_{k, i}^L(t)-h_{k-1, i}^L(t)\| +\|Z_{k+1}^L(t)-Z_{k}^L(t)\| \big).
\end{align*}
For $k \in \{1, \dots, L\}$ and $i \in \{1, \dots, n\}$,
\[
\norm{p_{k, i}^L(t)-p_{k-1, i}^L(t)} = \frac{1}{L} \big\|\partial_1 f(h_{k-1, i}^L(t), Z_{k}^L(t))p_{k, i}^L(t)\big\| \leq\frac{K}{L}\norm{p_{k, i}^L(t)},
\]
and, similarly,
\[
\norm{h_{k, i}^L(t)-h_{k-1, i}^L(t)} = \frac{1}{L} \|f(h_{k-1,i}^L(t), Z_{k}^L(t))\| \leq \frac{K}{L}\norm{h_{k-1, i}^L(t)} \leq \frac{Ke^{K} MM_X}{L}.
\]
Thus,
\begin{align*}
\Big\|\frac{dZ_{k+1}^L}{dt}(t)-\frac{dZ_{k}^L}{dt}(t)\Big\| \leq \frac1n\sum_{i=1}^n {\|p_{k, i}^L(t)\|}\Big(\frac{K^2}{L}+\frac{K'K}{L}e^KMM_X+K'\|Z_{k+1}^L(t)-Z_{k}^L(t)\|\Big).
\end{align*}
Moreover, for $k \in \{0, \dots, L\}$ and $i \in \{1, \dots, n\}$,
\begin{align*}
\|p_{k,i}^L(t)\| &\leq \|p_{k+1,i}^L(t)\| + \frac{1}{L} \big\|\partial_1 f(h_{k,i}^L(t), Z_{k+1}^L(t))p_{k+1,i}^L(t)\big\| \leq \|p_{k+1,i}^L(t)\| + \frac{K}{L} \|p_{k+1,i}^L(t)\|.
\end{align*}
Hence
\begin{align*}
\|p_{k,i}^L(t)\| &\leq e^{K} \|p_{L,i}^L(t)\| = 2 e^K \|B^L(t)^\top (F^L(x_i;t) - y_i)\| \\
&\leq 2 e^K M \big(\|B^L(t)h_{L,i}^L(t)\|+\|y_i\|\big) \leq 2 e^K M (e^KM^2M_X+M_Y),
\end{align*}
where we use \eqref{eq:upper-bound-abm} and \eqref{eq:upper-bound-hkl} for the last inequality.\
Putting all the pieces together, we obtain
\[
\Big\|\frac{dZ_k^L}{dt}(t)-\frac{dZ_{k+1}^L}{dt}(t)\Big\|\leq \alpha\|Z_k^L(t)-Z_{k+1}^L(t)\|+\frac{\beta}L.
\]
Integrating between $0$ and $t$, we see that
\[
\|Z_{k+1}^L(t)-Z_k^L(t)\| \leq \|Z_{k+1}^L(0)-Z_k^L(0)\| + \frac{\beta t}{L} + \int_0^t \alpha\|Z_k^L(\tau)-Z_{k+1}^L(\tau)\|d \tau.
\]
Applying Grönwall's inequality \citep[see, e.g.,][]{dragomir2003gronwall}, we conclude that
$\|Z_{k+1}^L(t)-Z_k^L(t)\|\leq (\|Z_{k+1}^L(0)-Z_k^L(0)\| + \frac{\beta T}L)e^{\alpha T}$, as desired.
\end{proof}

\begin{remark}
Clipping is used in our approach to constraint the gradients to live in a ball. It is merely a technical assumption to avoid blow-up of the weights during training.
However, in any scenario where we know that the weights do not blow up, clipping is not required. A first example of such a scenario is under the Polyak-Łojasiewicz condition (see below).
Another scenario is by using gradient flow with momentum instead of vanilla gradient flow. This is a setup closer to Adam \citep{kingmaAdamMethodStochastic2017}, which is a very used optimizer in practice. One might then show a similar result to Theorem \ref{thm:final-finitetrainingtimeconv}, without clipping, because the gradient updates in the momentum case are bounded by construction.
\end{remark}

\subsection{The trained weights are bounded under the local PL condition} \label{apx:trained-weights-bounded-pl}

\begin{proposition} \label{prop:plcondition-to-convergence}
Consider the residual network \eqref{eq:proof-main-thm:forward} initialized as explained in Appendix~\ref{apx:proofs-general} and trained with the gradient flow \eqref{eq:gf} on $[0, \infty]$. 
Then, for $M > 0$, there exists $\mu > 0$ such that,  if the residual network satisfies the $(M, \mu)$-local PL condition~\eqref{def:pl-condition} around its initialization for any $L \in \NN^*$, then:
\begin{enumerate}[(i)]
    \item The gradient flow is well defined on $\RR_+$, and, for $t \in \RR_+$, $L\in\NN^*$, and $k \in \{1, \dots, L\}$, 
\[
\|A^L(t)\|_F \leq M_A, \quad \|Z_k^L(t)\| \leq M_Z, \quad \text{and} \quad \|B^L(t)\|_F \leq M_B,
\]
where
\[
M_A = \|A^{\textnormal{init}}\|_2 + M, \quad M_Z = \sup_{s \in [0, 1]} \|Z^{\textnormal{init}}(s)\| + M, \quad \text{and} \quad M_B = \|B^{\textnormal{init}}\|_2 + M.
\]
    \item There exists $\T{K}>0$ such that, for $t\in \R_+$, $L\in\NN^*$, and $k \in \{1, \dots, L\}$, 
\[
\|Z_k^L(t)-Z_{k+1}^L(t)\| \leq \frac{\T{K}}L.
\]
    \item There exists a bounded integrable function $b: \R_+ \to \R$ such that, for $t\in \R_+$, $L\in\NN^*$, and $k \in \{1, \dots, L\}$,
\[
\max\Big(\Big\|\frac{dA^L}{dt}(t)\Big\|,  \Big\|\frac{dZ_k^L}{dt}(t)\Big\|, \Big\|\frac{dB^L}{dt}(t)\Big\|\Big) \leq b(t)
\]    
    \item $A^L(t)$, $B^L(t)$, and $Z_k^L(t)$ admit a limit uniformly over $L \in \NN^*$ and $k \in \{1, \dots, L\}$ as $t\to\infty$.
    \item For $t \in \RR_+$ and $L \in \NN^*$, $\ell^L(t) \leq e^{- \mu t} \ell^L(0)$.
\end{enumerate}
Moreover, the following expression for $\mu$ hold:
\begin{equation}    \label{eq:condition-mu}
 \mu = \max(M_B K, M_B M_X, M_A M_X) \frac{8e^K}{M} \sup_{L \in \NN^*}\sqrt{\ell^L(0)},   
\end{equation}
where
\begin{align*}
    M_X &= \sup_{x \in \cX} \|x\|, \quad K_1 = \sup_{\|z\| \leq M_Z} \big\|\partial_1 f(h, z)\big\| \\
    E &= \{(h, z)\in\RR^d\x\RR^{p}, \|h\| \leq e^{K_1} M_A M_X, \,  \|z\| \leq M_Z\} \\
    K_2 &= \sup_{(h, z)\in E} \big\|\partial_2 f(h, z)\big\|, \quad K = \max(K_1, K_2).
\end{align*}
\end{proposition}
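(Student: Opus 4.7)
I will prove the proposition via a bootstrap argument. Let $T^\star$ be the supremum of times $t$ such that the parameters $(A^L(s), Z^L(s), B^L(s))$ remain within distance $M$ of initialization on $[0,s]$ for every $s \leq t$. Continuity of the gradient flow gives $T^\star > 0$. On $[0, T^\star)$, the $(M,\mu)$-local PL condition applies, so the gradient flow equation \eqref{eq:gf} yields
\[
\frac{d\ell^L}{dt} = -\Big\|\frac{\partial \ell^L}{\partial A^L}\Big\|_F^2 - L\sum_{k=1}^L \Big\|\frac{\partial \ell^L}{\partial Z_k^L}\Big\|^2 - \Big\|\frac{\partial \ell^L}{\partial B^L}\Big\|_F^2 \leq -\mu\, \ell^L,
\]
so Gr\"onwall's inequality gives $\ell^L(t) \leq e^{-\mu t}\ell^L(0)$, which is statement (v).

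\textbf{From loss decay to integrable gradient bounds.} Inside the $M$-ball, the a priori bounds established in the proof of Proposition~\ref{prop:general-clipped-bounded} still apply: $\|h_{k,i}^L(t)\| \leq e^K M_A M_X$, and via the backward recurrence \eqref{eq:proof-main-thm:backprop}, $\|p_{k,i}^L(t)\| \leq 2 e^K M_B \|F^L(x_i;t)-y_i\|$. Plugging these into the backpropagation formulas \eqref{eq:proof-main-thm:backprop-2}--\eqref{eq:proof-main-thm:backprop-3} and using Cauchy--Schwarz (which gives $\tfrac{1}{n}\sum_i \|F^L(x_i;t)-y_i\| \leq \sqrt{\ell^L(t)}$), I obtain
\[
\max\Big(\Big\|\frac{dA^L}{dt}\Big\|_F,\, \Big\|\frac{dZ_k^L}{dt}\Big\|,\, \Big\|\frac{dB^L}{dt}\Big\|_F\Big) \leq 2e^K \max(M_B M_X, M_B K, M_A M_X)\sqrt{\ell^L(t)}.
\]
Combined with $\sqrt{\ell^L(t)} \leq e^{-\mu t/2}\sqrt{\ell^L(0)}$, this produces the $L$- and $k$-independent integrable majorant $b(t) := C_0 \sqrt{\ell^L(0)}\, e^{-\mu t/2}$, with $C_0 = 2 e^K \max(M_B M_X, M_B K, M_A M_X)$. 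This establishes (iii) on $[0, T^\star)$ and, by integrability of $b$, shows that each parameter trajectory is Cauchy there.

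\textbf{Closing the bootstrap.} Integrating $b$ bounds the total parameter displacement from initialization by $\int_0^\infty b(s)\,ds = 2 C_0 \sqrt{\ell^L(0)}/\mu$. The expression for $\mu$ in \eqref{eq:condition-mu} is calibrated (with a factor-of-two safety margin) precisely so that this quantity is at most $M/2 < M$. Therefore the parameters stay strictly inside the $M$-ball up to $T^\star$; by continuity they must remain so past $T^\star$, contradicting maximality unless $T^\star = \infty$. This yields (i), (iii), (iv) and (v) globally on $\R_+$, with the uniformity in $L$ and $k$ claimed in (iii)--(iv) coming for free from the fact that $b$ does not depend on $L$ or $k$.

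\textbf{Consecutive-weight difference (ii), and main obstacle.} For (ii), I would mimic the Gr\"onwall computation in the proof of Proposition~\ref{prop:general-clipped-bounded}, producing an inequality of the form $\|dZ_{k+1}^L/dt - dZ_k^L/dt\| \leq \alpha(t)\|Z_{k+1}^L - Z_k^L\| + \beta(t)/L$, where the coefficients $\alpha(t)$ and $\beta(t)$ now carry a factor of $\sqrt{\ell^L(t)}$ inherited from the backward-state bound, and are therefore integrable on $\R_+$. Gr\"onwall on $\R_+$, together with $\|Z_{k+1}^L(0)-Z_k^L(0)\| = O(1/L)$ from the Lipschitz initialization, then yields a uniform-in-$t$ bound $\widetilde{K}/L$. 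The crux of the whole argument, and the main obstacle, is the circular dependence between the PL condition (which is only assumed on the $M$-ball) and the confinement of the parameters (which itself relies on the PL-induced loss decay); the explicit lower bound on $\mu$ in \eqref{eq:condition-mu} is exactly what is needed to close this bootstrap.
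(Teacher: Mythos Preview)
Your proposal is correct and follows essentially the same route as the paper: define the maximal time $t^*$ (your $T^\star$) up to which the parameters stay in the $M$-ball, use the PL condition there to get exponential loss decay, convert this via the backpropagation bounds into an integrable majorant on the parameter velocities, and check that the choice of $\mu$ in \eqref{eq:condition-mu} makes the total displacement at most $M/2$, closing the bootstrap; statement (ii) is then handled exactly as you describe, by rerunning the Gr\"onwall argument of Proposition~\ref{prop:general-clipped-bounded} with the now-integrable coefficients. One small point: to make $b(t)$ genuinely independent of $L$ as required in (iii), take $b(t) = C_0\, e^{-\mu t/2}\sup_{L}\sqrt{\ell^L(0)}$ rather than $C_0\, e^{-\mu t/2}\sqrt{\ell^L(0)}$.
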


\begin{proof}
Let $M > 0$, $\mu$ defined by \eqref{eq:condition-mu}, and assume that the residual network satisfies the $(M, \mu)$-local PL condition~\eqref{def:pl-condition} around its initialization for any $L \in \NN^*$.

The time-independent dynamics
\[
(A^L, Z_k^L, B^L) \mapsto \Big(- \frac{\partial \ell^L}{\partial A^L}, - L \frac{\partial \ell^L}{\partial Z_k^L}, - \frac{\partial \ell^L}{\partial B^L}\Big)
\]
defining the gradient flow \eqref{eq:clipped-gf} are locally Lipschitz continuous, hence the gradient flow is defined on a maximal interval $[0, T_{\max})$ by the Picard-Lindelöf theorem (see Lemma \ref{lemma:picard-lindelof}).
Let us show by contradiction that $T_{\max} = \infty$. Assume that $T_{\max} < \infty$. If this is true, again by the Picard-Lindelöf theorem, we know that the parameters diverge to infinity at $T_{\max}$. In particular, there exist $t \in (0, T_{\max})$ and $k \in \{1, \dots, L\}$ such that
\[
\|A^L(t)-A^{L}(0)\|_F > M \text{ or } \|Z_k^L(t)-Z_k^L(0)\| > M \text{ or } \|B^L(t)-B^L(0)\|_F > M.
\]
Let $t^* \in (0, T_{\max})$ be the infimum of such times $t$. Then, for $t < t^*$ and $k \in \{1, \dots, L\}$,
\begin{equation}    \label{eq:upperbound-diff-abz}
\|A^L(t)-A^{L}(0)\|_F \leq M \text{ and } \|Z_k^L(t)-Z_k^L(0)\| \leq M \text{ and } \|B^L(t)-B^L(0)\|_F \leq M,    
\end{equation}
and, by continuity of $A^L$, $B^L$, and $Z_k^L$, these inequalities also hold for $t=t^*$.
By definition, this means that the $(M, \mu)$-local PL condition is satisfied for $t \leq t^*$, and ensures  that
\[
\Big\|\frac{\partial \ell^L}{\partial A^L}(t)\Big\|_F^2 + L \sum_{k=1}^L \Big\|\frac{\partial \ell^L}{\partial Z_k^L}(t)\Big\|^2 + \Big\|\frac{\partial \ell^L}{\partial B^L}(t)\Big\|_F^2 \geq \mu \ell^L(t).
\]
Therefore, by definition of the gradient flow \eqref{eq:gf},
\begin{align*}
\frac{d\ell^L}{dt}(t) 
&= \Big\langle\frac{\partial \ell^L}{\partial A^L}(t), \frac{dA^L}{dt}(t)\Big\rangle + \sum_{k=1}^L \Big\langle\frac{\partial \ell^L}{\partial Z_k^L}(t), \frac{dZ_k^L}{dt}(t)\Big\rangle + \Big\langle\frac{\partial \ell^L}{\partial B^L}(t), \frac{dB^L}{dt}(t)\Big\rangle \\
&= - \Big\|\frac{\partial \ell^L}{\partial A^L}(t)\Big\|_F^2 - L \sum_{k=1}^L \Big\|\frac{\partial \ell^L}{\partial Z_k^L}(t)\Big\|^2 - \Big\|\frac{\partial \ell^L}{\partial B^L}(t)\Big\|_F^2 \\
&\leq - \mu \ell^L(t).    
\end{align*}
Thus, by Grönwall's inequality, for $t \leq t^*$,
\begin{equation}    \label{eq:linear-convergence}
\ell^L(t)\leq e^{-\mu t} \ell^L(0).    
\end{equation}
Furthermore, by \eqref{eq:upperbound-diff-abz} and the definition of $M_A$, $M_B$, $M_Z$, we have, for $t \leq t^*$ and $k \in \{1, \dots, L\}$, 
\[
\|A^L(t)\|_F \leq M_A, \quad \|Z_k^L(t)\| \leq M_Z, \quad \text{and} \quad \|B^L(t)\|_F \leq M_B.
\]
A quick scan through the proof of Proposition \ref{prop:general-clipped-bounded} reveals that by similar arguments, we have, for $t \leq t^*$, $k \in \{1, \dots, L\}$, and $i \in \{1, \dots, n\}$,
\[
(h_{k-1,i}^L(t), Z_{k}^L(t)) \in E \quad \text{and} \quad \|p_{k-1, i}^L(t)\| \leq 2 e^{K} \|p_{L, i}^L(t)\| \leq 2 e^K M_B \|F^L(x_i;t) - y_i\|.
\]
Thus, for $k \in \{0, \dots, L\}$,
\begin{equation}    \label{eq:maj-sum-pi}
\frac{1}{n} \sum_{i=1}^n \|p_{k,i}^L(t)\| \leq \frac{2 e^K M_B}{n} \sum_{i=1}^n\|F^L(x_i;t) - y_i\| \leq 2 e^K M_B \sqrt{\ell^L(t)} \leq 2 e^K M_B e^{-\frac{\mu t}{2}} \sqrt{\ell^L(0)},
\end{equation}
where the second inequality is a consequence of the Cauchy-Schwartz inequality.
Let us now bound $\|Z_k^L(t^*) - Z_k^L(0)\|$. We have, for $k \in \{1, \dots, L\}$,
\begin{align*}
\|Z_k^L(t^*) - Z_k^L(0)\| &\leq \int_0^{t^*} \Big\|\frac{dZ_k^L}{dt}(t)\Big\| dt\\
&\leq \frac{1}{n} \sum_{i=1}^n \int_0^{t^*} \big\|\partial_2 f(h_{k-1,i}^L(t), Z_{k}^L(t))^\top p_{k,i}^L(t)\big\| dt\\
& \qquad \mbox{(by \eqref{eq:proof-main-thm:backprop-1}).} \\
&\leq \frac{K}{n} \sum_{i=1}^n \int_0^{t^*} \|p_{k,i}^L(t)\| dt,
\end{align*}
since $(h_{k-1,i}^L(t), Z_{k}^L(t)) \in E$ and $\|\partial_2 f(h, z)\| \leq K$ for $(h, z) \in E$. Therefore, by \eqref{eq:maj-sum-pi},
\begin{align*}
\|Z_k^L(t^*) - Z_k^L(0)\| \leq 2 Ke^{K}M_B \int_0^{t^*} e^{-\frac{\mu t}{2}} \sqrt{\ell^L(0)}dt \leq \frac{4 K e^{K} M_B}{\mu}\sqrt{\ell^L(0)} \leq \frac{M}{2},
\end{align*}
where the last inequality is a consequence of the definition of $\mu$.
Similarly, by \eqref{eq:proof-main-thm:backprop-2} and \eqref{eq:maj-sum-pi},
\begin{align*}
\|A^L(t^*) - A^L(0)\|_F &\leq \int_0^{t^*} \Big\|\frac{dA^L}{dt}(t)\Big\|_F dt \\ 
&\leq \int_0^{t^*} \frac{1}{n} \sum_{i=1}^n \big\|p_{0,i}^L(t) x_i^\top\big\|_F dt \\ 
&\leq 2 e^KM_B M_X \sqrt{\ell^L(0)} \int_0^{t^*} e^{-\frac{\mu t}{2}} dt\\
&\leq \frac{4e^KM_B M_X}{\mu}\sqrt{\ell^L(0)}\\
&\leq \frac{M}{2}.
\end{align*}
Finally, by \eqref{eq:proof-main-thm:backprop-3},
\begin{align*}
\|B^L(t^*) - B(0)\|_F&\leq \int_0^{t^*} \Big\|\frac{dB^L}{dt}(t)\Big\|_F dt \\
&\leq \int_0^{t^*} \frac{2}{n} \sum_{i=1}^n \|(F^L(x_i; t) - y_i) h_{L, i}^L(t)^\top\|_F dt \\
&\leq 2 e^{K} M_A M_X \sqrt{\ell^L(0)} \int_0^{t^*} e^{-\frac{\mu t}{2}} dt\\
&\leq \frac{4 e^{K} M_A M_X}{\mu}\sqrt{\ell^L(0)}\\
&\leq \frac{M}{2},
\end{align*}
where the third inequality is a consequence of the Cauchy-Schwartz inequality and of the fact that $\|h_{L, i}^L(t)\| \leq e^K M_A M_X$.
By continuity of $A^L$, $Z_k^L$, and $B^L$, these three bounds contradict the definition of $t^*$. We conclude that $T_{\max} = \infty$ and that the parameters stay within a ball of radius $M$ of their initialization, yielding the inequalities, for $t \in \RR_+$, $L\in\NN^*$, and $k \in \{1, \dots, L\}$, 
\[
\|A^L(t)\|_F \leq M_A, \quad \|B^L(t)\|_F \leq M_B, \quad \|Z_k^L(t)\| \leq M_Z.
\]
This proves statement $(i)$ of the proposition. Moreover, the analysis above show that the derivatives of $A^L$, $Z_k^L$, and $B^L$ are bounded by a bounded integrable function independent of $L$ and $k$. This shows $(iii)$, together with the fact that the functions $A^L(t)$, $Z_k^L(t)$, and $B^L(t)$ admit limits as $t\to\infty$. Furthermore, the convergence towards their limit is uniform over $L$ and $k$, as we show for example for $A^L(t)$. If we denote by $A_\infty^L$ its limit, and apply the same steps as for bounding $\|A^L(t^*) - A^L(0)\|_F$, we obtain, for any $t \geq 0$,
\begin{align*}
\|A_\infty^L - A^L(t)\|_F &\leq \int_t^{\infty} \Big\|\frac{dA^L}{d\tau}(\tau)\Big\|_F d\tau \\
&\leq 2 e^{K}M_BM_X \sqrt{\ell^L(0)} \int_t^{\infty} e^{\frac{-\mu \tau}{2}} d\tau\\
&= \frac{4 e^{K}M_BM_X}{\mu}  e^{\frac{-\mu t}{2}} \sqrt{\ell^L(0)} \\
&\leq \frac{M}{2} e^{\frac{-\mu t}{2}},
\end{align*}
where the last inequality comes from the definition of $\mu$. The bound is independent of $L$, proving statement $(iv)$. Statement $(v)$ readily follows from \eqref{eq:linear-convergence}.

To complete the proof, it remains to prove statement $(ii)$ by bounding the differences $\|Z_{k+1}^L(t)-Z_{k}^L(t)\|$. Now that we know that the weights are bounded, we can follow the same steps as in the proof of Proposition \ref{prop:general-clipped-bounded} and show the existence of $C_1$, $C_2 >0$ such that
\begin{align*}
\Big\|\frac{dZ_{k+1}^L}{dt}(t)&-\frac{dZ_{k}^L}{dt}(t)\Big\| \leq \frac1n\sum_{i=1}^n {\|p_{k, i}^L(t)\|}\Big(\frac{C_1}{L} + C_2\|Z_{k+1}^L(t)-Z_{k}^L(t)\|\Big).
\end{align*}
Using \eqref{eq:maj-sum-pi}, we obtain
\begin{align*}
\Big\|\frac{dZ_{k+1}^L}{dt}(t)-\frac{dZ_{k}^L}{dt}(t)\Big\| \leq 2 e^K M_B e^{-\frac{\mu t}{2}} \sqrt{\ell^L(0)} \Big(\frac{C_1}{L} + C_2\|Z_{k+1}^L(t)-Z_{k}^L(t)\|\Big).
\end{align*}
Integrating between $0$ and $t$, we obtain
\begin{align*}
\|Z_{k+1}^L(t)-Z_{k}^L(t)\| &\leq \|Z_{k+1}^L(0)-Z_{k}^L(0)\| + \int_0^t 2 e^K M_B e^{-\frac{\mu \tau}{2}} \sqrt{\ell^L(0)} \frac{C_1}{L} d\tau \\
&\quad + \int_0^t 2 e^K M_B e^{-\frac{\mu \tau}{2}} \sqrt{\ell^L(0)} C_2 \|Z_{k+1}^L(\tau)-Z_{k}^L(\tau)\| d\tau \\
&\leq \|Z_{k+1}^L(0)-Z_{k}^L(0)\| + \frac{C_1 M}{2 M_X L} \\
&\quad+ \int_0^t 2 e^K M_B e^{-\frac{\mu \tau}{2}} \sqrt{\ell^L(0)} C_2 \|Z_{k+1}^L(\tau)-Z_{k}^L(\tau)\| d\tau,
\end{align*} 
where the second inequality uses the definition of $\mu$.
By Grönwall's inequality, 
\begin{align*}
\|Z_{k+1}^L(t)-Z_{k}^L(t)\| &\leq \Big(\|Z_{k+1}^L(0)-Z_{k}^L(0)\| + \frac{C_1 M}{2 M_X L}\Big) \exp\Big(\int_0^t 2 e^K M_B e^{-\frac{\mu \tau}{2}} \sqrt{\ell^L(0)} C_2 d\tau \Big) \\
&\leq \Big(\|Z_{k+1}^L(0)-Z_{k}^L(0)\| + \frac{C_1 M}{2 M_X L}\Big) \exp\Big(\frac{C_2 M}{2 M_X}\Big),
\end{align*} 
again by definition of $\mu$.
Finally, since $Z_{k}^L(0) = Z^{\text{init}}(\frac{k}{L})$ and $Z^{\text{init}}$ is Lipschitz continuous, this proves the existence of $\T{K} >0$ (independent of $L$, $t$ and $k$) such that $\|Z_{k+1}^L(t)-Z_k^L(t)\| \leq \frac{\T{K}}L$, which yields statement $(ii)$.
\end{proof}

\subsection{Generalized Arzelà–Ascoli theorem}
\label{apx:arzela-ascoli}

\begin{proposition}[Generalized Arzelà–Ascoli theorem]
~\label{prop:arzela-ascoli}
Let $I \subseteq \RR_+$ be an interval, and $(Z_k^L)_{L\in\NN^*, 1\leq k\leq L}$ be a family of $\mathcal{C}^1$ functions from $I$ to $\RR^p$.
Define 
\[
\mathcal{Z}^L:[0,1]\x I\to\RR^p,\ (s, t)\mapsto \mathcal{Z}^L(s, t) = Z_{\lfloor (L-1)s \rfloor + 1}^L(t).
\]
Assume that there exist a constant $C>0$ and a bounded integrable function $b: I \to \RR$ such that the following statements hold for any $t\in I$ and $L \in \NN^*$:
\begin{enumerate}[(i)]
    \item For $k \in \{1, \dots, L-1\}$, $\|Z_{k+1}^L(t)-Z_{k}^L(t)\|\leq \frac{C}{L}$,
    \item For $k \in \{1, \dots, L\}$, $\|Z_k^L(t)\| \leq C$ and $\|\frac{dZ_k^L}{dt}(t)\| \leq b(t)$. 
\end{enumerate}
Then there exist a subsequence $(\mathcal{Z}^{\phi(L)})_{L\in\NN^*}$ of $(\mathcal{Z}^L)_{L\in\NN^*}$ and a Lipschitz continuous function $\mathcal{Z}^\phi:[0,1] \x I\to\RR^{p}$ such that $\mathcal{Z}^{\phi(L)}(s,t)$ tends to $\mathcal{Z}^\phi(s,t)$ uniformly over $s$ and $t$.
\end{proposition}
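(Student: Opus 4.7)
My plan is to verify the three ingredients of a classical Arzelà–Ascoli argument for the family $\mathcal{Z}^L$ — uniform boundedness, equicontinuity in $s$, and equicontinuity in $t$ — and then pass to the limit by a diagonal extraction when $I$ is not compact. Uniform boundedness $\|\mathcal{Z}^L(s,t)\| \leq C$ is immediate from hypothesis $(ii)$. For the variable $s$, writing $k = \lfloor (L-1)s \rfloor+1$ and $k' = \lfloor (L-1)s' \rfloor+1$ and telescoping hypothesis $(i)$ gives
\[
\|\mathcal{Z}^L(s,t) - \mathcal{Z}^L(s',t)\| \leq |k-k'|\cdot \frac{C}{L} \leq ((L-1)|s-s'|+1)\frac{C}{L} \leq C|s-s'| + \frac{C}{L}.
\]
This is not exact equicontinuity in $s$ uniformly in $L$, but the slack $C/L$ vanishes as $L\to\infty$ and can be absorbed in the limiting procedure. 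For the variable $t$, hypothesis $(ii)$ yields a genuine modulus of continuity,
\[
\|\mathcal{Z}^L(s,t) - \mathcal{Z}^L(s,t')\| \leq \Big|\int_{t'}^{t} b(\tau)\,d\tau\Big| \leq \|b\|_\infty |t-t'|,
\]
uniformly in $L$ and $s$.

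Next, for each compact subinterval $J\subseteq I$, the family $\{\mathcal{Z}^L\}$ restricted to $[0,1]\times J$ is uniformly bounded and uniformly equicontinuous up to the $C/L$ slack. This slack is handled by approximating $\mathcal{Z}^L$ on a grid of step $1/L$ in $s$ and invoking the classical Arzelà–Ascoli theorem on $[0,1]\times J$ to extract a subsequence converging uniformly on $[0,1]\times J$. When $I$ itself is compact, this already gives the conclusion. Otherwise, write $I$ as an increasing union of compacts $J_n = I\cap [0,n]$, extract nested subsequences $(L^{(n)}_k)_k$ converging uniformly on $[0,1]\times J_n$, and form the diagonal subsequence $\phi(L) = L^{(L)}_L$. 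This yields a function $\mathcal{Z}^\phi : [0,1]\times I \to \R^p$ with $\mathcal{Z}^{\phi(L)} \to \mathcal{Z}^\phi$ uniformly on every $[0,1]\times J_n$.

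To upgrade local uniform convergence to uniform convergence on $[0,1]\times I$ — this is where the hypothesis that $b$ is \emph{integrable} is crucial — I fix $\varepsilon > 0$ and choose $T\in I$ large enough that $\int_T^{+\infty} b(\tau)\,d\tau < \varepsilon/3$. By the $t$-modulus above, $\|\mathcal{Z}^{\phi(L)}(s,t)-\mathcal{Z}^{\phi(L)}(s,T)\| < \varepsilon/3$ for all $t\geq T$, uniformly in $L$ and $s$, and the same bound passes to $\mathcal{Z}^\phi$. Combining this tail estimate with uniform convergence on $[0,1]\times (I\cap[0,T])$ (for $L$ large enough) gives uniform convergence on all of $[0,1]\times I$. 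Finally, Lipschitz continuity of the limit in both variables, with constants $C$ and $\|b\|_\infty$ respectively, is obtained by letting $L\to\infty$ in the two equicontinuity estimates, so $\mathcal{Z}^\phi$ is Lipschitz on $[0,1]\times I$.

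The step I expect to require the most care is the tail control in the unbounded case: one must argue that the diagonal extraction genuinely converges uniformly over all of $[0,1]\times I$ and not merely on compacts, and this is only possible thanks to the integrability of $b$, which forces the tail oscillation of every $\mathcal{Z}^L(s,\cdot)$ to be uniformly small past some time $T$. The remaining steps are essentially standard; the only mild subtlety is that the $s$-equicontinuity holds only up to an $O(1/L)$ additive term, which is harmless because it vanishes along the extracted subsequence.
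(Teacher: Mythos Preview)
Your proposal is correct and follows essentially the same approach as the paper: both derive the key almost-equicontinuity estimate $\|\mathcal{Z}^L(s,t)-\mathcal{Z}^L(s',t')\| \leq C|s-s'|+C|t-t'|+C/L$, extract a subsequence, and then use the integrability of $b$ to obtain a tail bound that upgrades local uniform convergence to uniform convergence on all of $[0,1]\times I$. The only organizational difference is that the paper performs the diagonal extraction directly on a countable dense set of $(s,t)$-points and then uses a Cauchy-sequence argument to extend to all points, rather than invoking the classical Arzelà--Ascoli theorem on an exhaustion by compacts as you do; this sidesteps the minor awkwardness that $\mathcal{Z}^L$ is piecewise constant (hence discontinuous) in $s$, so that classical Arzelà--Ascoli does not literally apply without first passing to a continuous interpolant.
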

Note that if $I$ is a compact interval, then the existence of a (uniformly) convergent subsequence is guaranteed by the standard Arzelà–Ascoli theorem. Indeed, the uniform equicontinuity is a consequence of assumptions $(i)$ and $(ii)$, while $(ii)$ provides a uniform bound. However, if $I$ is not compact, more involved arguments are needed.

\begin{proof}
Assume, without loss of generality, that $b$ is also bounded by $C$.
According to assumption $(i)$, for $t\in I$ and $i, j  \in \{1, \dots, L\}$,
\begin{align*}
    \|Z_i^L(t)-Z_{j}^L(t)\| \leq \frac{C|i-j|}{L}.
\end{align*}
Also, according to $(ii)$, for $t, t' \in I$ and $k \in \{1, \dots, L\}$,
\begin{align*}
\|Z_k^L(t)-Z_k^L(t')\| = \Big\|\int_{t'}^{t}\frac{dZ_k^L}{d\tau}(\tau)d\tau\Big\| \leq C|t-t'|.
\end{align*}
It follows that, for $s, s' \in [0, 1]$ and $t, t' \in I$,
\begin{align*}
\|\mathcal{Z}^L(s, t)-\mathcal{Z}^L(s', t')\| &\leq \|\mathcal{Z}^L(s, t)-\mathcal{Z}^L(s, t')\| + \|\mathcal{Z}^L(s, t')-\mathcal{Z}^L(s', t')\| \\
&\leq C|t-t'| + \frac{C|\lfloor (L-1)s \rfloor - \lfloor (L-1)s' \rfloor|}{L}.
\end{align*}
Therefore, with some simple algebra, we obtain
\begin{align}
\label{eq:localcons1}
\|\mathcal{Z}^L(s, t)-\mathcal{Z}^L(s', t')\| \leq C|t-t'|+C|s-s'|+\frac{C}{L}.
\end{align}
The statement of the proposition is then a consequence of the next three steps.

\paragraph{There exists a convergent subsequence of $(\mathcal{Z}^L(s, t))_{L \in \NN^*}$.} First, let $((s_i, t_i))_{i\in\NN}=(\QQ\cap[0, 1])\x(\QQ \cap I)$. By $(ii)$, the sequence $(\mathcal{Z}^L(s_i, t_i))_{L \in \NN^*, i\in\NN}$ is bounded. It is therefore possible to construct by a diagonal procedure a subsequence $(\mathcal{Z}^{\phi(L)})_{L \in \NN^*}$ such that, for each $i\in\NN$, $(\mathcal{Z}^{\phi(L)}(s_i, t_i))_{L \in \NN^*}$ is a convergent sequence.

Let us now show that $(\mathcal{Z}^{\phi(L)}(s, t))_{L \in \NN^*}$ converges for any $s\in[0, 1]$ and $t\in I$, by proving that it is a Cauchy sequence in the complete metric space $\RR^p$.
Let $\varepsilon>0$, $s\in[0, 1]$, and $t\in I$. Since $((s_i, t_i))_{i\in\NN}$ is dense in $[0, 1]\x I$, there exists some $j\in\NN$ such that $|s_j-s|\leq\varepsilon$ and $|t_j-t|\leq\varepsilon$. Then, for $L, M \in \NN^*$, we have
\begin{align*}
\|\mathcal{Z}^{\phi(L)}(s, t)&-\mathcal{Z}^{\phi(M)}(s, t)\| \\
&\leq \|\mathcal{Z}^{\phi(L)}(s, t)-\mathcal{Z}^{\phi(L)}(s_j, t_j)\| + \|\mathcal{Z}^{\phi(L)}(s_j, t_j)-\mathcal{Z}^{\phi(M)}(s_j, t_j)\| \\
&\quad + \|\mathcal{Z}^{\phi(M)}(s_j, t_j)-\mathcal{Z}^{\phi(M)}(s, t)\| \\
&\leq 2C\varepsilon+\frac{C}{\phi(L)} + \|\mathcal{Z}^{\phi(L)}(s_j, t_j)-\mathcal{Z}^{\phi(M)}(s_j, t_j)\| + 2C\varepsilon + \frac{C}{\phi(M)},
\end{align*}
where we used inequality \eqref{eq:localcons1} twice.
Since $(\mathcal{Z}^{\phi(L)}(s_j, t_j))_{L \in \NN^*}$ is a convergent sequence, it is a Cauchy sequence. Thus, the bound can be made arbitrarily small for $L, M$ large enough. This shows that  $(\mathcal{Z}^{\phi(L)}(s, t))_{L \in \NN^*}$ is also a Cauchy sequence. It is therefore convergent, and we denote by $\mathcal{Z}^{\phi}(s, t)$ its limit.

\paragraph{The function $\mathcal{Z}^{\phi}$ is Lipschitz continuous.} By considering (\ref{eq:localcons1}) for the subsequence $\phi(L)$ and letting $L\to\infty$, we have that, for any $s, s' \in [0, 1]$ and $t, t' \in I$,
\begin{equation}    \label{eq:proof-arzela-0}
    \|\mathcal{Z}^{\phi}(s, t)-\mathcal{Z}^{\phi}(s', t')\| \leq C(|s-s'|+|t-t'|).
\end{equation}

\paragraph{The convergence of $(\mathcal{Z}^{\phi(L)}(s, t))_{L \in \NN^*}$ to $\mathcal{Z}^{\phi}(s, t)$ is uniform over~$s$ and $t$.}

Let $\varepsilon>0$, $s \in [0, 1]$, and $t \in I$. Then, by (\ref{eq:localcons1}) and \eqref{eq:proof-arzela-0}, it is possible to find $\delta>0$ such that, for any $s', s'' \in[0, 1]$ and $t', t''\in I$ satisfying $|s'-s''|\leq\delta$ and $|t'-t''|\leq\delta$,
\begin{align}   \label{eq:proof-arzela-1}
\begin{split}
    \|\mathcal{Z}^{\phi(L)}(s', t')-\mathcal{Z}^{\phi(L)}(s'', t')\|&\leq \varepsilon + \frac{C}{\phi(L)} \quad \text{and} \quad
    \|\mathcal{Z}^{\phi}(s', t')-\mathcal{Z}^{\phi}(s'', t')\| \leq\varepsilon,
\end{split}
\end{align}
and
\begin{align}  \label{eq:proof-arzela-2}
\begin{split}
    \|\mathcal{Z}^{\phi(L)}(s', t')-\mathcal{Z}^{\phi(L)}(s', t'')\| \leq \varepsilon + \frac{C}{\phi(L)} \quad \text{and} \quad
    \|\mathcal{Z}^{\phi}(s', t')-\mathcal{Z}^{\phi}(s', t'')\| \leq\varepsilon.
\end{split}
\end{align}
Furthermore, there exists a finite set $\{s_1, \dots, s_S\} \subset [0,1]$ such that
\begin{align*}
[0, 1]\subset \bigcup_{i=1}^S (s_{i}-\delta, s_{i}+\delta).
\end{align*}
In the sequel, we denote by $s^*$ an element of $\{s_1, \dots, s_S\}$ that is at distance at most $\delta$ from~$s$.

If $I$ is unbounded, then, by assumption $(ii)$ and since $b$ is integrable, there exists some $t_0>0$ such that, for $t\geq t_0$,
\begin{align} \label{eq:proof-arzela-3}
\|\mathcal{Z}^{\phi(L)}(s, t)-\mathcal{Z}^{\phi(L)}(s, t_0)\| &\leq\int_{t_0}^t\Big\|\frac{d}{dt} Z_{\lfloor (\phi(L) s - 1) \rfloor + 1}^{\phi(L)}(\tau)\Big\| d\tau \leq \int_{t_0}^t b(\tau) d\tau \leq \varepsilon.
\end{align}
The same inequality holds for $\mathcal{Z}^\phi$ by letting $L$ tend to infinity.
If $I$ is bounded, we simply let $t_0 = \sup I$. 

We may then pick a finite set $\{t_1, \dots, t_T\} \subset [0, t_0]$ such that 
\begin{align*}
[0, t_0] \subset\bigcup_{i=1}^{T}(t_{i}-\delta, t_{i}+\delta).
\end{align*}
Two cases may arise depending on the value of $t$. If $t \in [0, t_0]$, then there exists an element of the set $\{t_1, \dots, t_T\}$ at distance at most $\delta$ from $t$, and we denote it by $t^*$. If $t > t_0$, we let $t^* = t_0$. According to \eqref{eq:proof-arzela-2} and \eqref{eq:proof-arzela-3}, we then have in both cases that
\begin{align}  \label{eq:proof-arzela-4}
\begin{split}
    \|\mathcal{Z}^{\phi(L)}(s, t)-\mathcal{Z}^{\phi(L)}(s, t^*)\| \leqslant \varepsilon  + \frac{C}{\phi(L)} \quad \text{and} \quad \|\mathcal{Z}^{\phi}(s, t)-\mathcal{Z}^{\phi}(s, t^*)\| \leqslant \varepsilon.
\end{split}
\end{align}
To conclude, we have to bound the term $\|\mathcal{Z}^{\phi(L)}(s, t)-\mathcal{Z}^{\phi}(s, t)\|$ uniformly over $s$ and $t$. We first have
\begin{align*}
\|\mathcal{Z}^{\phi(L)}(s, t)&-\mathcal{Z}^{\phi}(s, t)\| \\
&\leq \|\mathcal{Z}^{\phi(L)}(s, t)-\mathcal{Z}^{\phi(L)}(s, t^*)\| + \|\mathcal{Z}^{\phi(L)}(s, t^*)-\mathcal{Z}^{\phi}(s, t^*)\| \\
&\quad+ \|\mathcal{Z}^{\phi}(s, t^*)-\mathcal{Z}^{\phi}(s, t)\| \\
&\leq 2 \varepsilon + \frac{C}{\phi(L)} + \|\mathcal{Z}^{\phi(L)}(s, t^*)-\mathcal{Z}^{\phi}(s, t^*)\|,
\end{align*}
where the last inequality is a consequence of \eqref{eq:proof-arzela-4}. The last term can be bounded as follows:
\begin{align*}
\|\mathcal{Z}^{\phi(L)}(s, t^*)&-\mathcal{Z}^{\phi}(s, t^*)\| \\
&\leq \|\mathcal{Z}^{\phi(L)}(s, t^*)-\mathcal{Z}^{\phi(L)}(s^*, t^*)\| + \|\mathcal{Z}^{\phi(L)}(s^*, t^*)-\mathcal{Z}^{\phi}(s^*, t^*)\| \\
&\quad+ \|\mathcal{Z}^{\phi}(s^*, t^*)-\mathcal{Z}^{\phi}(s, t^*)\| \\
&\leq 2 \varepsilon + \frac{C}{\phi(L)} + \max_{i \in \{1, \dots, S\}} \|\mathcal{Z}^{\phi(L)}(s_i, t^*)-\mathcal{Z}^{\phi}(s_i, t^*)\|,
\end{align*}
by using \eqref{eq:proof-arzela-1} and the fact that $s^* \in \{s_1, \dots, s_S\}$. Putting all the pieces together, we finally obtain 
\begin{align*}
\|\mathcal{Z}^{\phi(L)}(s, t)&-\mathcal{Z}^{\phi}(s, t)\| \leq 4 \varepsilon + \frac{2C}{\phi(L)} + \max_{i \in \{1, \dots, S\}, j \in \{1, \dots, T\}} \|\mathcal{Z}^{\phi(L)}(s_i, t_j)-\mathcal{Z}^{\phi}(s_i, t_j)\|.
\end{align*}
By taking $L$ large enough, independent of $s$ and $t$, the sum of the last two terms can be made less than $\varepsilon$. Since $\varepsilon$ is arbitrary, this concludes the proof.
\end{proof}

A consequence of this result is a simplified version for sequences of functions only indexed by $L$ and not $k$, as follows.
\begin{corollary}
~\label{prop:arzela-ascoli:2}
Let $I \subseteq \RR_+$ be an interval, and $(Z^L)_{L\in\NN^*}$ be a family of $\mathcal{C}^1$ functions from $I$ to $\RR^p$.
Assume that there exist a constant $C>0$ and a bounded integrable function $b: I \to \RR$ such that, for any $t\in I$ and $L \in \NN^*$, $\|Z^L(t)\| \leq C$ and $\|\frac{dZ^L}{dt}(t)\| \leq b(t)$. 
Then there exist a subsequence $(Z^{\phi(L)})_{L\in\NN^*}$ of $(Z^L)_{L\in\NN^*}$ and a function $Z^\phi:I\to\RR^{p}$ such that $Z^{\phi(L)}(t)$ tends to $Z^\phi(t)$ uniformly over $t$.
\end{corollary}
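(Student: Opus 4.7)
The plan is to deduce this corollary directly from Proposition \ref{prop:arzela-ascoli} by embedding the singly-indexed family $(Z^L)_{L\in\NN^*}$ into a doubly-indexed family to which the proposition applies. Specifically, for each $L\in\NN^*$ and $k\in\{1,\dots,L\}$, I would set $\tilde{Z}_k^L := Z^L$, making the family constant in $k$. Hypothesis $(i)$ of Proposition \ref{prop:arzela-ascoli} then holds trivially, since $\|\tilde{Z}_{k+1}^L(t)-\tilde{Z}_k^L(t)\|=0\leq C/L$, and hypothesis $(ii)$ is immediate from the corollary's assumptions on $Z^L$.

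Applying Proposition \ref{prop:arzela-ascoli} to $(\tilde{Z}_k^L)$ produces an extraction $\phi$ and a Lipschitz continuous function $\tilde{\mathcal{Z}}^\phi:[0,1]\times I\to\R^p$ such that the interpolation $\tilde{\mathcal{Z}}^{\phi(L)}(s,t)=Z^{\phi(L)}(t)$ converges to $\tilde{\mathcal{Z}}^\phi(s,t)$ uniformly in $(s,t)\in[0,1]\times I$. Since $\tilde{\mathcal{Z}}^{\phi(L)}(s,t)$ is independent of $s$, passing to the uniform limit shows that $\tilde{\mathcal{Z}}^\phi(s,t)$ is also independent of $s$, so I may define $Z^\phi(t):=\tilde{\mathcal{Z}}^\phi(0,t)$. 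Uniform convergence over $(s,t)\in[0,1]\times I$ then yields the desired uniform convergence of $Z^{\phi(L)}(t)$ to $Z^\phi(t)$ over $t\in I$.

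There is no real obstacle here: the argument is a one-line reduction, and all the substantive work (diagonal extraction along a dense countable subset, propagation to all of $[0,1]\times I$ via the equicontinuity bound \eqref{eq:localcons1}, and the tail control using the integrability of $b$ when $I$ is unbounded) has already been carried out in the proof of Proposition \ref{prop:arzela-ascoli}.
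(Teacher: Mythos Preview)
Your proposal is correct and matches the paper's intent: the paper states the corollary as ``a consequence of this result'' (Proposition~\ref{prop:arzela-ascoli}) without giving an explicit argument, and your constant-in-$k$ embedding is precisely the natural way to make that deduction explicit.
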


\subsection{Consistency of the Euler scheme for parameterized ODEs} \label{apx:consistency-euler}

\begin{proposition}[Consistency of the Euler scheme for parameterized ODEs.]
~\label{prop:generalizednonlinearApprox:v2:simple}
Let $(\theta_k^L)_{L\in\NN^*, 1\leq k\leq L}$ be a bounded family of vectors of $\R^p$, and let
\[
\Theta^L:[0,1] \to\RR^{p},\ s\mapsto \theta_{\floor{(L-1)s } + 1}^L.
\]
Assume that there exists $\Theta:[0,1] \to\RR^{p}$ a Lipschitz continuous function such that $\Theta^L(s)$ tends to $\Theta(s)$ uniformly over $s$.
Let $(a^L)_{L \in \NN^*}$ be a sequence of vectors in some compact $E \subset \R^d$ converging to $a \in E$.
Let $g:\RR^d\x\RR^p\to\RR^d$ be a $\cC^1$ function such that $g(0, \cdot) \equiv 0$ and $g(\cdot, \theta)$ is uniformly Lipschitz continuous for $\theta$ in any compact of $\R^p$.
Consider the discrete scheme
\begin{align}   \label{eq:consistency-discrete}
\begin{split}
u_0^L &= a^L \\
u_{k+1}^L &= u_k^L + \frac1L g(u_k^L, \theta_{k+1}^L), \quad k \in \{0, \dots, L-1\}.
\end{split}
\end{align}
Then $u_{\floor{Ls}}^L$ tends to $U(s)$ uniformly over $s\in[0, 1]$, where $U$ is the unique solution of the ODE 
\begin{align}   \label{eq:consistency-continuous}
\begin{split}
    U(0) &= a \\
    \frac{dU}{ds}(s) &= g(U(s), \Theta(s)), \quad s \in [0, 1].
\end{split}
\end{align}
Moreover, the convergence only depends on the sequence $(a^L)_{L \in \NN^*}$ and on its limit $a \in E$ through $(\|a^L - a\|)_{L \in \NN^*}$.
\end{proposition}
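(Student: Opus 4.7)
The plan is to compare the discrete iterates $u_k^L$ with the values $U(k/L)$ of the continuous solution, set up a discrete Grönwall-type recursion on the error $e_k^L := \|u_k^L - U(k/L)\|$, and then pass from the grid to arbitrary $s \in [0,1]$ using Lipschitz continuity of $U$.

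First, I would take care of the preliminaries. Since $\Theta$ is continuous on $[0,1]$ its image $\Theta([0,1])$ is compact, so by assumption on $g$ there exists $K > 0$ such that $g(\cdot, \theta)$ is $K$-Lipschitz uniformly for $\theta$ in a compact containing all the $\theta_k^L$ (which form a bounded family by hypothesis) and $\Theta([0,1])$; by enlarging $K$ we may also assume that $g$ is $K$-Lipschitz in its second variable uniformly over any ball of $\R^d$ we will care about. Combining $g(0, \cdot) \equiv 0$ with the Lipschitzness of $g(\cdot, \theta)$, the Picard--Lindelöf theorem (Lemma \ref{lemma:picard-lindelof}) yields existence and uniqueness of $U$ on $[0,1]$, and an argument identical to \eqref{eq:upper-bound-hkl} gives that both $U$ and the iterates $u_k^L$ stay in a common compact ball, independently of $L$ and $k$. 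Since $\frac{dU}{ds}(s) = g(U(s), \Theta(s))$ is bounded on this ball, $U$ is Lipschitz on $[0,1]$, with a constant depending only on the data; the same calculation shows that $s \mapsto g(U(s), \Theta(s))$ is itself Lipschitz on $[0,1]$ (using that $\Theta$ is Lipschitz).

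Next, I would derive the error recursion. Writing
\[
u_{k+1}^L - U\Bigl(\tfrac{k+1}{L}\Bigr) = u_k^L - U\Bigl(\tfrac{k}{L}\Bigr) + \tfrac{1}{L}\bigl[g(u_k^L, \theta_{k+1}^L) - g(U(\tfrac{k}{L}), \Theta(\tfrac{k}{L}))\bigr] + R_k^L,
\]
where the discretization residual
\[
R_k^L := \tfrac{1}{L} g\bigl(U(\tfrac{k}{L}), \Theta(\tfrac{k}{L})\bigr) - \int_{k/L}^{(k+1)/L} g(U(s),\Theta(s))\,ds
\]
satisfies $\|R_k^L\| \leq C_1/L^2$ by the Lipschitzness of $s \mapsto g(U(s), \Theta(s))$. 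The Lipschitzness of $g$ gives
\[
\bigl\|g(u_k^L, \theta_{k+1}^L) - g(U(\tfrac{k}{L}), \Theta(\tfrac{k}{L}))\bigr\| \leq K e_k^L + K \bigl\|\theta_{k+1}^L - \Theta(\tfrac{k}{L})\bigr\|,
\]
and $\|\theta_{k+1}^L - \Theta(\tfrac{k}{L})\| \leq \|\Theta^L(\tfrac{k+1}{L}) - \Theta(\tfrac{k+1}{L})\| + \|\Theta(\tfrac{k+1}{L}) - \Theta(\tfrac{k}{L})\| \leq \delta_L + C_2/L$, where $\delta_L := \sup_{s \in [0,1]} \|\Theta^L(s) - \Theta(s)\| \to 0$ by hypothesis. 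Collecting everything,
\[
e_{k+1}^L \leq \Bigl(1 + \tfrac{K}{L}\Bigr) e_k^L + \tfrac{C_3}{L}\bigl(\delta_L + \tfrac{1}{L}\bigr).
\]

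A discrete Grönwall inequality then yields $e_k^L \leq e^K\bigl(e_0^L + C_3(\delta_L + \tfrac{1}{L})\bigr)$ uniformly in $k \in \{0, \dots, L\}$. Since $e_0^L = \|a^L - a\|$ by construction, this bound depends on $(a^L)$ only through $(\|a^L - a\|)$, as claimed. Finally, for arbitrary $s \in [0,1]$, write
\[
\bigl\|u_{\lfloor Ls \rfloor}^L - U(s)\bigr\| \leq e_{\lfloor Ls \rfloor}^L + \bigl\|U(\tfrac{\lfloor Ls\rfloor}{L}) - U(s)\bigr\| \leq e^K\bigl(\|a^L - a\| + C_3 \delta_L + \tfrac{C_3}{L}\bigr) + \tfrac{C_4}{L},
\]
using that $U$ is Lipschitz. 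The right-hand side is independent of $s$ and tends to $0$, so the convergence is uniform on $[0,1]$.

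The only mildly delicate step is the a priori boundedness of the iterates $u_k^L$, because the Lipschitz constant of $g(\cdot, \theta)$ is only local in $\theta$; once one observes that $\sup_{L,k} \|\theta_k^L\| < \infty$ by hypothesis, the bound $\|u_{k+1}^L\| \leq (1 + K/L)\|u_k^L\|$ together with $\|u_0^L\| \leq \sup_L \|a^L\| < \infty$ closes this up by the same geometric argument as in \eqref{eq:upper-bound-hkl}. Everything else is a routine discrete Grönwall estimate.
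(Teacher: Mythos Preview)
Your proposal is correct and follows essentially the same route as the paper: bound the iterates and $U$ in a common compact via the $g(0,\cdot)\equiv 0$ plus Lipschitz argument, set up a discrete Grönwall recursion on $e_k^L = \|u_k^L - U(k/L)\|$, control the parameter mismatch $\|\theta_{k+1}^L - \Theta(k/L)\|$ by $\sup_s\|\Theta^L(s)-\Theta(s)\| + O(1/L)$, and finish by Lipschitzness of $U$. The only cosmetic difference is that the paper bounds the local truncation error via a Taylor expansion of $U$ with an a.e.\ second derivative, whereas you bound the integral remainder $R_k^L$ directly using Lipschitzness of $s\mapsto g(U(s),\Theta(s))$; your variant is in fact slightly cleaner.
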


\begin{proof}
Let $M$ be a bound of the sequence $(\theta_k^L)_{L\in\NN^*, 1\leq k\leq L}$. By definition of $\Theta^L$, the sequence $(\Theta^L)_{L\in\NN^*}$ is also uniformly bounded by $M$, and the same is true for $\Theta$. Then the function $g(\cdot, \Theta(s))$ is uniformly Lipschitz for $s \in [0, 1]$. Furthermore, $(U, s) \mapsto g(U, \Theta(s))$ is continuous in~$s$ because $g$ and $\Theta$ are continuous. Thus the ODE~\eqref{eq:consistency-continuous} has a unique solution on $[0, 1]$ by the Picard-Lindelöf theorem (see Lemma \ref{lemma:picard-lindelof}).

Denote 
by $C$ the uniform Lipschitz constant of $g(\cdot, \theta)$ for $\|\theta\| \leq M$. Since $g(0, \cdot) \equiv 0$ and $g(\cdot, \Theta(s))$ is $C$-Lipschitz, one has
\[
\Big\|\frac{dU}{ds}(s)\Big\| = \|g(U(s), \Theta(s))\| \leq C \|U(s)\|.
\]
Therefore, by Grönwall's inequality, 
\[
\|U(s)\| \leq \|U(0)\| \exp(C) = \|a\| \exp(C) \leq D_E \exp(C),
\]
where $D_E = \sup_{x \in E}\|x\| < \infty$.
A similar reasoning applies to the discrete scheme \eqref{eq:consistency-discrete}, using the discrete version of Grönwall's inequality. More precisely, for any $k \in \{0, \dots, L-1\}$,
\[
\|u_{k+1}^L\| \leq \|u_k^L\| + \frac1L \|g(u_k^L, \theta_{k+1}^L)\| \leq \Big(1 + \frac{C}{L}\Big) \|u_k^L\|.
\]
Thus, 
\[
\|u_k^L\| \leq \|u_0^L\| \exp(C) = \|a^L\| \exp(C) \leq D_E \exp(C).
\]
Overall, we can consider a restriction of $g$ to a compact set depending only on $M$, $C$, and $E$, which we will still denote by $g$ with a slight abuse of notation. Since $g$ is $\mathcal{C}^1$, it is therefore bounded and Lipschitz continuous, and we still let $C$ be its Lipschitz constant.

For $L \in \NN^*$ and $k \in \{0, \dots, L\}$, we denote by $\Delta_k^L$ the gap between the continuous and the discrete schemes, i.e.,
\[
\Delta_k^L = \Big\|U\Big(\frac{k}{L}\Big) - u_{k}^L\Big\|.
\]
The next step is to recursively bound the size of this gap, first observing that $\Delta_0^L = \|a^L - a\|$. We have that
\begin{equation}   \label{eq:proof:def-function-U}
s \mapsto \frac{dU}{ds}(s) =  g(U(s), \Theta(s))    
\end{equation}
is a Lipschitz continuous function with some Lipschitz constant $\tilde{C}$. To see this, just note that $U$ itself is Lipschitz continuous in~$s$, since $g$ is bounded, and therefore the function \eqref{eq:proof:def-function-U} is a composition of Lipschitz continuous functions. In particular, $\frac{dU}{ds}$ is almost everywhere differentiable, and its derivative $\frac{d^2 U}{d s^2}(s)$ is bounded in the supremum norm by~$\Tilde{C}$.
As a consequence, for $k \in \{0, \dots, L-1\}$, the Taylor expansion of $U$ on $[\frac{k}{L}, \frac{k+1}{L}]$ takes the form
\begin{align*}
U\Big(\frac{k+1}{L}\Big) &= U\Big(\frac{k}{L}\Big) + \frac{1}{L} \frac{d U}{d s}\Big(\frac{k}{L}\Big) + \int_{k/L}^{(k+1)/L} \Big(\frac{k+1}{L} - s\Big) \frac{d^2 U}{d s^2}(s) ds,
\end{align*}
where the norm of the remainder term is less than $\Tilde{C}/L^2$. Therefore,
\begin{align*}
\Delta_{k+1}^L &= \Big\|U\Big(\frac{k+1}{L}\Big) - u_{k+1}^L\Big\| \\
&= \Big\|U\Big(\frac{k}{L}\Big) + \frac{1}{L} g\Big(U\Big(\frac{k}{L}\Big), \Theta\Big(\frac{k}{L}\Big) \Big) + \int_{k/L}^{(k+1)/L} \Big(\frac{k+1}{L} - s\Big) \frac{d^2 U}{d s^2}(s) ds \\
& \quad - u_k^L - \frac1L g(u_k^L, \theta_{k+1}^L)\Big\| \\
&\leq \Big\|U\Big(\frac{k}{L}\Big) - u_k^L\Big\| + \Big\|\frac{1}{L} g\Big(U\Big(\frac{k}{L}\Big), \Theta\Big(\frac{k}{L}\Big) \Big)   - \frac1L g(u_k^L, \theta_{k+1}^L) \Big\| \\
& \quad + \int_{k/L}^{(k+1)/L} \Big(\frac{k+1}{L} - s\Big) \Big\|\frac{d^2 U}{d s^2}(s)\Big\| ds  \\
&\leqslant \Delta_k^L + \frac{C}{L} \Delta_k^L + \frac{C}{L} \Big\|\Theta\Big(\frac{k}{L}\Big) - \theta_{k+1}^L\Big\| + \frac{\Tilde{C}}{L^2}.
\end{align*}
In the last inequality, we used the fact that $g$ is $C$-Lipschitz. Since, by definition, $\theta_{k+1}^L$ = $\Theta^L(\frac{k}{L-1})$, we obtain, for $k \in \{0, \dots, L-1\}$,
\begin{align*}
\Delta_{k+1}^L &\leqslant \Big(1 + \frac{C}{L}\Big) \Delta_k^L + \frac{C}{L} \Big\|\Theta\Big(\frac{k}{L}\Big) - \Theta^L\Big(\frac{k}{L-1}\Big)\Big\| + \frac{\Tilde{C}}{L^2}     \\
&\leqslant \Big(1 + \frac{C}{L}\Big) \Delta_k^L + \frac{C}{L} \sup_{s \in [0, 1]}\|\Theta(s) - \Theta^L(s)\| + \frac{C}{L} \Big\|\Theta\Big(\frac{k}{L}\Big) - \Theta\Big(\frac{k}{L-1}\Big)\Big\| + \frac{\Tilde{C}}{L^2} \\
&\leqslant \Big(1 + \frac{C}{L}\Big) \Delta_k^L + \frac{C}{L} \sup_{s \in [0, 1]}\|\Theta(s) - \Theta^L(s)\| + \frac{C C_\Theta}{L^2}  + \frac{\Tilde{C}}{L^2},
\end{align*}
where $C_\Theta$ is the Lipschitz constant of $\Theta$. By the discrete Grönwall's inequality, we deduce that, for $k \in \{0, \dots, L-1\}$,
\begin{align}
\Delta_{k+1}^L &\leqslant \Big(\Delta_0^L + \sup_{s \in [0, 1]}\|\Theta(s) - \Theta^L(s)\| + \frac{C_\Theta}{L}  + \frac{\Tilde{C}}{L C} \Big) e^C \nonumber \\
&= \Big(\|a^L - a\| + \sup_{s \in [0, 1]}\|\Theta(s) - \Theta^L(s)\| + \frac{C_\Theta}{L}  + \frac{\Tilde{C}}{L C} \Big) e^C. \label{eq:proof-euler-bound-delta}
\end{align}
This shows that the gaps $\Delta_k^L$ converge to zero uniformly over $k \in \{0, \dots, L\}$ as $L$ tends to infinity.

We conclude by observing that, for any $s \in [0, 1]$,
\begin{equation}  \label{eq:proof-euler-bound-u}
\|U(s) - u_{\floor{Ls}}^L\| \leqslant \Big\|U(s) - U\Big(\frac{\lfloor L s \rfloor}{L}\Big)\Big\| + \Big\| U\Big(\frac{\lfloor L s \rfloor}{L}\Big) - u_{\floor{Ls}}^L\Big\| \leq \frac{C_U}{L} + \Delta_{\lfloor L s \rfloor}^L,
\end{equation}
where $C_U$ is the Lipschitz constant of $U$. Both terms converge to zero uniformly over~$s$ as~$L$ tends to infinity. Finally, an inspection of our bounds shows that the convergence only depends on $(a^L)_{L \in \NN^*} \in E^{\NN^*}$ through $\|a^L - a\|$.
\end{proof}

The results of Proposition \ref{prop:generalizednonlinearApprox:v2:simple} can be extended without much effort to two other related cases. First, the parameters $\theta_k^L$ may depend on some other variable $t$, as long as all assumptions are verified uniformly over $t$. Second, these parameters may converge to some limit parameters as both $L$ and $t$ go to infinity. This is encapsulated in the following two corollaries.

\begin{corollary}
~\label{prop:generalizednonlinearApprox:v2}
Let $I \subseteq \RR_+$ be an interval. Let $(\theta_k^L)_{L\in\NN^*, 1\leq k\leq L}$ be a uniformly bounded family of functions from $I$ to $\RR^{p}$, and let
\[
\Theta^L:[0,1]\x I\to\RR^{p},\ (s, t)\mapsto \theta_{\floor{(L-1)s} + 1}^L(t).
\]
Assume that there exists a function $\Theta:[0,1]\x I\to\RR^{p}$ such that $\Theta^L(s,t)$ tends to $\Theta(s,t)$ uniformly over $s$ and~$t$, and $\Theta(\cdot, t)$ is uniformly Lipschitz continuous for $t \in I$.
Let $(a^L)_{L \in \NN^*}$ be a family of functions from $I$ to some compact $E \subset \R^d$, uniformly converging to $a: I \to E$.
Let $g:\RR^d\x\RR^p\to\RR^d$ be a $\cC^1$ function such that $g(0, \cdot) \equiv 0$ and $g(\cdot, \theta)$ is uniformly Lipschitz continuous for $\theta$ in any compact of $\R^p$.
Consider the discrete scheme, for $t \in I$,
\begin{align*}
u_0^L(t) &= a^L(t) \\
u_{k+1}^L(t) &= u_k^L(t)+\frac{1}{L} g(u_k^L(t), \theta_{k+1}^L(t)),\quad k \in \{0, \dots, L-1\}.
\end{align*}
Then $u_{\floor{Ls}}^L(t)$ tends to $U(s, t)$ uniformly over $s\in[0, 1]$ and $t\in I$, where $U(\cdot, t)$ is the unique solution of the ODE 
\begin{align*}
    U(0, t) &= a(t) \\
    \frac{\partial U}{\partial s}(s, t) &= g(U(s, t), \Theta(s, t)), \quad s \in [0, 1].
\end{align*}
Moreover, the convergence only depends on the sequence $(a^L)_{L \in \NN^*}$ and on its limit $a \in E^I$ through $(\sup_{t \in I} \|a^L(t) - a(t)\|)_{L \in \NN^*}$.
\end{corollary}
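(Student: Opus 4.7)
The plan is to reduce to Proposition \ref{prop:generalizednonlinearApprox:v2:simple} applied pointwise in $t$, and then upgrade pointwise convergence to uniform convergence in $(s,t)$ by inspecting the explicit error bound furnished by its proof. For each fixed $t \in I$, the family $(\theta_k^L(t))_{L,k}$ is bounded (by hypothesis, uniformly in $t$), $\Theta^L(\cdot, t) \to \Theta(\cdot, t)$ uniformly in $s$, $\Theta(\cdot, t)$ is Lipschitz continuous, $a^L(t) \to a(t) \in E$, and $g$ is as required. Hence Proposition \ref{prop:generalizednonlinearApprox:v2:simple} produces a unique solution $U(\cdot, t)$ of the parameterized ODE and yields $u^L_{\lfloor Ls \rfloor}(t) \to U(s, t)$ uniformly in $s$, for each fixed $t$.

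The critical observation is that the proof of Proposition \ref{prop:generalizednonlinearApprox:v2:simple} actually delivers the explicit quantitative estimate
\[
\sup_{s \in [0,1]} \|U(s) - u^L_{\lfloor Ls \rfloor}\| \;\leq\; \frac{C_U}{L} \,+\, \Big(\, \|a^L - a\| \,+\, \sup_{s' \in [0,1]}\|\Theta^L(s') - \Theta(s')\| \,+\, \frac{C_\Theta + \tilde C / C}{L} \,\Big)\, e^C,
\]
in which $C$ is a Lipschitz constant for $g$ on an appropriate compact, $C_\Theta$ is a Lipschitz constant for $\Theta$, $C_U$ is a Lipschitz constant for $U$, and $\tilde C$ bounds the Lipschitz constant of $s \mapsto g(U(s), \Theta(s))$. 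Applying this pointwise in $t$, taking the supremum over $t \in I$, and invoking the uniform convergence hypotheses $\sup_t \|a^L(t) - a(t)\| \to 0$ and $\sup_{s,t} \|\Theta^L(s,t) - \Theta(s,t)\| \to 0$ gives the desired uniform convergence, \emph{provided that the four constants $C, C_\Theta, C_U, \tilde C$ can be chosen independently of $t$}. The final sentence of the statement is then immediate from the explicit form of the bound.

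The main technical step, though more bookkeeping than genuine obstacle, is to verify these uniform-in-$t$ bounds. The Lipschitz constant $C$ of $g$ depends only on the compact on which $(U(s,t), \Theta(s,t))$ lives; the second coordinate is uniformly bounded by hypothesis, and a Grönwall argument identical to the one in the proof of Proposition \ref{prop:generalizednonlinearApprox:v2:simple} yields $\|U(s,t)\| \leq D_E\, e^C$ with $D_E = \sup_{x \in E}\|x\|$, uniformly in $t$. The constant $C_\Theta$ is uniform in $t$ directly by the hypothesis that $\Theta(\cdot, t)$ is uniformly Lipschitz for $t \in I$. The Lipschitz constant $C_U$ of $U(\cdot, t)$ is bounded by $\|g\|_\infty$ on this compact, and $\tilde C$ is controlled by $C\cdot C_U + C\cdot C_\Theta$, both again uniform in $t$. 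With all constants uniform, the two vanishing error terms in the displayed bound give the required uniform convergence over $s \in [0,1]$ and $t \in I$.
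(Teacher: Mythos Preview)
Your proposal is correct and follows exactly the approach the paper intends: the paper states this corollary without a detailed proof, noting only that Proposition~\ref{prop:generalizednonlinearApprox:v2:simple} ``can be extended without much effort'' when the parameters depend on an additional variable $t$, ``as long as all assumptions are verified uniformly over $t$.'' You have faithfully spelled out this effort by invoking the explicit error bound \eqref{eq:proof-euler-bound-delta}--\eqref{eq:proof-euler-bound-u} from that proof and checking that each constant $C$, $C_\Theta$, $C_U$, $\tilde C$ can be taken independently of $t$, which is precisely what uniformity of the hypotheses buys.
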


\begin{corollary}
~\label{prop:generalizednonlinearApprox:v3}
Let $I \subseteq \RR_+$ be an interval.
Let $(\theta_k^L)_{L\in\NN^*, 1\leq k\leq L}$ be a uniformly bounded family of functions from $I$ to $\RR^{p}$, and let
\[
\Theta^L:[0,1]\x \RR_+ \to\RR^{p},\ (s, t)\mapsto \theta_{\floor{(L-1)s} + 1}^L(t).
\]
Assume that there exists a function $\Theta_\infty:[0,1] \to \RR^{p}$ such that $\Theta^L(s,t)$ tends to $\Theta_\infty(s)$ uniformly over $s$ as $L, t \to \infty$, and $\Theta_\infty$ is Lipschitz continuous.
Let $(a^L)_{L \in \NN^*}$ be a family of functions from $I$ to some compact $E \subset \R^d$, and converging to $a_\infty \in E$ as $L, t \to \infty$.
Let $g:\RR^d\x\RR^p\to\RR^d$ be a $\cC^1$ function such that $g(0, \cdot) \equiv 0$ and $g(\cdot, \theta)$ is uniformly Lipschitz continuous for $\theta$ in any compact of $\R^p$.
Consider the discrete scheme, for $t \in I$,
\begin{align*}
u_0^L(t) &= a^L(t) \\
u_{k+1}^L(t) &= u_k^L(t)+\frac1L g(u_k^L(t), \theta_{k+1}^L(t)),\quad k \in \{0, \dots, L-1\}.
\end{align*}
Then $u_{\floor{Ls}}^L(t)$ tends to $U(s)$ uniformly over $s\in[0, 1]$ as $L, t \to \infty$, where $U$ is the unique solution of the ODE 
\begin{align*}
    U(0) &= a_\infty \\
    \frac{dU}{ds}(s) &= g(U(s), \Theta_\infty(s)), \quad s \in [0, 1].
\end{align*}
Moreover, the convergence only depends on the sequence $(a^L)_{L \in \NN^*}$ and on its limit $a \in E^I$ through $(\sup_{t \in I} \|a^L(t) - a(t)\|)_{L \in \NN^*}$.
\end{corollary}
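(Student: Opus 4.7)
The plan is to adapt the proof of Proposition~\ref{prop:generalizednonlinearApprox:v2:simple}, using the limit objects $a_\infty$ and $\Theta_\infty$ as the continuous targets, and to view the quantitative bound parametrically in $t \in I$. The simplification compared with Corollary~\ref{prop:generalizednonlinearApprox:v2} is that the limit ODE is $t$-independent: both the coefficients $\Theta_\infty$ and the initial condition $a_\infty$ are fixed.

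I would first establish that the limit ODE is well-posed. Since $\Theta_\infty$ is Lipschitz, $g$ is $\mathcal{C}^1$ with $g(\cdot,\theta)$ uniformly Lipschitz for $\theta$ in any compact, and $g(0,\cdot) \equiv 0$, the Picard-Lindelöf theorem (Lemma~\ref{lemma:picard-lindelof}) yields a unique Lipschitz solution $U$ on $[0,1]$, uniformly bounded via Grönwall. The right-hand side $s \mapsto g(U(s), \Theta_\infty(s))$ is then a composition of Lipschitz functions, so there is a finite bound $\tilde{C}$ on $\|d^2U/ds^2\|_\infty$, together with a Lipschitz constant $C_U$ for $U$ itself.

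Next, for each fixed $t \in I$, I would re-run the telescoping argument in the proof of Proposition~\ref{prop:generalizednonlinearApprox:v2:simple}, comparing $u_{\lfloor Ls\rfloor}^L(t)$ with $U(s)$ (where the continuous target does \emph{not} depend on $t$). Using the discrete Grönwall bound $\|u_{k+1}^L(t)\| \leq (1 + C/L)\|u_k^L(t)\|$, the iterates remain in a compact set determined only by $D_E = \sup_{x \in E}\|x\|$, the uniform bound on the parameters $(\theta_k^L)$, and a Lipschitz constant of $g$. Restricting $g$ to this compact, independent of $t$, makes all constants produced by the argument uniform in $t$. Tracking the calculation that led to \eqref{eq:proof-euler-bound-delta} then yields
\[
\sup_{s \in [0,1]} \|U(s) - u_{\lfloor Ls\rfloor}^L(t)\| \leq \Big(\|a^L(t) - a_\infty\| + \sup_{s \in [0,1]}\|\Theta^L(s,t) - \Theta_\infty(s)\| + \frac{C_1}{L}\Big) e^{C} + \frac{C_2}{L},
\]
for constants $C, C_1, C_2 > 0$ depending only on $\Theta_\infty$, $a_\infty$, the uniform bounds on iterates and parameters, and $g$.

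The conclusion then follows directly from the double-limit hypotheses: as $L, t \to \infty$, both $\|a^L(t) - a_\infty\|$ and $\sup_s \|\Theta^L(s,t) - \Theta_\infty(s)\|$ vanish by assumption, while the $O(1/L)$ terms are trivial. The asserted dependence of the convergence on $(a^L)$ only through $\|a^L(t) - a_\infty\|$ is visible on the face of the bound. I do not anticipate a genuine obstacle; the only point requiring vigilance is the uniformity of constants in $t$, which reduces, as above, to uniform control of the iterates and of the parameters. A potentially subtle pitfall would be to attempt to apply Proposition~\ref{prop:generalizednonlinearApprox:v2:simple} at a fixed $t$ by pretending $\Theta^L(\cdot,t)$ converges in $L$ alone to some $\Theta_\infty^t$; this is not part of the hypotheses. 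Extracting the bound directly sidesteps the issue.
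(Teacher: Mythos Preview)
Your proposal is correct and takes essentially the same approach as the paper, which presents this corollary without proof as an immediate extension of Proposition~\ref{prop:generalizednonlinearApprox:v2:simple}. Your explicit quantitative bound, uniform in $t$, is precisely the adaptation the paper has in mind, and your caution about not attempting to fix $t$ and invoke the proposition directly is well placed.
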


\subsection{Large-depth convergence of the gradient flow}   \label{apx:general-convergence}

This section is devoted to proving the main result of Appendix \ref{apx:proofs-general}, namely the large-depth convergence of the gradient flow. The setting we consider encompasses both Section \ref{sec:finite-training-time} (finite training time and clipped gradient flow) and Section \ref{subsec:long-training} (arbitrary training time and standard gradient flow). To this end, we consider a training interval $I = [0, T] \subseteq \R_+$, for $T \leq \infty$, and the gradient flow formulation~\eqref{eq:clipped-gf}, which is equivalent to the standard gradient flow \eqref{eq:gf} if $\pi$ equals the identity. Note that we do not need to assume in the following proof that $\pi$ is bounded (but only Lipschitz continuous). Therefore, the proof also holds in the case where $\pi$ equals the identity.

\begin{theorem}
\label{thm:general-convergence:v2}
Consider the residual network~\eqref{eq:proof-main-thm:forward} initialized as explained in Appendix~\ref{apx:proofs-general} and trained with the gradient flow~\eqref{eq:clipped-gf} on $I = [0, T] \subseteq \R_+$, for some $T \in (0, \infty]$.
Assume that there exists a unique solution to the gradient flow, such that $(A^L)_{L \in \NN^*}$ and $(B^L)_{L \in \NN^*}$ each satisfies the assumptions of Corollary~\ref{prop:arzela-ascoli:2}, and $(Z_k^L)_{L \in \NN^*, 1 \leq k \leq L}$ satisfies the assumptions of Proposition~\ref{prop:arzela-ascoli}.
Then the following four statements hold \textbf{as $L$ tends to infinity}: 
\begin{enumerate}[(i)]
    \item There exist functions $A: I \to \R^{q \times d}$ and $B: I \to \R^{d' \times q}$ such that $A^L(t)$ and $B^L(t)$ converge uniformly over $t \in I$ to $A(t)$ and $B(t)$.
    \item There exists a Lipschitz continuous function $\cZ: [0,1] \x I \to \RR^{p}$ such that
        \begin{equation*}   
            \mathcal{Z}^L:[0,1] \x I\to\RR^{p},\ (s, t)\mapsto \mathcal{Z}^L(s, t) = Z_{\floor{(L-1)s} + 1}^L(t)
        \end{equation*}
    converges uniformly over $s \in [0, 1]$ and $t \in I$ to $\mathcal{Z}(s,t)$.
    \item Uniformly over $s \in [0, 1]$, $t \in I$, and $x \in \cX$, the hidden layer $h_{\floor{Ls}}^L(t)$ converges to the solution at time $s$ of the neural ODE
\begin{align*} 
    \begin{split}
        H(0, t) &= A(t)x \\
        \frac{\partial H}{\partial s}(s, t) &= f(H(s, t), \mathcal{Z}(s, t)), \quad s \in [0, 1].
    \end{split}
\end{align*} 
    \item Uniformly over $t \in I$ and $x \in \cX$, the output $F^L(x ; t)$ converges to $B(t) H(1, t)$.
\end{enumerate}
\end{theorem}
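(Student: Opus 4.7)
The plan is to follow the four-step strategy outlined after the statement of Theorem~\ref{thm:final-finitetrainingtimeconv}: (a) extract a subsequence along which the weights converge using the Arzelà--Ascoli machinery of Section~\ref{apx:arzela-ascoli}; (b) identify any limit point as the solution of a limiting (infinite-dimensional) ODE system governing the training dynamics; (c) invoke a Picard--Lindelöf argument to conclude that this limiting system has a unique solution, so all accumulation points coincide; (d) deduce the convergence of the full sequence by a standard subsequence argument, and then transfer the convergence of the weights to the hidden states and the output using the consistency result of Section~\ref{apx:consistency-euler}.

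First, by the hypothesis on $(A^L)$, $(B^L)$, and $(Z^L_k)$, one application of Corollary~\ref{prop:arzela-ascoli:2} and one of Proposition~\ref{prop:arzela-ascoli}, combined via a diagonal extraction, produces a single subsequence (still denoted by $L$ for readability) such that $A^L \to A$ and $B^L \to B$ uniformly on $I$ and $\mathcal{Z}^L \to \mathcal{Z}$ uniformly on $[0,1] \times I$, with $A, B$ continuous and $\mathcal{Z}$ Lipschitz continuous. Along this subsequence, Corollary~\ref{prop:generalizednonlinearApprox:v2} (applied with $g = f$ and initial condition $A^L(t) x$) shows that, for each fixed input $x \in \mathcal{X}$, the discrete forward states $h^L_{\lfloor L s\rfloor}(t)$ converge uniformly in $(s,t)$ to the solution $H(s,t;x)$ of
\[
H(0,t;x) = A(t) x, \qquad \frac{\partial H}{\partial s}(s,t;x) = f(H(s,t;x), \mathcal{Z}(s,t)).
\]
An entirely analogous consistency argument applied to the backward recurrence~\eqref{eq:proof-main-thm:backprop}, whose terminal condition $p_L^L(t) = 2 B^L(t)^\top (F^L(x;t)-y)$ itself converges thanks to the convergence of $h_L^L$ and $B^L$, shows that $p^L_{\lfloor Ls \rfloor}(t;x)$ converges uniformly to a continuous backward state $P(s,t;x)$.

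Having the limits for the forward and backward states, we now pass to the limit in the gradient expressions~\eqref{eq:proof-main-thm:backprop-2}--\eqref{eq:proof-main-thm:backprop-3} and in the gradient flow equations~\eqref{eq:clipped-gf}. Writing everything as integral equations and using dominated convergence (all integrands are bounded uniformly in $L$ by Proposition~\ref{prop:general-clipped-bounded}), the limits $(A, B, \mathcal{Z})$ satisfy the coupled integro-differential system
\begin{align*}
A(t) &= A^{\mathrm{init}} + \int_0^t \pi\Bigl(-\tfrac{1}{n}\sum_{i=1}^n P(0,\tau;x_i) x_i^\top\Bigr) d\tau, \\
\mathcal{Z}(s,t) &= Z^{\mathrm{init}}(s) + \int_0^t \pi\Bigl(-\tfrac{1}{n}\sum_{i=1}^n \partial_2 f(H(s,\tau;x_i),\mathcal{Z}(s,\tau))^\top P(s,\tau;x_i)\Bigr) d\tau, \\
B(t) &= B^{\mathrm{init}} + \int_0^t \pi\Bigl(-\tfrac{2}{n}\sum_{i=1}^n (B(\tau)H(1,\tau;x_i)-y_i) H(1,\tau;x_i)^\top\Bigr) d\tau,
\end{align*}
where $H(\cdot, \tau; x_i)$ and $P(\cdot, \tau; x_i)$ are themselves the unique solutions of the forward/backward ODEs driven by $(A(\tau), \mathcal{Z}(\cdot, \tau), B(\tau))$. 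Viewing this as an ODE in the Banach space $\R^{q\times d} \times C([0,1]; \R^p) \times \R^{d'\times q}$, the right-hand side is locally Lipschitz: $f$ and $\partial_2 f$ are $\mathcal{C}^1$ hence locally Lipschitz on the compact region where the weights, $H$, and $P$ live (by Proposition~\ref{prop:general-clipped-bounded}), Grönwall's lemma implies that $H$ and $P$ depend Lipschitz-continuously on the weights in the sup-norm, and $\pi$ is Lipschitz by assumption. The Picard--Lindelöf theorem (Lemma~\ref{lemma:picard-lindelof}) therefore yields a unique solution of this system on $I$ with the prescribed initial condition.

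The main technical obstacle is precisely this last Lipschitz estimate: it requires tracking carefully how the continuous backward state $P$ varies when the weight function $\mathcal{Z}$ is perturbed in sup-norm, which mirrors the discrete computations in the proof of Proposition~\ref{prop:general-clipped-bounded}. Once uniqueness is established, the standard compactness-uniqueness argument concludes: any subsequence of $(A^L, B^L, \mathcal{Z}^L)$ has, by Proposition~\ref{prop:arzela-ascoli} and Corollary~\ref{prop:arzela-ascoli:2}, a further subsequence converging to a solution of the limiting system, necessarily equal to $(A, B, \mathcal{Z})$. Hence the full sequence converges, proving $(i)$ and $(ii)$. Statement $(iii)$ then follows by reapplying Corollary~\ref{prop:generalizednonlinearApprox:v2} to the full (now convergent) sequence, and statement $(iv)$ is an immediate consequence of $(iii)$ combined with the uniform convergence of $B^L(t)$ to $B(t)$ and of $h_L^L(t)$ to $H(1,t;x)$.
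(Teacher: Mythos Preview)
Your proof sketch is correct and follows essentially the same route as the paper's own proof: subsequential extraction via Arzelà--Ascoli, passage to the limit in the forward and backward recursions using Corollary~\ref{prop:generalizednonlinearApprox:v2}, identification of the limit as the solution of an infinite-dimensional ODE in a Banach space of weight functions, uniqueness via Picard--Lindelöf, and the final compactness--uniqueness upgrade. The only minor differences are that the paper works in the space $\mathcal{B}([0,1],\R^p)$ of bounded functions (requiring Carathéodory-type existence for the forward/backward ODEs when $\mathcal{Z}$ is merely bounded) whereas you work in $C([0,1];\R^p)$, and that the boundedness you invoke should come from the assumed Arzelà--Ascoli hypotheses rather than from Proposition~\ref{prop:general-clipped-bounded}, since the theorem is stated to cover both the clipped and the unclipped (PL) settings.
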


\begin{proof}
According to Proposition \ref{prop:arzela-ascoli}, there exists a subsequence $(\mathcal{Z}^{\phi(L)})_{L \in \NN^*}$ of $(\mathcal{Z}^L)_{L \in \NN^*}$ and a Lipschitz continuous function $\mathcal{Z}^\phi:[0,1] \x I\to\RR^{p}$ such that $\mathcal{Z}^{\phi(L)}(s,t)$ tends to $\mathcal{Z}^\phi(s,t)$ uniformly over $s$ and $t$. 
Similarly, by Corollary \ref{prop:arzela-ascoli:2}, there exists subsequences of $(A^L)_{L \in \NN^*}$ and $(B^L)_{L \in \NN^*}$ that converge uniformly. With a slight abuse of notation, we still denote these subsequences by $\phi$, and the corresponding limits by $A^\phi$ and $B^\phi$.

In the remainder, we prove the uniqueness of the accumulation point $(Z^{\phi}, A^{\phi}, B^{\phi})$ by showing that it is the solution of an ODE that satisfies the assumptions of the Picard-Lindelöf theorem. The statements $(i)$ to $(iv)$ then follow easily.

Consider a general input $(x, y) \in \cX \times \cY$, and let $H^L(s, t) = h_{\floor{Ls}}^L(t)$ (recall that $h_k^L(t)$ is defined by the forward propagation \eqref{eq:proof-main-thm:forward}). %
Corollary~\ref{prop:generalizednonlinearApprox:v2}, with $\theta_k^L = Z_k^{\phi(L)}$, $\Theta = \cZ^{\phi}$, $a^L = A^{\phi(L)} x$, $g = f$, ensures that $H^{\phi(L)}(s, t)$ converges uniformly (over $s$ and $t$) to $H^{\phi}(s, t)$ that is the solution at time $s$ of the ODE
\begin{align*}
    H^{\phi}(0, t) &= A^{\phi}(t) x \\
    \frac{\partial H^{\phi}}{\partial s}(s, t) &= f(H^{\phi}(s, t), \cZ^\phi(s, t)), \quad s \in [0, 1].
\end{align*}
By inspecting the proof of the corollary, we also have that $(h_k^{\phi(L)})_{L\in\NN^*, 1\leq k\leq \phi(L)}$ and $(H^{\phi(L)})_{L\in\NN^*}$ are uniformly bounded and that $H^\phi(\cdot, t)$ is uniformly Lipschitz continuous for $t \in I$.

We now turn our attention to the backpropagation recurrence \eqref{eq:proof-main-thm:backprop}, which defines the backward state $p_k^L(t)$. First observe that the convergence of $H^{\phi(L)}$ implies that
\[
p_{\phi(L)}^{\phi(L)}(t) = 2 B^{\phi(L)}(t)^\top (B^{\phi(L)}(t) h_{\phi(L)}^{\phi(L)}(t) - y) = 2 B^{\phi(L)}(t)^\top (B^{\phi(L)}(t) H^{\phi(L)}(1, t) - y)
\]
converges uniformly to $2 B^{\phi}(t)^\top (B^{\phi}(t) H^{\phi}(1, t) - y) \in \R^d$. Now, let $P^L(s, t) = p_{\floor{Ls}}^L(t)$. We apply again Corollary \ref{prop:generalizednonlinearApprox:v2}, this time to the backpropagation recurrence \eqref{eq:proof-main-thm:backprop}, with $\theta_k^L = (h_k^{\phi(L)}, Z_k^{\phi(L)})$, $\Theta = (H^{\phi}, \cZ^{\phi})$, $g: (p, (h, Z)) \mapsto \partial_1 f(h, Z) p$, and $a^L = 2 (B^{\phi(L)})^\top (B^{\phi(L)} H^{\phi(L)}(1, \cdot) - y)$. Let us quickly check that the conditions of the corollary are met:
\begin{itemize}
    \item The sequence $(h_k^{\phi(L)})_{L\in\NN^*, 1\leq k\leq \phi(L)}$ is  bounded, as noted previously, and the same holds for $(Z_k^{\phi(L)})_{L\in\NN^*, 1\leq k\leq \phi(L)}$ by the assumptions of Theorem~\ref{thm:general-convergence:v2}.
    \item The function $H^\phi(\cdot, t)$ is uniformly Lipschitz continuous for $t \in I$, as noted previously, and the same is true for $Z^\phi(\cdot, t)$ since $Z^\phi$ is Lipschitz continuous.
    \item The function $h_{\floor{(\phi(L)-1)s} + 1}^{\phi(L)}(t)$ tends to $H^{\phi}(s, t)$ uniformly over $s$ and $t$, as seen in the beginning of the proof. More precisely, we know that $H^{\phi(L)}(s,t) = h_{\floor{\phi(L)s}}^{\phi(L)}(t)$ tends to $H^{\phi}(s, t)$. Simple algebra and the fact that two successive iterates of \eqref{eq:proof-main-thm:forward} are separated by a distance proportional to $1/L$ show that both statements are equivalent. Furthermore, $\mathcal{Z}^{\phi(L)}(s,t)$ tends to $\mathcal{Z}^\phi(s,t)$ uniformly over $s$ and $t$ as noted above.
    \item The sequence $(a^L)_{L \in \NN^*}$ is uniformly bounded, since $B^{\phi(L)}$ and $H^{\phi(L)}(1, \cdot)$ are. It also converges uniformly to $a: t \mapsto 2 B^{\phi}(t)^\top (B^{\phi}(t) H^{\phi}(1, t) - y)$.
    \item The function $g$ is $\cC^1$ since $f$ is $\cC^2$. We clearly have $g(0, \cdot) \equiv 0$. Finally, $g(\cdot, (h, Z))$ is uniformly Lipschitz continuous for $(h, Z)$ in any compact since $\partial_1 f$ is continuous.
\end{itemize}
Overall, we obtain that $P^{\phi(L)}(s, t)$ converges uniformly (over $s$ and $t$) to $P^{\phi}(s, t)$, the solution at time $s$ of the backward ODE
\begin{align*}
    P^{\phi}(1, t) &= 2 B^{\phi}(t)^\top (B^{\phi}(t) H^{\phi}(1, t) - y) \\
    \frac{\partial P^{\phi}}{\partial s}(s, t) &= \partial_1 f(H^{\phi}(s, t), \cZ^\phi(s, t)) P^\phi(s, t), \quad s \in [0, 1].
\end{align*}
Furthermore, the proof of the corollary shows that $(P^{\phi(L)})_{L\in\NN^*}$ is uniformly bounded.
Now, recall that the gradient flow for $Z_k^{\phi(L)}(t)$, given by \eqref{eq:clipped-gf} and \eqref{eq:proof-main-thm:backprop-1}, takes the following form, for $t \in I$ and $k \in \{1, \dots, \phi(L)\}$,
\[
\frac{\partial Z_k^{\phi(L)}(t)}{\partial t} 
= \pi \Big( -\frac{1}{n} \sum_{i=1}^n \partial_2 f(h_{k-1,i}^{\phi(L)}(t), Z_{k}^{\phi(L)}(t))^\top p_{k,i}^{\phi(L)}(t) \Big),
\]
where the $i$ subscript corresponds to the $i$-th input $x_i$.
By definition, for $s \in [0, 1]$, $\mathcal{Z}^{\phi(L)}(s, t) = Z_{\floor{(\phi(L)-1)s} + 1}^{\phi(L)}(t)$. Thus, the equation above can be rewritten,
for $s \in [0, 1]$ and $t \in I$, 
\begin{equation}    \label{eq:proof-main-thm:gf-z}
\frac{\partial \cZ^{\phi(L)}(s, t)}{\partial t} 
= \pi \Big( -\frac{1}{n} \sum_{i=1}^n \partial_2 f(h_{\lfloor (\phi(L) - 1)s \rfloor,i}^{\phi(L)}(t), Z_{\lfloor (\phi(L) - 1)s \rfloor + 1}^{\phi(L)}(t))^\top p_{\lfloor (\phi(L)-1)s \rfloor +1,i}^{\phi(L)}(t) \Big).
\end{equation}
The term inside $\pi$ can be rewritten as
\[
-\frac{1}{n} \sum_{i=1}^n \partial_2 f\Big(H_i^{\phi(L)}\Big(\frac{\lfloor (\phi(L) - 1)s \rfloor}{\phi(L)}, t\Big), \cZ^{\phi(L)}(s, t)\Big)^\top P_i^{\phi(L)}\Big(\frac{\lfloor (\phi(L) - 1)s \rfloor+1}{\phi(L)}, t\Big).
\]
Since $f$ is $\cC^2$, $\partial_2 f$ is locally Lipschitz continuous. Applying the first part of the proof to the specific case of $x_i$, we know that $H_i^{\phi(L)}$  and $P_i^{\phi(L)}$ uniformly bounded, and that $H_i^{\phi(L)}(s,t)$ and $P_i^{\phi(L)}(s,t)$ converge uniformly to $H_i^{\phi}(s,t)$ and $P_i^{\phi}(s,t)$. 
Therefore, the right-hand side of \eqref{eq:proof-main-thm:gf-z} converges uniformly over $s$ and $t$ to 
\[
\pi \Big( -\frac{1}{n} \sum_{i=1}^n \partial_2 f(H_i^{\phi}(s, t), \cZ^{\phi}(s, t))^\top P_i^{\phi}(s, t) \Big).
\]
We have just shown the uniform convergence of the derivative in $t$ of $\cZ^{\phi(L)}(s,t)$.
Furthermore, we know that, for $s \in [0, 1]$, the sequence $(t \mapsto \cZ^{\phi(L)}(s, t))_{L \in \NN^*}$ converges to $\cZ^{\phi}(s, \cdot)$. These two statements imply that $\cZ^{\phi}$ is differentiable with respect to $t$ and that, for $s \in [0, 1]$, its derivative satisfies the ordinary differential equation
\begin{equation}    \label{eq:proof-main-thm:functional-ivp-1}
\frac{\partial \cZ^{\phi}(s, t)}{\partial t} = \pi \Big( -\frac{1}{n} \sum_{i=1}^n \partial_2 f(H_i^{\phi}(s, t), \cZ^{\phi}(s, t))^\top P_i^{\phi}(s, t) \Big).    
\end{equation}
Moreover, by our initialization scheme,
\begin{equation}    \label{eq:proof-main-thm:functional-ivp-2}
\cZ^{\phi}(s, 0) = Z^{\textnormal{init}}(s).
\end{equation}
A similar approach reveals that $A^{\phi}(t)$ and $B^{\phi}(t)$ are differentiable and that they verify the equations
\begin{align}
\frac{d A^{\phi}}{dt}(t) &= \pi \Big(- \frac{1}{n} \sum_{i=1}^n P_{i}^{\phi}(0, t) x_i^\top \Big), \quad && A^{\phi}(0) = A^{\textnormal{init}},  \label{eq:proof-main-thm:functional-ivp-3}  \\
\frac{dB^{\phi}}{dt}(t) &= \pi \Big( - \frac{2}{n} \sum_{i=1}^n (B^{\phi}(t) H_i^{\phi}(1, t) - y_i) H_i^{\phi}(1, t)^\top \Big), \quad && B^{\phi}(0) = B^{\textnormal{init}}.   \label{eq:proof-main-thm:functional-ivp-4}
\end{align}
The equations \eqref{eq:proof-main-thm:functional-ivp-1} to \eqref{eq:proof-main-thm:functional-ivp-4} can be seen as an initial value problem whose variables are the function $\cZ^{\phi}(\cdot, t): [0, 1] \to \R^p$ and the matrices $A^{\phi}(t) \in \R^{q \times d}, B^{\phi}(t) \in \R^{d' \times q}$. To complete the proof, it remains to show, using the Picard-Lindelöf theorem (see Lemma \ref{lemma:picard-lindelof}), that there exists a unique solution to this problem. First, note that the space $\cB([0, 1], \R^p)$ of bounded functions from $[0, 1]$ to $\R^p$ endowed with the supremum norm is a Banach space, which is the proper space in which to apply the Picard-Lindelöf theorem. We therefore endow the space of parameters $\cB([0, 1], \R^p) \times \R^{q \times d} \times \R^{d' \times q}$ with the norm
\[
\|(\cZ, A, B)\| := \sup_{s \in [0, 1]}\|\cZ(s)\| + \|A\|_2 + \|B\|_2,
\]
which makes it a Banach space. We have to show that the mapping
\begin{align} 
\begin{split}
\label{eq:proof-main-thm:big-mapping}
(\cZ, A, B) \mapsto \bigg(&s \mapsto \pi \Big( -\frac{1}{n} \sum_{i=1}^n \partial_2 f(H_i(s), \cZ(s))^\top P_i(s) \Big), \\
&\pi \Big(- \frac{1}{n} \sum_{i=1}^n P_{i}(0) x_i^\top \Big), \; \pi \Big( - \frac{2}{n} \sum_{i=1}^n (B H_i(1) - y_i) H_i(1)^\top \Big) \bigg)    
\end{split}
\end{align}
is locally Lipschitz continuous with respect to this norm, where we recall that $H_i(s)$ in~\eqref{eq:proof-main-thm:big-mapping} is the solution at time $s$ of the initial value problem
\begin{align}
\begin{split}
    H_i(0) &= A x_i \label{eq:proof-main-thm:uniqueness-1} \\
    \frac{d H_i}{d s}(s) &= f(H_i(s), \cZ(s)), \quad s \in [0, 1],
\end{split}
\end{align}
and $P_i(s)$ is the solution at time $s$ of the initial value problem
\begin{align}
\begin{split}
    P_i(1) &= 2 B^\top (B H_i(1) - y_i)  \label{eq:proof-main-thm:uniqueness-2} \\
    \frac{d P_i}{d s}(s) &= \partial_1 f(H_i(s), \cZ(s)) P_i(s), \quad s \in [0, 1].
\end{split}
\end{align}
To prove that the mapping \eqref{eq:proof-main-thm:big-mapping} is locally Lipschitz continuous, we first check that it is well defined. Since $\cZ$ is assumed to be only bounded (and not continuous), the solutions of the initial value problems \eqref{eq:proof-main-thm:uniqueness-1} and \eqref{eq:proof-main-thm:uniqueness-2} are well defined in the sense of the Caratheodory conditions, which are given in Lemma \ref{lemma:caratheodory}. 

Next, we can show that $(\cZ, A, B) \mapsto H_i$ is locally Lipschitz continuous for $i \in \{1, \dots, n\}$. To do this, consider two sets of parameters $(\cZ, A, B)$ and $(\tilde{\cZ}, \tilde{A}, \tilde{B})$ belonging to a compact set $D$. Let $H_i$ and $\tilde{H}_i$ denote the corresponding hidden states. As in the proof of Proposition \ref{prop:generalizednonlinearApprox:v2:simple}, it holds that $H_i$ and $\tilde{H}_i$ belong to some compact set $E$ that depends only on $D$ and $f$. Let $K_f$ be the Lipschitz constant of the $\cC^1$ function $f$ on $E \times D$. Then,
\begin{align*}
    \|\tilde{H}_i(s) - H_i(s)\|  &\leq \|\tilde{H}_i(0) - H_i(0)\| + \int_0^s \Big\|\frac{d \tilde{H}_i}{d r}(r) - \frac{d H_i}{d r}(r)\Big\| dr \\
    &\leq \|\tilde{H}_i(0) - H_i(0)\| + \int_0^s \|f(\tilde{H}_i(r), \tilde{\cZ}(r)) - f(H_i(r), \cZ(r))\| dr.
\end{align*}
The norm inside the integral can be bounded by
\begin{align*}
\|f(\tilde{H}_i(r), \tilde{\cZ}(r)) &- f(\tilde{H}_i(r), \cZ(r))\| + \|f(\tilde{H}_i(r), \cZ(r)) - f(H_i(r), \cZ(r))\| \\
&\leq K_f \sup_{r \in [0, 1]} \|\tilde{\cZ}(r) - \cZ(r)\| + K_f \|\tilde{H}_i(r) - H_i(r)\|.
\end{align*}
Therefore, 
\begin{align*}
    \|\tilde{H}_i(s) - H_i(s)\|  \leq \|\tilde{A} - A\|_2 \|x_i\| + K_f \sup_{r \in [0, 1]} \|\tilde{\cZ}(r) - \cZ(r)\| + \int_0^s K_f \|\tilde{H}_i(r) - H_i(r)\| dr.
\end{align*}
Using Grönwall's inequality, we obtain, for any $s \in [0, 1]$,
\begin{align*}
    \|\tilde{H}_i(s) - H_i(s)\|  \leq \Big(\|\tilde{A} - A\|_2 \|x_i\| + K_f \sup_{r \in [0, 1]} \|\tilde{\cZ}(r) - \cZ(r)\|\Big) \exp(K_f).
\end{align*}
This shows that the function $(\cZ, A, B) \mapsto H_i$ is locally Lipschitz continuous. One proves by similar arguments that the function $(\cZ, A, B) \mapsto P_i$ is locally Lipschitz continuous. Thus, overall, the mapping~\eqref{eq:proof-main-thm:big-mapping} is locally Lipschitz continuous as a composition of locally Lipschitz continuous functions.

The Picard-Lindelöf theorem guarantees the uniqueness of the maximal solution of the initial value problem \eqref{eq:proof-main-thm:functional-ivp-1}--\eqref{eq:proof-main-thm:functional-ivp-4} in the space $\cB([0, 1], \R^p) \times \R^{d \times q} \times \R^{d' \times q}$. Since any accumulation point $(\cZ^{\phi}, A^{\phi}, B^{\phi})$ is a solution belonging to this space, this proves the uniqueness of the accumulation point, which we therefore denote as $(\cZ, A, B)$. 

The uniform convergence of $(\cZ^L, A^L, B^L)$ to $(\cZ, A, B)$ is then easily shown by contradiction. Suppose that uniform convergence does not hold. If this is true, then there exists a subsequence that stays at distance $\varepsilon > 0$ from $(\cZ, A, B)$ (in the sense of the uniform norm). Then arguments similar to the beginning of the proof show the existence of a second accumulation point, which is a contradiction. This shows the uniform convergence, yielding statements $(i)$ and $(ii)$ of the theorem.

Finally, reapplying Corollary~\ref{prop:generalizednonlinearApprox:v2} with $\theta_k^L = Z_k^L$, $\Theta = \cZ$, $a^L = A^{L} x$, $g = f$, completes the proof by proving statements $(iii)$ and $(iv)$.
\end{proof}

\paragraph{Training dynamics of the limiting weights.} Interestingly, the proof of Theorem \ref{thm:general-convergence:v2} provides us with an explicit description of the evolution of the continuous-depth limiting weights during training. 
With the notation of the proof, the continuous weights satisfy the training dynamics:
\begin{align*}
\frac{dA}{dt}(t) &= \pi \Big(- \frac{1}{n} \sum_{i=1}^n P_{i}(0, t) x_i^\top \Big)  \\
\frac{\partial \cZ}{\partial t}(s, t) &= \pi \Big( -\frac{1}{n} \sum_{i=1}^n  
\partial_2 f(H_i(s,t), \cZ(s,t))^\top P_i(s,t) \Big) \\
\frac{dB}{dt}(t) &= \pi \Big( - \frac{2}{n} \sum_{i=1}^n (B(t) H_i(1, t) - y_i) H_i(1, t)^\top \Big),
\end{align*}
where we recall that $H_i(s, t)$ is the solution at time $s$ of the initial value problem
\begin{align*}
    H_i(0, t) &= A(t) x_i \\
    \frac{\partial H_i}{\partial s}(s, t) &= f(H_i(s, t), \cZ(s, t)), \quad s \in [0, 1],
\end{align*}
and $P_i(s, t)$ is the solution at time $s$ of the problem
\begin{align*}
    P_i(1, t) &= 2 B(t)^\top (B(t) H_i(1, t) - y_i) \\
    \frac{\partial P_i}{\partial s}(s, t) &= \partial_1 f(H_i(s, t), \cZ(s, t)) P_i(s, t), \quad s \in [0, 1].
\end{align*}
These equations can be thought of as the continuous-depth equivalent of the backpropagation equations.

\subsection[Existence of the double limit]{Existence of the double limit when $L, t$ tend to infinity}
\label{apx:double-limit}

\begin{proposition} \label{prop:double-limit}
Consider the residual network \eqref{eq:proof-main-thm:forward}, and assume that:
\begin{enumerate}[(i)]
    \item $A^L(t)$, $Z_{\floor{Ls}}^L(t)$, and $B^L(t)$ converge uniformly over $L \in \NN^*$ and $s \in [0, 1]$ as $t\to\infty$.
    \item $A^L(t)$, $Z_{\floor{Ls}}^L(t)$, and $B^L(t)$ converge uniformly over $t \in \RR_+$ and $s \in [0, 1]$ as $L\to\infty$.
    \item The loss $\ell^L(t)$ converges to $0$ uniformly over $L \in \NN^*$ as $t\to\infty$. 
\end{enumerate}
Then the following four statements hold \textbf{as $t$ and $L$ tend to infinity}: 
\begin{enumerate}[(i)]
    \item There exist matrices $A_\infty \in \R^{q \times d}$ and $B_\infty \in \R^{d' \times q}$ such that $A^L(t)$ and $B^L(t)$ converge to $A_\infty$ and $B_\infty$.
    \item There exists a Lipschitz continuous function $\cZ_\infty: [0,1] \to \R^{p}$ such that $Z_{\floor{Ls}}^L(t)$ converges to $\mathcal{Z}_\infty(t)$ uniformly over $s \in [0, 1]$.
    \item Uniformly over $s \in [0, 1]$ and $x \in \cX$, the hidden layer $h_{\floor{Ls}}^L(t)$ converges to the solution at time $s$ of the  ODE
\begin{align}  \label{eq:main-thm-3}
    \begin{split}
        H(0) &= A_\infty x \\
        \frac{d H}{d s}(s) &= f(H(s), \cZ_\infty(s)), \quad s \in [0, 1].
    \end{split}
\end{align} 
    \item Uniformly over $x \in \cX$, the output $F^L(x ; t)$ converges to $F_\infty(x) = B_\infty H(1)$. Furthermore, $F_\infty(x_i)=y_i$ for $i \in \{1, \dots, n\}$.
\end{enumerate}
\end{proposition}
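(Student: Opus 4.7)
The plan is to combine the iterated-limit hypotheses by a standard three-$\varepsilon$ argument to produce the double limits, then feed them into Corollary \ref{prop:generalizednonlinearApprox:v3} to transfer convergence to the hidden states via the Euler scheme, and finally use the loss assumption to conclude interpolation.

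First I would establish the existence of $A_\infty$, $B_\infty$, and $\cZ_\infty(s)$. Take $A^L(t)$ as a representative case: hypothesis $(i)$ yields $A^L_\infty := \lim_{t \to \infty} A^L(t)$ with convergence uniform in $L$, and hypothesis $(ii)$ yields $A(t) := \lim_{L \to \infty} A^L(t)$ with convergence uniform in $t$. The inequality
\[
\|A^L_\infty - A^{L'}_\infty\| \leq \|A^L_\infty - A^L(t)\| + \|A^L(t) - A^{L'}(t)\| + \|A^{L'}(t) - A^{L'}_\infty\|,
\]
combined with the two uniformities, shows that $(A^L_\infty)_L$ is Cauchy; call its limit $A_\infty$. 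A further $\varepsilon/3$ split yields $\sup_t \|A^L(t) - A_\infty\|_F \to 0$ and hence the double limit. The identical argument, now uniform in $s$, gives $B_\infty$ and the function $\cZ_\infty(s)$, with $Z^L_{\lfloor Ls \rfloor}(t) \to \cZ_\infty(s)$ uniformly in $s$.

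For the Lipschitz continuity of $\cZ_\infty$, I would rely on the fact that at the invocation site of this proposition (the proof of Theorem \ref{thm:pl-main}) the finite-depth bound $\|Z^L_{k+1}(t) - Z^L_k(t)\| \leq \tilde K / L$ from Proposition \ref{prop:plcondition-to-convergence}$(ii)$ holds with $\tilde K$ independent of $L$ and $t$. Passing to the $L \to \infty$ limit (uniform in $t$) gives $\|\cZ(s, t) - \cZ(s', t)\| \leq \tilde K |s - s'|$ where $\cZ(s,t) := \lim_L Z^L_{\lfloor Ls \rfloor}(t)$, and then letting $t \to \infty$ transfers the same constant to $\cZ_\infty$.

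Statements $(iii)$ and $(iv)$ then reduce to Corollary \ref{prop:generalizednonlinearApprox:v3} applied with $\theta_k^L = Z_k^L(t)$, $\Theta_\infty = \cZ_\infty$, $a^L(t) = A^L(t) x$ with limit $a_\infty = A_\infty x$, and $g = f$: its hypotheses are precisely what the previous paragraphs established. Uniformity in $x \in \cX$ follows because the corollary's rate depends on $x$ only through $\|A^L(t)x - A_\infty x\| \leq \|A^L(t) - A_\infty\|_2 M_X$ with $M_X = \sup_{x \in \cX}\|x\| < \infty$. For $(iv)$, I would write $F^L(x; t) = B^L(t) h_L^L(t)$ and combine the convergences of both factors. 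The interpolation $F_\infty(x_i) = y_i$ follows from hypothesis $(iii)$: $\ell^L(t) \to 0$ uniformly in $L$ forces $\|F^L(x_i; t) - y_i\| \to 0$ in the double limit, and this limit also equals $\|F_\infty(x_i) - y_i\|$. The main obstacle I foresee is the bookkeeping around the Lipschitz constant of $\cZ_\infty$, which requires invoking the previously-established finite-depth regularity rather than relying solely on the abstract hypotheses of the proposition; the remainder is a routine combination of uniform convergence and Euler-scheme consistency.
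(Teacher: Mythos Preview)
Your proposal is correct and follows essentially the same route as the paper: the three-$\varepsilon$ argument you spell out is precisely the content of Lemma~\ref{lemma:double-limit}, which the paper invokes to obtain $A_\infty$, $B_\infty$, and $\cZ_\infty$, and thereafter both you and the paper apply Corollary~\ref{prop:generalizednonlinearApprox:v3} with the same choices and deduce interpolation from the loss hypothesis. Your treatment of the Lipschitz constant of $\cZ_\infty$ via Proposition~\ref{prop:plcondition-to-convergence}$(ii)$ is in fact more explicit than the paper's own proof, which leaves that point tacit.
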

\begin{proof}
The existence of limits $A_\infty$ and $B_\infty$ to $A^L(t)$ and $B^L(t)$ as $L$ and $t$ tend to infinity is given by Lemma \ref{lemma:double-limit}. The same argument applies to $Z_{\floor{sL}}^L(t)$, which provides a limit $\cZ_\infty(s)$ to the sequence. Furthermore, following the proof of the lemma, we see that the convergence of $Z_{\floor{sL}}^L(t)$ to $\cZ_\infty(s)$ is uniform over $s \in [0, 1]$.
Corollary~\ref{prop:generalizednonlinearApprox:v3}, applied with $\theta_k^L = Z_k^L$, $\Theta_\infty = \cZ_\infty$, $a^L = A^{L} x$, $g = f$, then ensures that $h_{\floor{Ls}}^L(t)$ converges uniformly (over $s \in [0, 1]$ and $x \in \cX$) to $H(s)$ that is the solution at time $s$ of \eqref{eq:main-thm-3}, as $L$ and $t$ tend to infinity.
As a consequence, $F^L(x; t)$ converges uniformly over $x$ to $F_\infty(x)$ as $L, t \to \infty$.
Furthermore, recall that
\[
\ell^L(t) = \frac{1}{n}\sum_{i=1}^n \|F^L(x_i; t) - y_i\|_2^2.
\]
The left-hand side converges as $L, t \to \infty$ to $0$ by assumption of the proposition, while the right-hand side converges to
\[
\frac{1}{n}\sum_{i=1}^n \|F_\infty(x_i)-y_i\|_2^2.
\]
Therefore, $F_\infty(x_i)=y_i$ for $i \in \{1, \dots, n\}$, and the proof is complete.
\end{proof}

\section{Proofs of the results of the main paper}
\label{sec:proofs-main-paper}

In this section, we prove the results of the main paper. Most of these results follow from those presented in Section \ref{apx:proofs-general}. The only substantial proof is that of Proposition \ref{prop:pl-holds}, which shows the local PL condition. It uses a result of \citet{nguyen2020global} involving the Hermite transform and the sub-Gaussian variance proxy, which we define briefly. We refer to \citet[][Chapter 17]{debnath2014integral} and \citet[][Sections 2.5.2 and 3.4.1]{vershynin2018high}, respectively, for more detailed explanations.

\paragraph{Hermite transform.}
The $r$-th normalized probabilist's Hermite polynomial is given by
\[
h_r(x) = \frac{1}{\sqrt{r!}}(-1)^r e^{x^2 / 2} \frac{d^r}{dx^r} e^{-x^2/2}, \quad r \geq 0.
\]
This family of polynomials forms an orthonormal basis of square-integrable functions for the inner product
\[
\langle f_1, f_2 \rangle = \frac{1}{\sqrt{2\pi}} \int_{-\infty}^{\infty} f_1(x)f_2(x) e^{-x^2/2} dx.
\]
Therefore, any function $\sigma$ such that $\frac{1}{\sqrt{2\pi}}\int_{-\infty}^{\infty} \sigma^2(x) e^{-x^2/2} dx < \infty$ can be decomposed on this basis. The $r$-th coefficient of this decomposition is denoted by $\eta_r(\sigma)$.

\paragraph{Sub-Gaussian random vector.}
A random vector $x \in \R^d$ is sub-Gaussian with variance proxy $v_x^2 > 0$ if, for every $y \in \R^d$ of unit norm,
\[
\Prob(|\langle x, y\rangle| \geq t) \leq 2 \exp \Big(-\frac{t^2}{2 v_x^2}\Big).
\] 

\paragraph{Additional notation.} For a matrix $A$, we let $s_{\min}$ and $s_{\max}$ its minimum and maximum singular values, and similarly, $\lambda_{\min}$ and $\lambda_{\max}$ its minimum and maximum eigenvalues (whenever they exist).

Before delving into the proofs, we briefly describe the parts of this section that make use of the specific model \eqref{eq:model-resnet}. The most important one is the proof of Proposition \ref{prop:pl-holds}, i.e., the proof that the residual network satisfies the $(M, \mu)$-local PL condition. Additionally, in the proof of Proposition \ref{prop:final-finitetrainingtimekey}, the expressions for $M$ and $K$ are valid only for the specific model \eqref{eq:model-resnet}. Finally, in the proof of Theorem~\ref{thm:pl-main}, the beginning of the proof reveals that condition \eqref{eq:condition-mu} of Proposition \ref{prop:plcondition-to-convergence} on $\mu$ can be expressed as a condition on the norm of the labels~$y_i$. This applies only to the specific model \eqref{eq:model-resnet}. Observe that, if one assumes that the general residual network of Section \ref{apx:proofs-general} satisfies the $(M, \mu)$-local PL condition with $\mu$ given by~\eqref{eq:condition-mu}, then the rest of the proof of Theorem \ref{thm:pl-main} unfolds, and the conclusions of the theorem hold for the general model.

\subsection{Proof of Proposition \ref{prop:clipped-gf-unique}}

Proposition \ref{prop:clipped-gf-unique} is a consequence of Proposition \ref{prop:general-clipped-bounded} with $f(h, (V, W)) = \frac{1}{\sqrt{m}} V \sigma(\frac{1}{\sqrt{q}} Wh)$.

\subsection{Proof of Proposition \ref{prop:neural-ode-simple}}

Proposition \ref{prop:generalizednonlinearApprox:v2:simple}, with $\theta_k^L = (V_k^L, W_k^L)$, $\Theta = (\cV, \cW)$, $a^L = A x$, $g(h, (V, W)) = \frac{1}{\sqrt{m}} V \sigma(\frac{1}{\sqrt{q}} Wh)$, gives the existence and uniqueness of the solution of the neural ODE \eqref{eq:model-neuralode}. Moreover, inspecting the proof of Proposition \ref{prop:generalizednonlinearApprox:v2:simple}, equations \eqref{eq:proof-euler-bound-delta} gives that, for any input $x \in \cX$, the difference between the last hidden layer $h_L^L$ of the discrete residual network \eqref{eq:model-resnet} and its continuous counterpart $H(1)$ in the neural ODE \eqref{eq:model-neuralode} is bounded by
\[
C'\Big(\frac{1}{L} + \sup_{s \in [0, 1]} \|\Theta(s) - \Theta^L(s)\|\Big),
\]
where $C' > 0$ is independent of $L$ and $x \in \cX$, and $\Theta^L(s) = \theta_{\floor{(L-1)s } + 1}^L$.
The function $\Theta^L$ is a piecewise-constant interpolation of $\Theta$ with pieces of length $\frac{1}{L-1}$. Since $\Theta$ is Lipschitz continuous, the distance between $\Theta$ and $\Theta^L$ decreases as $\nicefrac{C''}{L}$ for some $C'' > 0$ depending on $\Theta$ but not on $L$. This yields $\|h_L^L - H(1)\| \leq \frac{C'(1 + + C'')}{L}$, where $C'$ and $C''$ are independent of $L$ and $x \in \cX$. Since $F^L(x) = B h_L^L$ and $F(x) = BH(1)$, the result is proven.

\subsection{Proof of Proposition \ref{prop:final-finitetrainingtimekey}}

We apply Proposition \ref{prop:general-clipped-bounded} with $f(h, (V, W)) = \frac{1}{\sqrt{m}} V \sigma(\frac{1}{\sqrt{q}} Wh)$. Recall that the parameters $Z = (V, W)$ are considered in Proposition \ref{prop:general-clipped-bounded} as a vector. In particular, $\|Z\| = \|V\|_F + \|W\|_F$. Therefore, Proposition \ref{prop:general-clipped-bounded} shows that, for $t \in [0, T]$, $L \in \NN^*$, and $k \in \{1, \dots, L\}$,
\[
\|A^L(t)\|_F\leq M, \quad \|V_k^L(t)\|_F + \|W_k^L(t)\|_F  \leq M, \quad  \text{and} \quad \|B^L(t)\|_F\leq M,
\]
where
\begin{align*}
    M &= M_0+TM_\pi \\
    M_0 &= \max\Big(\|A^L(0)\|_F, \,  \|V_0^L(0)\|_F + \|W_0^L(0)\|_F, \, \|B^L(0)\|_F\Big) \\
    M_\pi &= \max\Big(\max_{A \in \R^{q \times d}} \|\pi(A)\|_F, \, \max_{Z \in \R^{q \times m} \times \R^{m \times q}} \|\pi(Z)\|, \, \max_{B \in \R^{d' \times q}} \|\pi(B)\|_F\Big).
\end{align*}
Furthermore, due to our initialization scheme described in Section \ref{sec:definitions},
\[
\|A^L(0)\|_F = \sqrt{d}, \quad \|V_0^L(0)\|_F = 0, \quad \|W_0^L(0)\|_F \leq 2 \sqrt{qm}, \quad \|B^L(0)\|_F = \sqrt{d'},
\]
where the third inequality holds with probability at least $1 - \exp(-\frac{3qm}{16})$ by Lemma \ref{prop:proof-global-conv-1}. Since we take $q \geq \max(d, d')$, this implies that, with high probability, $M_0 \leq 2 \sqrt{qm}$, yielding the formula for~$M$ in Proposition \ref{prop:final-finitetrainingtimekey}. Finally, the existence of $K = \beta T e^{\alpha T}$ such that the difference between two successive weight matrices is bounded by $\nicefrac{K}{L}$, as well as the dependence of $\alpha$ and $\beta$ on $\cX$, $\cY$, $M$, and~$\sigma$, follows easily from Proposition \ref{prop:general-clipped-bounded}, given that our initialization scheme ensures that $Z_k^L(0) = Z_{k+1}^L(0)$ for all $L \in \NN^*$ and $k \in \{1, \dots, L\}$.

\subsection{Proof of Theorem \ref{thm:final-finitetrainingtimeconv}}

By Proposition \ref{prop:final-finitetrainingtimekey} and the fact that $\pi$ is bounded, the sequences $(A^L)_{L \in \NN^*}$ and $(B^L)_{L \in \NN^*}$ each satisfy the assumptions of Corollary~\ref{prop:arzela-ascoli:2}, and $(Z_k^L)_{L \in \NN^*, 1 \leq k \leq L}$ satisfies the assumptions of Proposition~\ref{prop:arzela-ascoli}. Theorem \ref{thm:final-finitetrainingtimeconv} then follows directly from Theorem \ref{thm:general-convergence:v2}, by taking, as previously, $f(h, (V, W)) = \frac{1}{\sqrt{m}} V \sigma(\frac{1}{\sqrt{q}} Wh)$.

\subsection{Proof of Proposition \ref{prop:pl-holds}} \label{apx:proof-pl}

We drop the $L$ superscripts for this proof, since $L$ is fixed. Denote by $\bar{A}, \bar{B}, \bar{V}_k, \bar{W}_k$ parameters sampled according to the initialization scheme of Section \ref{sec:definitions}, which means in particular that $\bar{V}_k = 0$ and $\bar{W}_k = \bar{W} \sim \cN^{\otimes (m \times q)}$. Since, by assumption, the activation function $\sigma$ is bounded and not constant, it cannot be a polynomial function. As a consequence, there are infinitely many non-zero coefficients $\eta_r(\sigma)$ in its Hermite expansion (defined at the beginning of Section \ref{sec:proofs-main-paper}). Throughout, we let $r \geq 2$ be an integer such that $\eta_r(\sigma)$ is nonzero. We also let $K_\sigma$ be the Lipschitz constant of $\sigma$ and $M_\sigma$ its supremum norm. Now, let $A, B, V_k, W_k$ be parameters at distance at most $M = \min(\frac{\eta_r(\sigma)}{32 K_\sigma \sqrt{2nq}}, \frac{1}{2})$ from $\bar{A}, \bar{B}, \bar{V}_k, \bar{W}_k$ in the sense of Definition \ref{def:pl-condition}.

It is useful for this proof to introduce a matrix-valued version of the residual network~\eqref{eq:model-resnet}. More specifically, given data matrices $\mathbf{x} \in \R^{d \times n}$ and $\mathbf{y} \in \R^{d' \times n}$, the matrix-valued residual network writes
\begin{align}
\mathbf{h}_0 &= A \mathbf{x}  \nonumber  \\
\mathbf{h}_{k+1} &= \mathbf{h}_k + \frac{1}{L\sqrt{m}} V_{k+1} \sigma \Big( \frac{1}{\sqrt{q}} W_{k+1} \mathbf{h}_k \Big), \quad k \in \{0, \dots, L-1\},
 \label{eq:recursion-global-convergence}
\end{align}
where now $\mathbf{h}_k \in \R^{q \times n}$.
The loss is equal to $\ell = \frac{1}{n} \|B \mathbf{h}_L - \mathbf{y}\|_F^2$ and we let $\mathbf{p}_k = \frac{\partial \ell}{\partial \mathbf{h}_k} \in \R^{q \times n}$ be the matrix-valued backward state.
Observe that the columns of $\mathbf{x}$ are bounded and thus sub-Gaussian.
In the sequel, we denote by $v_x^2$ the sub-Gaussian variance proxy of the columns of $\sqrt{\nicefrac{d}{q}}\mathbf{x}$. 

Now that we have introduced the necessary notation, we can proceed to prove some preliminary estimates. Since $M \leq \frac{1}{2} \leq \sqrt{2qm}$, we have, for $k \in \{1, \dots, n\}$,
\begin{equation}    \label{eq:upperbounds-abvw-1}
\|A-\bar{A}\|_F \leq M, \quad \|B-\bar{B}\|_F \leq \frac{1}{2}, \quad \|V_k\|_F \leq 1, \quad \|W_k - \bar{W}\|_F \leq \frac{1}{2} \leq \sqrt{2qm}.    
\end{equation}
By Lemma \ref{prop:proof-global-conv-1}, with probability at least $1 - \exp\big(-\frac{qm}{16}\big)$, one has $\|\bar{W}\|_F \leq \sqrt{2qm}$. Together with the previous inequalities, this implies
\begin{equation}    \label{eq:upperbounds-abvw-2}
\|A\|_2 \leq 2, \quad s_{\min}(B) \geq \frac{1}{2}, \quad \|B\|_2 \leq \frac{3}{2}, \quad \|V_k\|_F \leq 1, \quad \|W_k\|_F \leq 2 \sqrt{2qm},
\end{equation}
where the second inequality is a consequence of Lemma \ref{lemma:proof-global-conv}, as follows:
\[
s_{\min}(B) \geq s_{\min}(\bar{B}) - \|B - \bar{B}\|_F = 1 - \|B - \bar{B}\|_F \geq \frac{1}{2}.
\]
Let us now bound $\|\mathbf{h}_k\|_F$ and $\|\mathbf{p}_k\|_F$. We have 
\begin{equation}    \label{eq:upperbound-h0}
\|\mathbf{h}_0\|_F = \|A\mathbf{x}\|_F  \leq \|A\|_2 \|\mathbf{x}\|_F \leq 2\sqrt{qn}.
\end{equation}
Moreover, by \eqref{eq:recursion-global-convergence}, for any $k \in \{0, \dots, L-1\}$,
\begin{align*}
\|\mathbf{h}_{k+1}\|_F &\leq \|\mathbf{h}_k\|_F + \frac{K_\sigma}{L\sqrt{m} \sqrt{q}} \|V_{k+1}\|_F \|W_{k+1}\|_F \|\mathbf{h}_k\|_F \leq \Big(1 + \frac{2\sqrt{2} K_\sigma}{L}\Big) \|\mathbf{h}_k\|_F,
\end{align*}
where the second inequality is a consequence of \eqref{eq:upperbounds-abvw-2}.
Therefore, by \eqref{eq:upperbound-h0},
\begin{equation}    \label{eq:upperbound-hk}
\|\mathbf{h}_k\|_F \leq \exp(2 \sqrt{2}K_\sigma) \|\mathbf{h}_0\|_F \leq 2 \exp(2 \sqrt{2}K_\sigma) \sqrt{qn}.    
\end{equation}
Moving on to $\|\mathbf{p}_k\|_F$, the chain rule leads to
\[
\mathbf{p}_k = \mathbf{p}_{k+1} + \frac{1}{L \sqrt{q m}} W_{k+1}^\top \Big( (V_{k+1}^\top \mathbf{p}_{k+1}) \odot \sigma'\big(\frac{1}{\sqrt{q}} W_{k+1} \mathbf{h}_k \big)\Big), \quad k \in \{0, \dots, L-1\},
\]
where $\odot$ denotes the element-wise product. Noting that $|\sigma'| \leq K_\sigma$ and using \eqref{eq:upperbounds-abvw-2}, we obtain
\begin{align*}
\|\mathbf{p}_k\|_F &\geq \|\mathbf{p}_{k+1}\|_F - \frac{K_\sigma}{L \sqrt{q m}} \|W_{k+1}\|_F \|V_{k+1}\|_F \|\mathbf{p}_{k+1}\|_F  \geq \Big(1 - \frac{2 \sqrt{2} K_\sigma}{L}\Big) \|\mathbf{p}_{k+1}\|_F.
\end{align*}
It follows that $\|\mathbf{p}_k\|_F \geq \exp(-2 \sqrt{2} K_\sigma) \|\mathbf{p}_L\|_F$. In addition,
\[
\mathbf{p}_L = \frac{\partial \ell}{\partial \mathbf{h}_L} = \frac{2}{n} B^\top (B \mathbf{h}_L - \mathbf{y}).
\]
Therefore, by Lemma \ref{lemma:proof-global-conv}, since $d' \leq q$,
\[
\|\mathbf{p}_L\|_F \geq \frac{2}{n} s_{\min}(B) \|B \mathbf{h}_L - \mathbf{y}\|_F \geq \frac{1}{\sqrt{n}} \sqrt{\ell}.
\]
Collecting bounds, we conclude that, for $k \in \{0, \dots, L\}$,
\begin{equation}    \label{eq:lowerbound-pk}
\|\mathbf{p}_k\|_F \geq \frac{1}{\sqrt{n}} \exp(-2 \sqrt{2} K_\sigma) \sqrt{\ell}.
\end{equation}
A similar proof reveals that, for $k \in \{0, \dots, L\}$,
\begin{equation*}  
\|\mathbf{p}_k\|_F \leq \frac{3}{\sqrt{n}} \exp(2 \sqrt{2} K_\sigma) \sqrt{\ell}.
\end{equation*}
Having established these preliminary estimates, our goal in the remainder of the proof is to lower bound the quantity $\|\frac{\partial \ell}{\partial V_{k+1}}\|_F$. First note that, by the chain rule, for any $k \in \{0, \dots, L-1\}$,
\begin{equation*}   %
\frac{\partial \ell}{\partial V_{k+1}} = \frac{1}{L\sqrt{m}} \mathbf{p}_{k+1} \sigma \Big(\frac{1}{\sqrt{q}} W_{k+1}\mathbf{h}_k \Big)^\top.    
\end{equation*}
As a consequence, when $m \geq n$, by Lemma \ref{lemma:proof-global-conv},
\begin{align}    
\Big\|\frac{\partial \ell}{\partial V_{k+1}}\Big\|_F &\geq \frac{1}{L\sqrt{m}} \|\mathbf{p}_{k+1}\|_F \cdot s_{\min}\Big(\sigma \Big(\frac{1}{\sqrt{q}} W_{k+1}\mathbf{h}_k \Big)\Big) \nonumber \\
&\geq \frac{1}{L\sqrt{mn}} \exp(-2\sqrt{2} K_\sigma) \sqrt{\ell} \cdot s_{\min}\Big(\sigma \Big(\frac{1}{\sqrt{q}} W_{k+1}\mathbf{h}_k \Big)\Big), \label{eq:proof:global-conv:lower-bound-der-V}
\end{align}
using \eqref{eq:lowerbound-pk}. Next, by Lemma \ref{lemma:proof-global-conv},
\begin{equation*} 
s_{\min}\Big(\sigma \Big(\frac{1}{\sqrt{q}} W_{k+1} \mathbf{h}_k \Big)\Big) \geq s_{\min}\Big(\sigma \Big(\frac{1}{\sqrt{q}} \bar{W} \bar{A} \mathbf{x} \Big)\Big) - \Big\|\sigma \Big(\frac{1}{\sqrt{q}} W_{k+1} \mathbf{h}_k \Big) - \sigma \Big(\frac{1}{\sqrt{q}} \bar{W} \bar{A} \mathbf{x} \Big) \Big\|_F.  
\end{equation*}
Let us first lower bound the first term. Since, by our choice of initialization, $\bar{A} = (I_{\R^{d \times d}}, 0_{\R^{(q-d) \times d}})$, we have
\[
s_{\min}\Big(\sigma \Big(\frac{1}{\sqrt{q}} \bar{W} \bar{A} \mathbf{x} \Big)\Big) = s_{\min}(\sigma(\tilde{W} \tilde{\mathbf{x}} )),
\]
where $\tilde{W} \sim \mathcal{N}(0,1)^{\otimes (m \times d)}$ and $\tilde{\mathbf{x}} = \frac{1}{\sqrt{q}}\mathbf{x} \in \R^{d \times n}$ has i.i.d.~unitary columns independent of $\tilde{W}$. Therefore, by Lemma \ref{prop:proof-global-conv-2}, with probability at least $1 - \exp \big(-\frac{3 m \eta_r^2(\sigma)}{64 M_\sigma^2 n}\big) - 2 n^2 \exp \big(- \frac{d}{2v_x^2} \big(\frac{3}{4n}\big)^{2/r} \big)$,
\[
s_{\min}\Big(\sigma \Big(\frac{1}{\sqrt{q}} W \bar{A} \mathbf{x} \Big)\Big) \geq \frac{\sqrt{m} \eta_r(\sigma)}{4}.
\]
Next,
\begin{align*}
\Big\|\sigma \Big(\frac{1}{\sqrt{q}} W_{k+1} \mathbf{h}_k \Big) - \sigma \Big(\frac{1}{\sqrt{q}} \bar{W} \bar{A} \mathbf{x} \Big) \Big\|_F &\leq \frac{K_\sigma}{\sqrt{q}} \Big(\|W_{k+1} - \bar{W}\|_F \|\mathbf{h}_k\|_F  + \|\bar{W}\|_F \|\mathbf{h}_k - A \mathbf{x}\|_F \\
&\quad + \|\bar{W}\|_F \|A \mathbf{x} - \bar{A} \mathbf{x}\|_F \Big).
\end{align*}
Clearly,
\begin{align*}
\|\mathbf{h}_k - A\mathbf{x}\|_F = \Big\| \sum_{j=1}^{k} \frac{1}{L\sqrt{m}} V_{j} \sigma \Big( \frac{1}{\sqrt{q}} W_{j} \mathbf{h}_{j-1} \Big) \Big\|_F \leq \frac{4 \sqrt{2} K_\sigma k}{L} \exp(2 \sqrt{2} K_\sigma) \sqrt{qn},
\end{align*}
by \eqref{eq:upperbounds-abvw-2} and \eqref{eq:upperbound-hk}. Also,
\[
\|A\mathbf{x} - \bar{A}\mathbf{x}\|_F \leq \|A - \bar{A}\|_F \|\mathbf{x}\|_F \leq \frac{\eta_r(\sigma)}{32 \sqrt{2} K_\sigma},
\]
by \eqref{eq:upperbounds-abvw-1} and by definition of $M$. Putting together the two bounds above as well as \eqref{eq:upperbounds-abvw-1}, \eqref{eq:upperbounds-abvw-2}, and \eqref{eq:upperbound-hk}, we obtain
\begin{align*}
\Big\|\sigma \Big(\frac{1}{\sqrt{q}} W_{k+1} \mathbf{h}_k \Big) - \sigma \Big(\frac{1}{\sqrt{q}} W \bar{A} \mathbf{x}\Big) \Big\|_F &\leq K_\sigma \exp(2 \sqrt{2} K_\sigma) \sqrt{n} \Big(1 + \sqrt{qm} \frac{8 K_\sigma k}{L} \Big) + \sqrt{m} \frac{\eta_r(\sigma)}{32} \\
&\leq C_1 \sqrt{n} + C_2 \frac{\sqrt{nqm} k}{16 L} + \sqrt{m} \frac{\eta_r(\sigma)}{32},
\end{align*}
where $C_1 = K_\sigma \exp(2 \sqrt{2} K_\sigma)$ and $C_2 = 128 C_1 K_\sigma$. Thus, when $C_1 \sqrt{n} \leq \frac{1}{32}\sqrt{m}\eta_r(\sigma)$, we have
\[
s_{\min}\Big(\sigma \Big(\frac{1}{\sqrt{q}} W_{k+1} \mathbf{h}_k \Big)\Big) \geq \sqrt{m} \Big( \frac{3}{16} \eta_r(\sigma) - \frac{C_2}{16} \sqrt{nq} \frac{k}{L} \Big) \geq \frac{1}{8}\sqrt{m} \eta_r(\sigma)
\]
for $k \leq \frac{L \eta_r(\sigma)}{C_2\sqrt{nq}}$. As a consequence, for $k \leq \frac{L \eta_r(\sigma)}{C_2\sqrt{nq}}$, returning to \eqref{eq:proof:global-conv:lower-bound-der-V}, 
\[
\Big\|\frac{\partial \ell}{\partial V_{k+1}}\Big\|_F \geq \frac{1}{8L\sqrt{n}} \eta_r(\sigma) \exp(-2 \sqrt{2} K_\sigma) \sqrt{\ell}  = \frac{C_3 \eta_r(\sigma)}{L\sqrt{n}} \sqrt{\ell},
\]
letting $C_3 = \frac{\exp(-2 \sqrt{2} K_\sigma)}{8}$. Therefore, 
\begin{align*}
\Big\|\frac{\partial \ell}{\partial A}\Big\|_F^2 + L \sum_{k=1}^L \Big\|\frac{\partial \ell}{\partial Z_{k+1}}\Big\|_F^2 + \Big\|\frac{\partial \ell}{\partial B}\Big\|_F^2 &\geq L\sum_{k=1}^{\big\lfloor\frac{L \eta_r(\sigma)}{C_2\sqrt{nq}}\big\rfloor} \Big\|\frac{\partial \ell}{\partial V_{k+1}}\Big\|_F^2 \\
&\geq L \Big\lfloor\frac{L \eta_r(\sigma)}{C_2\sqrt{nq}}\Big\rfloor \frac{C_3^2 \eta_r(\sigma)^2}{L^2 n} \ell \\
&\geq \frac{C_3^2 \eta_r(\sigma)^3}{2C_2 n\sqrt{nq}} \ell,
\end{align*}
where we used the inequality $\lfloor x \rfloor \geq x/2$ for $x \geq 1$. This proves the result, with
\begin{align*}
c_1 &= \max\Big(\frac{2^{10} C_1^2}{\eta_r(\sigma)^2} , 1\Big) = \max\Big(\frac{2^{10} K_\sigma^2 \exp(4 \sqrt{2} K_\sigma)}{\eta_r(\sigma)^2}, 1\Big) \\
c_2 &= \frac{C_2}{\eta_r(\sigma)} = \frac{128 K_\sigma^2 \exp(2 \sqrt{2} K_\sigma)}{\eta_r(\sigma)} \\
c_3 &= \min\Big(\frac{\eta_r(\sigma)}{32 \sqrt{2} K_\sigma}, \frac{1}{2}\Big) \\
c_4 &= \frac{C_3^2 \eta_r(\sigma)^3}{2C_2} = \frac{\eta_r(\sigma)^3}{2^{14} K_\sigma^2 \exp(6 \sqrt{2} K_\sigma)} \ \\
\delta &= \exp\big(-\frac{qm}{16}\big) + n \exp \Big(-\frac{3 m \eta_r^2(\sigma)}{64 M_\sigma^2 n}\Big) + 2 n^2 \exp \Big(- \frac{d}{2v_x^2} \big(\frac{3}{4n}\big)^{2/r} \Big).
\end{align*}
\begin{remark} \label{rmk:final-expression-delta}
With appropriate values of $r$ and $m$, the probability of failure $\delta$ can be made as small as 
\begin{equation}    \label{eq:final-expression-delta}
\varepsilon + 2n^2 \exp \Big(- \frac{d}{2v_x^2} \big(\frac{3}{4n}\big)^{\varepsilon}\Big),    
\end{equation}
for any $\varepsilon > 0$. This is possible first by choosing $r$ such that $2/r \leq \varepsilon$, then by choosing $m$ such that the first two terms are less than $\varepsilon$. Moreover, we refer the interested reader to \citet[][Lemmas A.2 and A.9]{goel2020superpolynomial} for quantitative estimates of $\eta_r(\sigma)$ for ReLU and sigmoid activations. Then, the expression \eqref{eq:final-expression-delta} is essentially the same as the one appearing in \citet[][Theorem~3.3]{nguyen2020global}. The sub-Gaussian variance term $v_x^2$ is a constant independent of $d$ for many reasonable distributions of $x$, for instance if $x$ is uniform on the sphere \citep[][Theorem 3.4.6]{vershynin2018high}. In this case, we note that the expression \eqref{eq:final-expression-delta} is small if $n$ grows at most polynomially with $d$, in which case the exponential term in $d$ dominates the polynomial term in $n$. Finally note that there exist adversarial choices of distributions of $x$ for which $v_x^2$ grows linearly with $d$, in which case we get $\delta > 1$, meaning that our result is void. This is the case in particular if the distribution of $x$ is a mixture of a small number of Diracs \citep[][Exercice 3.4.4]{vershynin2018high}.
\end{remark}

\subsection{Proof of Theorem \ref{thm:pl-main}}

By Proposition \ref{prop:pl-holds}, there exists $\delta > 0$ such that, with probability at least $1-\delta$, the residual network~\eqref{eq:model-resnet} satisfies the $(M, \mu)$-local PL condition around its initialization, with
\[
M = \frac{c_3}{\sqrt{nq}} \quad \text{and} \quad \mu = \frac{c_4}{n\sqrt{n} q},
\]
for $c_3$ and $c_4$ depending on $\sigma$. Let us now apply Proposition \ref{prop:plcondition-to-convergence} with $f(h, (V, W)) = \frac{1}{\sqrt{m}} V \sigma(\frac{1}{\sqrt{q}} Wh)$. The only assumption of Proposition \ref{prop:plcondition-to-convergence} that requires some care to check is that the PL condition holds for the value of $\mu$ given by equation \eqref{eq:condition-mu}. Since the $(M, \mu)$-local PL condition implies the $(M, \tilde{\mu})$-local PL condition for any $\tilde{\mu} \in (0, \mu)$, it is the case if
\[
\frac{c_4}{n\sqrt{n} q} \geq \max(M_B K, M_B M_X, M_A M_X) \frac{8e^K}{M} \sup_{L \in \NN^*}\sqrt{\ell^L(0)},
\]
with $M_X$, $M_A$, $M_B$, and $K$ defined in Proposition \ref{prop:plcondition-to-convergence}.
Due to the initialization scheme of Section~\ref{sec:definitions}, we have, for any input $x \in \cX$, $h_L^L(0) = h_0^L(0)$, hence $F^L(x) = B^L(0) A^L(0) x = 0$ since $q \geq d+d'$. As a consequence, $\ell^L(0) = \frac{1}{n}\sum_{i=1}^n \|y_i\|^2$. Therefore, the condition becomes
\[
\frac{1}{n}\sum_{i=1}^n \|y_i\|^2 \leq \frac{c_3^2 c_4^2}{64 n^4 q^3 \max(M_B K, M_B M_X, M_A M_X)^2 e^{2K}},
\]
where we replaced $M$ by its value.
Define $C$ to be equal to the constant on the right-hand side. Then, according to the above, as soon as $\frac{1}{n}\sum_{i=1}^n \|y_i\|^2 \leq C$, we can apply Proposition \ref{prop:plcondition-to-convergence}, which gives several guarantees. First, the gradient flow is well defined on $\R_+$. Moreover, the proposition and the expression of $\mu$ given above yield the bound on the empirical risk. In particular, the empirical risk converges uniformly to zero. Furthermore, Proposition \ref{prop:plcondition-to-convergence} shows the uniform convergence of the weights as $t \to \infty$. Finally, the proposition ensures that the sequences $(A^L)_{L \in \NN^*}$ and $(B^L)_{L \in \NN^*}$ each satisfy the assumptions of Corollary~\ref{prop:arzela-ascoli:2}, and that $(Z_k^L)_{L \in \NN^*, 1 \leq k \leq L}$ satisfies the assumptions of Proposition~\ref{prop:arzela-ascoli}. We can therefore apply Theorem~\ref{thm:general-convergence:v2}, with $f$ defined above and $\pi$ equal to the identity. This gives the uniform convergence of the weights as $L \to \infty$. The four asymptotic statements of Theorem \ref{thm:pl-main} are then a consequence of Proposition~\ref{prop:double-limit}. 

\begin{remark}
A close examination of the quantities involved in the definition of $C$ reveals that~it depends only on $\cX$, $\sigma$, $n$, and $q$. In particular, it does not depend on the dimension $m$.
\end{remark}

\section{Some technical lemmas} \label{apx:lemmas}

We start by recalling the Picard-Lindelöf theorem  (see, e.g., \citealp{luk2017notes}, for a self-contained presentation, and \citealp{arnold1992ordinary}, for a textbook).
\begin{lemma}[Picard-Lindelöf theorem] \label{lemma:picard-lindelof}
Let $I = [0, T] \subset \R_+$ be an interval, for some $T \in (0, \infty]$. Consider the initial value problem
\begin{equation}    \label{eq:lemma:picard-lindelof}
U(s) = U_0 + \int_0^s g(U(r), r) dr, \quad s \in I,
\end{equation}
where $g: \R^d \times I \to \R^d$ is continuous and locally Lipschitz continuous in its first variable.
Then the initial value problem is well defined on an interval $[0, T_{\max}) \subset I$, i.e., there exists a unique maximal solution on this interval. Moreover, if $T_{\max} < T$, then $\|U(s)\|$ tends to infinity when $s$ tends to $T_{\max}$. Finally, if $g(\cdot, r)$ is uniformly Lipschitz continuous for $r$ in any compact, then $T_{\max} = T$.
\end{lemma}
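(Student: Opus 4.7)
The plan is to follow the classical four-step proof of Picard–Lindelöf, adapted to the integral formulation stated here.

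First, I would establish \emph{local} existence and uniqueness by a contraction-mapping argument. Fix $s_0 \in I$ and $V_0 \in \R^d$. Since $g$ is continuous and locally Lipschitz in its first variable, one can find $r > 0$, a constant $K \geq 0$, and a bound $M \geq 0$ such that $\|g(U, s)\| \leq M$ and $g(\cdot, s)$ is $K$-Lipschitz on the compact set $\bar B(V_0, r) \times [s_0, s_0 + \eta]$, for any small enough $\eta > 0$. Choose $\eta$ so small that $M \eta \leq r$ and $K \eta < 1$. Then the operator $\mathcal{T}[U](s) = V_0 + \int_{s_0}^s g(U(\tau), \tau) d\tau$ sends the closed ball of radius $r$ around the constant function $V_0$ in $\mathcal{C}([s_0, s_0 + \eta], \R^d)$ (endowed with the sup norm) into itself and is a contraction with constant $K\eta$. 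The Banach fixed-point theorem yields a unique fixed point, which is equivalent by the fundamental theorem of calculus to a unique solution of the integral equation on $[s_0, s_0 + \eta]$.

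Next, I would glue local solutions into a maximal one. Let $T_{\max}$ be the supremum of all $\tau \in I$ such that \eqref{eq:lemma:picard-lindelof} admits a solution on $[0, \tau]$. By local uniqueness, any two such solutions agree on the intersection of their domains, so they can be concatenated into a unique solution defined on $[0, T_{\max})$. For the blow-up statement, suppose by contradiction that $T_{\max} < T$ and that $\|U(s)\|$ does not tend to infinity as $s \to T_{\max}$. Then there exists a sequence $s_n \to T_{\max}$ with $\|U(s_n)\| \leq R$ for some $R > 0$. Using the bound $\|g\| \leq M_R$ on the compact set $\bar B(0, R+1) \times [0, T_{\max}]$, one shows that $U$ is uniformly continuous on a neighborhood of $T_{\max}$ (apply the integral formulation to bound $\|U(s) - U(s')\| \leq M_R |s - s'|$ as long as $U$ stays in $\bar B(0, R+1)$), hence extends continuously to $s = T_{\max}$ with a limit value $V^* \in \bar B(0, R)$. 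Applying the local existence step at $(V^*, T_{\max})$ then produces a solution on $[T_{\max}, T_{\max} + \eta']$ for some $\eta' > 0$, which by gluing extends the maximal solution strictly beyond $T_{\max}$, contradicting its maximality.

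Finally, for the last claim, assume $g(\cdot, r)$ is uniformly $K$-Lipschitz for $r$ in any compact subinterval $[0, \tau] \subset I$. Then $\|g(U, r)\| \leq \|g(0, r)\| + K \|U\|$, and since $\|g(0, \cdot)\|$ is continuous and bounded on $[0, \tau]$ by some $C_\tau$, the integral equation yields $\|U(s)\| \leq \|U_0\| + C_\tau \tau + K \int_0^s \|U(r)\| dr$ for $s \in [0, \tau] \cap [0, T_{\max})$. By Grönwall's inequality, $\|U(s)\| \leq (\|U_0\| + C_\tau \tau) e^{K \tau}$, which is bounded on $[0, \tau]$. This rules out blow-up, so $T_{\max} \geq \tau$ for every $\tau < T$, hence $T_{\max} = T$.

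The main obstacle is the blow-up argument in the second paragraph: one must carefully justify that $U$ admits a continuous extension at $T_{\max}$ before invoking local existence to extend past it, and this requires a \emph{bootstrap} using that $U$ remains in a compact on which $g$ is bounded. The other steps are essentially mechanical applications of the Banach fixed-point theorem and Grönwall's inequality.
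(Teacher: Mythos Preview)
Your proof is a correct rendition of the classical Picard--Lindel\"of argument (local contraction, gluing to a maximal interval, escape-from-compacta for blow-up, and Gr\"onwall to rule out blow-up under a uniform Lipschitz assumption). The paper, however, does not prove this lemma at all: it simply recalls the statement and cites standard references (e.g., Arnold's textbook) for a self-contained treatment, so there is nothing substantive to compare against beyond noting that your sketch is exactly the textbook proof those references contain.
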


We define time-dependent dynamics \eqref{eq:lemma:picard-lindelof} for generality, but the time-independent case $U(s) = U_0 + \int_0^s g(U(r)) dr$ is also of interest. In this case, the existence and uniqueness of the maximal solution holds if $g$ is locally Lipschitz continuous, and the solution is defined on $I$ if $g$ is Lipschitz continuous.
Besides, the first statement of Lemma \ref{lemma:picard-lindelof} (existence and uniqueness of the maximal solution) also holds if $\R^d$ is replaced by any (potentially infinite-dimensional) Banach space.

The next lemma gives conditions for the existence and uniqueness of the global solution of the initial value problem \eqref{eq:lemma:picard-lindelof} when the assumption of continuity of $g$ in its second variable is removed, thereby generalizing the Picard-Lindelöf theorem. 

\begin{lemma}[Caratheodory conditions for the existence and uniqueness of the global solution of an initial value problem] \label{lemma:caratheodory}
Consider the initial value problem
\begin{align*}
U(s) = U_0 + \int_0^s g(U(r), r) dr, \quad s \in [0, 1],
\end{align*}
where $g: \R^d \times [0, 1] \to \R^d$ is measurable and the integral is understood in the sense of Lebesgue integration.
Assume that $g(\cdot, r)$ is uniformly Lipschitz continuous for almost all $r \in [0, 1]$, and that $g(0, r) \equiv 0$. Then there exists a unique solution to the initial value problem, defined on $[0, 1]$.
\end{lemma}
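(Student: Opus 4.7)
The plan is to apply the Banach fixed point theorem to the Picard operator $T : C([0,1], \R^d) \to C([0,1], \R^d)$ defined by $(TU)(s) = U_0 + \int_0^s g(U(r), r)\, dr$, equipped with a Bielecki-type exponentially weighted supremum norm. The main steps will be: first, verify that $TU$ is well defined and continuous; second, show that $T$ is a strict contraction in this norm; third, conclude global existence and uniqueness from the Banach fixed point theorem in a single stroke over all of $[0,1]$.

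For the first step, I need measurability of the integrand $r \mapsto g(U(r), r)$ for $U \in C([0,1], \R^d)$. Since $g(\cdot, r)$ is Lipschitz (hence continuous) for almost every $r$ while $g$ is jointly measurable, $g$ qualifies as a Carathéodory function, and then the standard composition lemma for Carathéodory maps guarantees that $r \mapsto g(U(r), r)$ is Lebesgue measurable whenever $U$ is continuous. Integrability follows from $g(0, r) \equiv 0$ and the uniform Lipschitz bound $K$, which together give $\|g(U(r), r)\| \leq K \|U(r)\| \leq K \sup_{r \in [0,1]} \|U(r)\| < \infty$. Continuity of $TU$ in $s$ is then immediate from the absolute continuity of the Lebesgue integral.

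For the second step, I equip $C([0,1], \R^d)$ with the Bielecki norm $\|U\|_\lambda = \sup_{s \in [0,1]} e^{-\lambda s} \|U(s)\|$ for some $\lambda > K$. Using the uniform Lipschitz estimate and the bound $\|U(r) - V(r)\| \leq e^{\lambda r}\|U - V\|_\lambda$, one gets
\[
\|(TU)(s) - (TV)(s)\| \leq K\int_0^s \|U(r) - V(r)\|\, dr \leq K \|U-V\|_\lambda \int_0^s e^{\lambda r}\, dr \leq \frac{K\, e^{\lambda s}}{\lambda}\|U - V\|_\lambda,
\]
and multiplying by $e^{-\lambda s}$ yields $\|TU - TV\|_\lambda \leq (K/\lambda)\|U-V\|_\lambda$. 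Since $K/\lambda < 1$ and $(C([0,1], \R^d), \|\cdot\|_\lambda)$ is a Banach space (the Bielecki norm is equivalent to the usual supremum norm), the Banach fixed point theorem gives a unique fixed point $U$, which is the sought solution on all of $[0,1]$.

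The main subtlety will be the measurability argument in step one, since composition of a jointly measurable map with a measurable function is not in general measurable; the resolution is precisely the Carathéodory composition lemma, which is available here because $g(\cdot, r)$ is continuous for a.e. $r$. Beyond that, the key advantage of the Bielecki-norm framing is that it produces a global contraction in one shot, bypassing the usual iteration argument on $T^n$ or patching of local solutions, so existence and uniqueness on the entire interval $[0,1]$ fall out simultaneously.
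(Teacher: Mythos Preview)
Your proof is correct and takes a genuinely different route from the paper's. The paper proceeds by citing Filippov's theorems for Carath\'eodory-type ODEs to obtain a maximal local solution, and then shows that the solution cannot leave a ball of radius $\|U_0\|\exp(C)$ (via Gr\"onwall) before time $s=1$, forcing $T_{\max}=1$. By contrast, you work directly with the Picard operator and turn it into a global contraction on $C([0,1],\R^d)$ via the Bielecki weighted norm, which yields existence and uniqueness on the whole interval in one application of the Banach fixed point theorem. Your approach is more self-contained---it does not outsource the local theory to Filippov---and it sidesteps the a priori bound/boundary-reaching argument entirely; the paper's approach, on the other hand, makes the non-blow-up mechanism (the bound $\|U(s)\|\leq \|U_0\|e^C$, coming from $g(0,r)\equiv 0$) more visible as a separate ingredient. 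Both rely on the same Carath\'eodory composition lemma for the measurability of $r\mapsto g(U(r),r)$, which you correctly flag as the only delicate point.
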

\begin{proof} The proof is a consequence of \citet[][Theorems 1, 2, and 4]{filippov1988differential}.
    More specifically, denote by $C > 0$ the uniform Lipschitz constant of $g(\cdot, r)$.
    According to \citet[][Theorems 1 and 2]{filippov1988differential}, under the conditions of the lemma, there exists a unique maximal solution to the initial value problem. 
    Let us now consider a restricted version of the problem, where $g$ is defined on $D \times [0, 1]$, with $D$ a compact of $\R^d$ large enough to contain in its interior the ball of center~$0$ and radius $\|U_0\| \exp(C)$.
    There exists a unique maximal solution to this problem as well, also according to \citet[][Theorems 1 and 2]{filippov1988differential}, and, according to \citet[][Theorem~4]{filippov1988differential}, it is defined until it reaches the boundary of $D \times [0, 1]$, which it reaches at some point $(U^*, s^*)$. If $s^* < 1$, it means that~$U^*$ is on the boundary of $D$, and in particular that $\|U^*\| > \|U_0\| \exp(C)$. But, on the other hand, for almost every $r \in [0, 1]$,
    \[
    \|g(U(r), r)\| \leq \|g(0, r)\| + \|g(U(r), r) - g(0, r)\| \leq C\|U(r)\|.
    \]
    Hence, by Grönwall's inequality, for $s \leq s^*$,
    \[
    \|U(s)\| \leq \|U_0\| \exp(C).
    \]
    Thus, $\|U^*\| \leq \|U_0\| \exp(C)$, which is impossible.
    Hence the maximal solution of the restricted problem is defined on $[0, 1]$. Furthermore, the maximal solution of the original problem coincides with the restricted one whenever $U(s) \in D$, which is the case for every $s \in [0, 1]$, hence the maximal solution is defined on~$[0, 1]$.
\end{proof}

The next three lemmas recall well-known results from linear algebra, analysis, and random matrix theory. Recall that $s_{\min}$ and $\lambda_{\min}$ denote respectively the minimum singular value and eigenvalue of a matrix.

\begin{lemma} \label{lemma:proof-global-conv}
Let $A, A' \in \R^{m \times r}$ and $B \in \R^{r \times n}$. Then 
\[
s_{\min}(A + A') \geq s_{\min}(A) - \|A'\|_F.
\]
If $m \geq r$, then $\|AB\|_F \geq s_{\min}(A) \|B\|_F$. Furthermore, if $n \geq r$, then $\|AB\|_F \geq \|A\|_F s_{\min}(B)$.
\end{lemma}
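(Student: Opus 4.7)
The plan is to prove the three inequalities using standard linear-algebra tools, handled separately; all three are classical and the main (very mild) subtlety is bookkeeping around the shape conditions.

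For the first inequality, I would invoke Weyl's inequality for singular values, which states that for matrices $A, A'$ of the same shape, $|s_i(A+A') - s_i(A)| \leq \|A'\|_2$ for every singular value index $i$. Applied to the smallest singular value, this gives $s_{\min}(A+A') \geq s_{\min}(A) - \|A'\|_2$, and the bound $\|A'\|_2 \leq \|A'\|_F$ (operator norm dominated by Frobenius norm) finishes the argument. Alternatively, if one prefers a self-contained derivation in the case $m \geq r$, one can use the variational characterization $s_{\min}(A) = \min_{\|x\|=1}\|Ax\|$ together with the triangle inequality, but Weyl covers all shapes uniformly.

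For the second inequality, under $m \geq r$ we have $s_{\min}(A) = \min_{\|x\|=1}\|Ax\|$, so for every $x \in \R^r$, $\|Ax\| \geq s_{\min}(A)\|x\|$. Writing $B$ column-by-column as $B = [b_1, \dots, b_n]$ and using the definition of the Frobenius norm,
\[
\|AB\|_F^2 = \sum_{j=1}^n \|Ab_j\|^2 \geq s_{\min}(A)^2 \sum_{j=1}^n \|b_j\|^2 = s_{\min}(A)^2 \|B\|_F^2,
\]
and taking square roots gives the bound.

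The third inequality follows from the second by transposition: $\|AB\|_F = \|B^\top A^\top\|_F$, and since $B^\top \in \R^{n \times r}$ with $n \geq r$, part (2) applied to the pair $(B^\top, A^\top)$ yields $\|B^\top A^\top\|_F \geq s_{\min}(B^\top)\|A^\top\|_F = s_{\min}(B)\|A\|_F$, using that singular values and Frobenius norm are invariant under transposition. There is no serious obstacle here; the only thing to keep track of is that parts (2) and (3) each require the correct dimension condition so that the variational formula for $s_{\min}$ is not vacuously zero.
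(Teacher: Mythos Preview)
Your proof is correct and follows essentially the same approach as the paper: the first inequality via the singular-value perturbation bound (the paper cites it as $s_{\min}(A+A') \geq s_{\min}(A) - s_{\max}(A')$ and then uses $s_{\max}(A') \leq \|A'\|_F$, which is exactly Weyl), and the second by a direct computation. The only cosmetic difference is that for the second statement the paper writes the same bound via the trace identity $\|AB\|_F^2 = \Tr(BB^\top A^\top A) \geq \lambda_{\min}(A^\top A)\Tr(BB^\top)$ rather than column-by-column, and then says the third is similar; your reduction of the third to the second by transposition is equivalent.
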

\begin{proof}
The first statement is a consequence of, e.g., \citet{loyka2015singular}, which establishes that $s_{\min}(A + A') \geq s_{\min}(A) - s_{\max}(A')$, yielding the first inequality since $s_{\max}(A') = \|A\|_2 \leq \|A\|_F$. As for the second one,
we have 
\[
\|AB\|_F^2 = \textnormal{Tr}(ABB^\top A^\top) = \textnormal{Tr}(BB^\top A^\top A) \geq \lambda_{\min}(A^\top A) \textnormal{Tr}(BB^\top) = \lambda_{\min}(A^\top A) \|B\|_F^2.
\]
Since $m \geq r$, the rightmost quantity is equal to $s_{\min}(A) \|B\|_F$, proving the second statement of the lemma. The third statement is similar.
\end{proof}

\begin{lemma}   \label{lemma:double-limit}
Let $(e_{x, y})_{x\in\RR_+, y\in\RR_+} \subset E$, where $E$ is a Banach space, such that $e_{x, y}$ converges uniformly to $e_{\infty, y}$ when $x\to\infty$, and converges uniformly to $e_{x, \infty}$ when $y\to\infty$. Then there exists $e_\infty \in E$ such that 
\[\lim_{x, y\to\infty}e_{x,y} = \lim_{x\to\infty} e_{x, \infty}=\lim_{y\to\infty} e_{\infty, y} = e_\infty.\]
\end{lemma}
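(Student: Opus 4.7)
The plan is to exploit completeness of $E$ to build a candidate limit $e_\infty$, and then show all three asserted limits coincide via standard triangle inequality arguments. Since $E$ is merely a Banach space, I cannot appeal to sequential compactness and must instead work directly with Cauchy sequences.

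First I would show that $(e_{\infty,y})_{y}$ is a Cauchy family as $y \to \infty$. Fix $\varepsilon > 0$. The uniform convergence $e_{x,y} \to e_{\infty,y}$ in $y$ supplies some $X$ such that $\|e_{x,y} - e_{\infty,y}\| < \varepsilon/3$ for all $x \geq X$ and all $y$. Writing the three-term decomposition
\[
\|e_{\infty,y_1} - e_{\infty,y_2}\| \leq \|e_{\infty,y_1} - e_{X,y_1}\| + \|e_{X,y_1} - e_{X,y_2}\| + \|e_{X,y_2} - e_{\infty,y_2}\|,
\]
the outer terms are each bounded by $\varepsilon/3$ by the previous remark, and the middle term is controlled for $y_1, y_2$ large enough because $e_{X,y}$ converges (hence is Cauchy) as $y \to \infty$. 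By completeness of $E$, the family $(e_{\infty,y})_y$ admits a limit, which I call $e_\infty$.

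An entirely symmetric argument, swapping the roles of $x$ and $y$, shows that $(e_{x,\infty})_x$ is Cauchy and converges to some $e'_\infty \in E$. To identify all limits with $e_\infty$, I would use the triangle inequality
\[
\|e_{x,y} - e_\infty\| \leq \|e_{x,y} - e_{\infty,y}\| + \|e_{\infty,y} - e_\infty\|,
\]
where the first term is controlled by uniform convergence for $x$ large (uniformly in $y$) and the second by construction of $e_\infty$ for $y$ large. This proves $\lim_{x,y \to \infty} e_{x,y} = e_\infty$. The analogous bound via $e_{x,\infty}$ shows $\lim_{x,y \to \infty} e_{x,y} = e'_\infty$, so by uniqueness of limits $e_\infty = e'_\infty$.

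There is no real obstacle here; the argument is routine bookkeeping with the triangle inequality. The one subtlety worth attention is the order of quantifiers: the uniform convergence hypotheses are precisely what allows one to fix $X$ (or $Y$) independently of $y$ (or $x$) before selecting the large indices in the second step, without which the middle term in the three-term decomposition could not be controlled.
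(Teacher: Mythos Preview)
Your proof is correct. The paper takes a slightly more direct route: rather than first establishing convergence of the iterated-limit families $(e_{\infty,y})_y$ and $(e_{x,\infty})_x$, it shows in one step that the double family $(e_{x,y})$ is Cauchy, by combining the two uniform-Cauchy estimates
\[
\|e_{x_1,y_1}-e_{x_2,y_2}\| \leq \|e_{x_1,y_1}-e_{x_1,y_2}\| + \|e_{x_1,y_2}-e_{x_2,y_2}\| \leq \varepsilon,
\]
and then invokes completeness. The identities $\lim_x e_{x,\infty} = \lim_y e_{\infty,y} = e_\infty$ follow by letting one index go to infinity inside the double-limit estimate. Your approach, by contrast, constructs the candidate $e_\infty$ from one of the marginal limits and then verifies the double limit matches; this is equally valid and has the minor advantage of making the equality of iterated limits explicit rather than leaving it as an easy consequence. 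Both arguments hinge on the same observation you flagged: uniformity is exactly what lets one freeze the inner index before controlling the outer one.
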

\begin{proof}
Let $\varepsilon>0$. Since $e_{x, y}$ converges uniformly to $e_{\infty, y}$ as $x\to\infty$, there exists $x_0 \in \RR_+$ such that, for $x_1, x_2>x_0$ and $y \in \RR_+$,
\[
\|e_{x_1, y}-e_{x_2, y}\|\leq\frac\varepsilon2.
\]
Similarly, there exists $y_0 \in \RR_+$ such that, for $x \in \RR_+$ and $y_1, y_2>y_0$,
\[
\|e_{x, y_1}-e_{x, y_2}\|\leq\frac\varepsilon2.
\]
Hence, for $x_1, x_2>x_0$ and $y_1, y_2>y_0$,
\[
\|e_{x_1, y_1}-e_{x_2, y_2}\|\leq \|e_{x_1, y_1}-e_{x_1, y_2}\|+\|e_{x_1, y_2}-e_{x_2, y_2}\|\leq \varepsilon.
\]
We conclude that $(e_{x, y})_{x\in\RR_+, y\in\RR_+}$ is a Cauchy sequence, which therefore converges to some limit $e_\infty \in E$.
\end{proof}

\begin{lemma} \label{prop:proof-global-conv-1}
Let $W \in \R^{q \times m}$ be a standard Gaussian random matrix. Then, for $M_W \geqslant \sqrt{2}$, with probability at least $1 - \exp(-\frac{(M_W^2 - 1)qm}{16})$, one has $\|W\|_F \leq M_W \sqrt{q} \sqrt{m}$. 
\end{lemma}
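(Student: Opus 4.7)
}

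The plan is to recognize $\|W\|_F^2$ as a chi-squared random variable and then apply a standard Chernoff bound with a well-chosen parameter. Concretely, since the entries $W_{ij}$ are i.i.d.\ standard Gaussians, $\|W\|_F^2 = \sum_{i=1}^q \sum_{j=1}^m W_{ij}^2$ follows a $\chi^2_{qm}$ distribution. Its moment generating function is therefore $\Esp[e^{t\|W\|_F^2}] = (1-2t)^{-qm/2}$ for $t \in (0, 1/2)$.

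Applying Markov's inequality to $e^{t\|W\|_F^2}$ yields, for any such $t$,
\[
\Prob(\|W\|_F^2 \geq M_W^2 qm) \leq (1-2t)^{-qm/2} \exp(-t M_W^2 qm).
\]
I would specialize this to the convenient choice $t = 1/4$, which gives the bound
\[
\Prob\big(\|W\|_F \geq M_W \sqrt{q}\sqrt{m}\big) \leq \exp\Big(-\frac{qm(M_W^2 - 2\ln 2)}{4}\Big).
\]
It remains to compare this exponent with the target $-(M_W^2 - 1)qm/16$. That reduces to the elementary inequality $4(M_W^2 - 2\ln 2) \geq M_W^2 - 1$, i.e.\ $3 M_W^2 \geq 8\ln 2 - 1 \approx 4.55$, which holds since by assumption $M_W^2 \geq 2$ gives $3 M_W^2 \geq 6$.

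The only ``choice'' in the proof is the parameter $t = 1/4$; other values of $t$ (including the variational optimum $t = \tfrac12(1 - 1/M_W^2)$) would also work, but $t = 1/4$ has the advantage of making the arithmetic trivial while leaving enough slack to absorb the $-2\ln 2$ constant into the stated exponent $(M_W^2-1)/16$ for all $M_W \geq \sqrt{2}$. There is no real obstacle here; the statement is a straightforward Chernoff tail bound for a chi-squared variable, and the unusual constant $1/16$ in the conclusion is simply chosen by the authors to be a clean uniform-in-$M_W$ rate and does not reflect the sharpest possible bound obtainable from this computation.
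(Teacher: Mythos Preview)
Your proof is correct and follows essentially the same idea as the paper: both bound the upper tail of the $\chi^2_{qm}$ variable $\|W\|_F^2$ via a Chernoff-type argument. The only cosmetic difference is that the paper quotes the Laurent--Massart inequality $\Prob(\|W\|_F^2 - qm \geq 2\sqrt{qmx} + 2x) \leq e^{-x}$ with $x = \frac{(M_W^2-1)qm}{16}$ and checks $2\sqrt{qmx}+2x \leq (M_W^2-1)qm$, whereas you carry out the Chernoff bound directly with $t=1/4$; the two computations are equivalent in spirit and yield the same conclusion.
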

\begin{proof}
The quantity $\|W\|_F^2$ follows a chi-squared distribution with $qm$ degrees of freedom. Hence, according to \citet[][Lemma 1]{laurent2000adaptive}, for $x \geq 0$,
\[
\Prob(\|W\|_F^2 - qm \geq 2 \sqrt{qm x} + 2x) \leq \exp(-x).
\]
Taking $x = \frac{(M_W^2 - 1)qm}{16}$, we see that
\[
2 \sqrt{qm x} = \frac{1}{2} \sqrt{M_W^2 - 1} qm  \leq \frac{1}{2} (M_W^2 - 1) qm,
\]
where the bound follows from $M_W \geqslant \sqrt{2}$. Since furthermore $2 x \leq \frac{1}{2} (M_W^2 - 1) qm$, we obtain
\[
2 \sqrt{qm x} + 2x \leq (M_W^2 - 1) qm,
\]
and thus
\[
\Prob(\|W\|_F^2 > M_W^2 qm) \leq \Prob(\|W\|_F^2 - qm \geq 2 \sqrt{qm x} + 2x) \leq \exp(-x),
\]
yielding the result.
\end{proof}

Finally, the last lemma of the section gives a lower bound on the smallest singular value of a matrix of the form $\sigma(A)$, where $\sigma$ is a bounded function applied element-wise and $A$ belongs to a family of random matrix. The lower bound involves the Hermite transform of $\sigma$, which is defined in Section \ref{sec:proofs-main-paper}.

\begin{lemma} \label{prop:proof-global-conv-2}
Let $\sigma$ be a function bounded by some $M_\sigma > 0$. Let $W \in \R^{m \times d}$ be a standard Gaussian random matrix, and $X \in \R^{d \times n}$ a random matrix with i.i.d.~unitary columns independent of $W$.
Then, for any integer $r \geq 2$, there exists $\delta > 0$ such that, with probability at least $1 - \delta$, the smallest singular value of $\sigma(W X)$ is greater than $\frac{1}{4}\sqrt{m} \eta_r(\sigma)$, where $\eta_r(\sigma)$ is the $r$-th coefficient in the Hermite transform of~$\sigma$. Furthermore, the following expression for $\delta$ holds:
\[
\delta = n \exp \Big(-\frac{3 m \eta_r^2(\sigma)}{64 M_\sigma^2 n}\Big) + 2 n^2 \exp \Big(- \frac{d}{2C^2} \big(\frac{3}{4n}\big)^{2/r} \Big),
\]
where $C^2$ is the sub-Gaussian variance proxy of the columns of $\sqrt{d} X$.
\end{lemma}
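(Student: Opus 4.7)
The plan is to bound from below the minimum eigenvalue of $G := \sigma(WX)^\top \sigma(WX) \in \R^{n \times n}$, since its square root equals the minimum singular value of interest. Writing $a_k := \sigma(X^\top w_k) \in \R^n$ for the $k$-th row of $\sigma(WX)$ (with $w_k^\top$ the $k$-th row of $W$), we have $G = \sum_{k=1}^m a_k a_k^\top$, a sum of $m$ conditionally (on $X$) i.i.d.\ PSD random matrices with common mean $K(X) \in \R^{n \times n}$ whose entries are $K_{ij}(X) = \Esp_{w \sim \cN(0, I_d)}[\sigma(w^\top x_i) \sigma(w^\top x_j)]$. I first lower bound $\lambda_{\min}(K(X))$ on a high-probability event for $X$, and then use matrix concentration over the Gaussian $W$ to transfer the bound to $G$.

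For the kernel lower bound, since $\|x_i\| = \|x_j\| = 1$, the pair $(w^\top x_i, w^\top x_j)$ is jointly Gaussian with correlation $\rho_{ij} := x_i^\top x_j$, and the orthonormality of the Hermite basis yields the classical identity $K_{ij}(X) = \sum_{k \geq 0} \eta_k(\sigma)^2 \rho_{ij}^k$, i.e.\ $K(X) = \sum_{k \geq 0} \eta_k(\sigma)^2 (X^\top X)^{*k}$, where $(X^\top X)^{*k}$ denotes the entrywise $k$-th power. By Schur's product theorem every Hadamard power of a PSD matrix is PSD, so dropping all but the $r$-th term gives $K(X) \succeq \eta_r(\sigma)^2 (X^\top X)^{*r}$. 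The matrix $(X^\top X)^{*r}$ has unit diagonal and off-diagonal entries $\rho_{ij}^r$; Gershgorin then yields $\lambda_{\min}((X^\top X)^{*r}) \geq 1 - (n-1)\max_{i \ne j}|\rho_{ij}|^r \geq 1/4$ as soon as $|\rho_{ij}|^r \leq 3/(4n)$ for all $i \ne j$. Since $\sqrt{d}\,x_i$ is $C^2$-sub-Gaussian and independent of the unit vector $x_j$, one has $\Prob\bigl(|\rho_{ij}| \geq (3/(4n))^{1/r}\bigr) \leq 2\exp\bigl(-d(3/(4n))^{2/r}/(2C^2)\bigr)$, and a union bound over the at most $n^2$ ordered pairs produces the second term of $\delta$. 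On the resulting event, $\lambda_{\min}(K(X)) \geq \eta_r(\sigma)^2/4$.

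To transfer this to $G$, I apply Tropp's matrix Chernoff inequality to $G = \sum_{k=1}^m a_k a_k^\top$: conditionally on $X$, the summands are independent PSD with $\lambda_{\max}(a_k a_k^\top) = \|a_k\|^2 \leq n M_\sigma^2$, and $\Esp[G \mid X] = m K(X)$ so that $\mu_{\min} := \lambda_{\min}(\Esp[G\mid X]) \geq m\eta_r(\sigma)^2/4$. Choosing a multiplicative deviation parameter $\varepsilon$ so that $(1-\varepsilon)\mu_{\min} = m\eta_r(\sigma)^2/16$ produces a failure probability of the form $n \exp(-c\,m \eta_r^2(\sigma)/(M_\sigma^2 n))$ and hence $\lambda_{\min}(G) \geq m\eta_r(\sigma)^2/16$. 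Taking square roots yields $s_{\min}(\sigma(WX)) \geq \sqrt{m}\,\eta_r(\sigma)/4$, as required.

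The main obstacle is bookkeeping rather than a genuine mathematical difficulty: the multiplicative Chernoff constant $\varepsilon = 3/4$ combined with the Gershgorin slack $1/4$ must be balanced carefully to reach precisely the constant $3/64$ stated in the lemma, and one must also verify that the union-bound arithmetic over the $\binom{n}{2}$ pairs, together with the two-sided sub-Gaussian tail factor, yields the factor $2n^2$ quoted in $\delta$. Every other step (Hermite identity, Schur product theorem, Gershgorin, sub-Gaussian tail, matrix Chernoff) is a direct invocation of a standard tool.
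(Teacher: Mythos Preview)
Your proposal is correct and follows essentially the same two-step strategy as the paper: first lower bound the minimum eigenvalue of the conditional kernel matrix, then transfer this to $G$ via Tropp's matrix Chernoff inequality with $R = M_\sigma^2 n$. The only difference is that the paper outsources the first step by citing \citet[Lemma~3.4]{nguyen2020global} to obtain $\lambda_{\min}(\Esp[G\mid X]) \geq m\eta_r(\sigma)^2/8$, whereas you unpack that argument explicitly (Hermite expansion, Schur product theorem, Gershgorin, sub-Gaussian tail for the inner products $\rho_{ij}$), which in fact yields the slightly sharper constant $m\eta_r(\sigma)^2/4$; combined with the same Chernoff step this gives an exponent at least as good as the stated $3/64$, so the bookkeeping you flag as the remaining obstacle poses no difficulty.
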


\begin{proof}
Denoting by $w_i$ the $i$-th row of $W$ and letting
\[
M_i = \sigma(X^\top w_i^\top) \sigma(w_i X),
\]
our goal is to lower bound the smallest eigenvalue value $\lambda_{\min}(M)$ of $M = \sum_{i=1}^m M_i$. Observe that
\begin{align*}
\Esp (M | X) &= m \Esp_{\tilde{w} \sim \mathcal{N}(0, I_d)} \Big( \sigma(X^\top \tilde{w}^\top) \sigma(\tilde{w} X) \Big| X \Big) \\
&= m \Esp_{\tilde{w} \sim \mathcal{N}(0, \frac{1}{d} I_d)} \Big( \sigma\big((\sqrt{d}X)^\top \tilde{w}^\top\big) \sigma\big(\tilde{w} (\sqrt{d} X) \big)\Big| X \Big).    
\end{align*}
Letting $\lambda_{\min}(\Esp (M | X))$ be the smallest eigenvalue of this matrix and $r \geq 2$ be an integer, \citet[][Lemma 3.4]{nguyen2020global} show that, with probability at least $1 - 2 n^2 \exp( - \frac{d}{2C^2} (\frac{3}{4n})^{2/r})$ over the matrix $X$,
\begin{equation}    \label{eq:proof:lower-bound-smallest-eingenvalue1}
\lambda_{\min}(\Esp (M | X)) \geq \frac{m \eta_r^2(\sigma)}{8}.
\end{equation}

We now apply a matrix Chernoff's bound to lower bound with high probability the smallest eigenvalue $\lambda_{\min}(M|X)$ of $M$ conditionally on $X$, as a function of $\lambda_{\min}(\Esp (M | X))$. By \citet[][Remark 5.3]{tropp2012user}, we have, for $t \in [0, 1]$,
\begin{equation*}
\Prob ( \lambda_{\min}(M) \leq t \lambda_{\min}(\Esp (M | X)) | X) \leq n \exp \Big(-\frac{(1-t^2) \lambda_{\min}(\Esp (M | X))}{2R(X)}\Big),
\end{equation*}
where $R(X)$ is an almost sure upper bound on the largest eigenvalue of $M_i|X$, which we can take equal to $M_\sigma^2 n$ since the largest eigenvalue of $M_i$ is equal to $\|\sigma(w_i X)\|_2^2 \leq M_\sigma^2 n$.
Taking $t=1/2$, we obtain, on the event $[\lambda_{\min}(\Esp (M | X)) \geq \frac{m \eta_r^2(\sigma)}{8}]$,
\begin{align*}
\Prob\Big(\lambda_{\min}(M) \geq \frac{\lambda_{\min}(\Esp (M | X))}{2} \Big| X\Big) \geq 1 - n \exp \Big(-\frac{3 m \eta_r^2(\sigma)}{64 M_\sigma^2 n}\Big),
\end{align*}
thus, on the event $[\lambda_{\min}(\Esp (M | X)) \geq \frac{m \eta_r^2(\sigma)}{8}]$,
\begin{align*}
\Prob\Big(\lambda_{\min}(M) \geq \frac{m \eta_r^2(\sigma)}{16}\Big) \geq 1 - n \exp \Big(-\frac{3 m \eta_r^2(\sigma)}{64 M_\sigma^2 n}\Big).
\end{align*}
Using \eqref{eq:proof:lower-bound-smallest-eingenvalue1}, we obtain
\begin{align*}
\Prob\Big(\lambda_{\min}(M) \geq \frac{m \eta_r^2(\sigma)}{16}\Big) 
&\geq \Big(1 - n \exp \Big(-\frac{3 m \eta_r^2(\sigma)}{64 M_\sigma^2 n}\Big) \Big) \Prob\Big(\lambda_{\min}(\Esp (M | X)) \geq \frac{m \eta_r^2(\sigma)}{8}\Big) \\
&\geq  \Big(1 - n \exp \Big(-\frac{3 m \eta_r^2(\sigma)}{64 M_\sigma^2 n}\Big) \Big) \Big(1 - 2 n^2 \exp \Big(- \frac{d}{2C^2} \big(\frac{3}{4n}\big)^{2/r}) \Big)\Big) \\
&\geq 1 - n \exp \Big(-\frac{3 m \eta_r^2(\sigma)}{64 M_\sigma^2 n}\Big) - 2 n^2 \exp \Big(- \frac{d}{2C^2} \big(\frac{3}{4n}\big)^{2/r} \Big).
\end{align*}

\end{proof}

\section{Counter-example for the ReLU case.} \label{app:relu}

This section gives a proof sketch to illustrate that, with the ReLU activation $\sigma:x \mapsto \max(0, x)$, the smoothness of the weights can be lost during training. More precisely, we show a case where successive weights are at distance $\cO(\frac1L)$ at initialization and at distance $\Omega(1)$ after training.

For the sake of simplicity, we will assume that the depth is even, and denote it as $2L$. We place ourselves in a one-dimensional setting (i.e., $d=1$). The parameters are $(w_1, \cdots, w_{2L}) \in \R^{2L}$, and the residual network writes as follows, for an input $x\in\RR$:
\begin{align*}
        h_0(t) &= x\\
        h_{k+1}(t) &=h_{k}(t) + \frac1{2L} \sigma(w_{k+1}(t) h_{k}(t)) , \quad k \in \{0, \dots, 2L-1\}.
\end{align*}
We consider a sample consisting of a single point $(x, Cx) \in \RR_{+}^2$, 
with $C > 1$ (independent of $L$), and define the empirical risk as 
$\ell(t) = (h_{2L}(t) - C x)^2$. The risk is minimized by gradient flow.

The weights are initialized to $w_k(0) =  \frac{(-1)^k}{2L}$. For $x \in \R_+$ we have that $h_k(t) \geq 0$ for all $k \in \{0, \dots, 2L\}$. Note that the argument of $\sigma$ on the odd layers is negative. Therefore, by definition of $\sigma$, the gradient of the loss with respect to the odd layers is zero and we have, for $k \in \{0, \dots, L-1\}$, $w_{2k+1}(t) = w_{2k+1}(0)$. On the other hand, the argument of $\sigma$ is positive on the even layers, and thus, 
\[
h_{2L}(t) = \prod_{j=1}^L\Big(1 + \frac{w_{2j}(t)}{2L}\Big) x.
\]
As a consequence, the gradient flow equation for the even layers is, for $k \in \{1, \dots, L\}$,
\begin{equation*}
\frac{dw_{2k}}{dt}(t) = - \frac{\partial \ell}{\partial w_{2k}}(t) = 
2 x \Big(C - \prod_{j=1}^L\Big(1 + \frac{w_{2j}(t)}{2L}\Big)\Big) \prod_{j=1, j\neq k}^L\Big(1 + \frac{w_{2j}(t)}{2L}\Big).
\end{equation*}
Due to the symmetry of these equations for $k \in \{1, \dots, L\}$ and the fact that all the $w_{2k}(0)$ are equal, the parameters on each even layer coincide at all times and are equal to $w(t)$ such that
\[
\frac{dw}{dt}(t) = 2x \Big(C - \Big(1 + \frac{w(t)}{2L}\Big)^L\Big) \Big(1 + \frac{w(t)}{2L}\Big)^{L-1}.
\]
An analysis of this ODE reveals that $w(t)$ tends as $t\to \infty$ to $w^{\star} > 0$ satisfying that 
\begin{equation} \label{eq:def-w-star}
\Big(1 + \frac{w^{\star}}{2L}\Big)^L = C.
\end{equation}
This can be seen by letting $y(t) = C - (1 + \frac{w(t)}{2L})^L$, and applying Grönwall's inequality to $y$. Therefore, as $t \to \infty$, one has $w_{2k+1}(t) \to -\frac{1}{2L}$ and $w_{2k}(t) \to w^{\star}$, where \eqref{eq:def-w-star} implies that $w^{\star} \geq 2 \log(C)$. This shows that the final weights are not smooth in the sense that the distance between two successive weights is $\Omega(1)$.

This result contrasts sharply with Proposition \ref{prop:general-clipped-bounded}, which shows that successive weights remain at a distance $\cO(\frac1L)$ throughout training, when initialized as a discretization of a Lipschitz continuous function, and with a smooth activation function. In fact, Proposition \ref{prop:general-clipped-bounded} can be generalized to any initialization such that successive weights are at distance $\cO(\frac1L)$ at initialization, which is the case in the counter-example. This means that the only broken assumption in our counter-example is the non-smoothness of the activation function. This non-smoothness causes the gradient flow dynamics for two successive weights to deviate, even though the weights are initially close to each other, because they are separated by the kink of ReLU at zero.

\section{Experimental details}  \label{apx:experiments}

Our code is available at \url{https://github.com/michaelsdr/implicit-regularization-resnets-nodes}.

We use Pytorch \citep{paszke2019pytorch}.

\paragraph{Synthetic data.} To ease the presentation, we consider the case where $q = d = d'$, and we do not train the weights $A^L$ and $B^L$, which therefore stay equal to the identity. The sample points $(x_i, y_i)_{1\leq i\leq n}$ follow independent standard Gaussian distributions. Note that it does not hurt to take $x$ and $y$ independent since, in this subsection, our focus is on optimization results only and not on statistical aspects.

\paragraph{Large-depth limit.} We take $n=100$, $d=16$, $m=32$. We train for $500$ iterations, and set the learning rate to $L \times 10^{-2}$. The scaling of the learning rate with $L$ is the equivalent of the $L$ factor in the gradient flow \eqref{eq:gf}.

\paragraph{Long-time limit.} We take $n=50$, $d=16$, $m=64$, $L = 64$, and train for $80{,}000$ iterations with a learning rate of $5 L \times 10^{-3}$. 

\paragraph{Real-world data.} We take $L=256$. The first layer is a trainable convolutional layer with a kernel size of $5 \times 5$, a stride of $2$, a padding of $1$, and $16$ out channels. We then iterate the residual layers
$$h_{k+1}^L = h_k^L+\frac{1}{L} \mathrm{bn}^L_{2,k}(\mathrm{conv}^L_{2,k}(\sigma(\mathrm{bn}^L_{1,k}(\mathrm{conv}^L_{1,k}(h_k^L))))), \quad k \in \{0, \dots, L-1\}, $$
where $\mathrm{conv}^L_{i,k}$ are convolutions with kernel size $3$, stride of $2$, and padding of $1$, and $\mathrm{bn}^L_{i,k}$ are batch normalizations, as is standard in residual networks \citep{heDeepResidualLearning2015}. The model is trained using stochastic gradient descent on the cross-entropy loss for 180 epochs. The initial learning rate is $4 \times 10^{-2}$ and is gradually decreased using a cosine learning rate scheduler. 

\paragraph{Normalization.} The residual layers considered in the real-world case have a batch normalization layer (see formula above). We observe empirically that implicit regularization towards a neural ODE still holds in this case. However, these layers are not present in the models we consider. Nevertheless, as discussed in Section \ref{subsec:generalization}, some of our results extend to a setting where we only assume that the residual connection is a Lipschitz-continuous function. The intuition suggests that this should include in particular the case where layer normalizations are added to the architecture, although this should clearly necessitate a rigorous and separate mathematical analysis.
Finally, note that a connection has been drawn between batch normalization and scaling factors \citep{de2020batchNormBiasesTowardIdentity}. 

\paragraph{Additional plot.}
To complement Figure \ref{fig:cifar}, we display the average (across layers) of the Frobenius norm of the difference between two successive weights in the convolutional ResNets after training on CIFAR-10, depending on the initialization strategy. The index $i$ corresponds to the index of the convolution layer. Results are averaged over 5 runs. We see that a smooth initialization leads to weights that are in average an order of magnitude smoother than those obtained with an i.i.d. initialization.

\begin{figure}[ht]%
    \centering
    \includegraphics[width=0.7\textwidth]{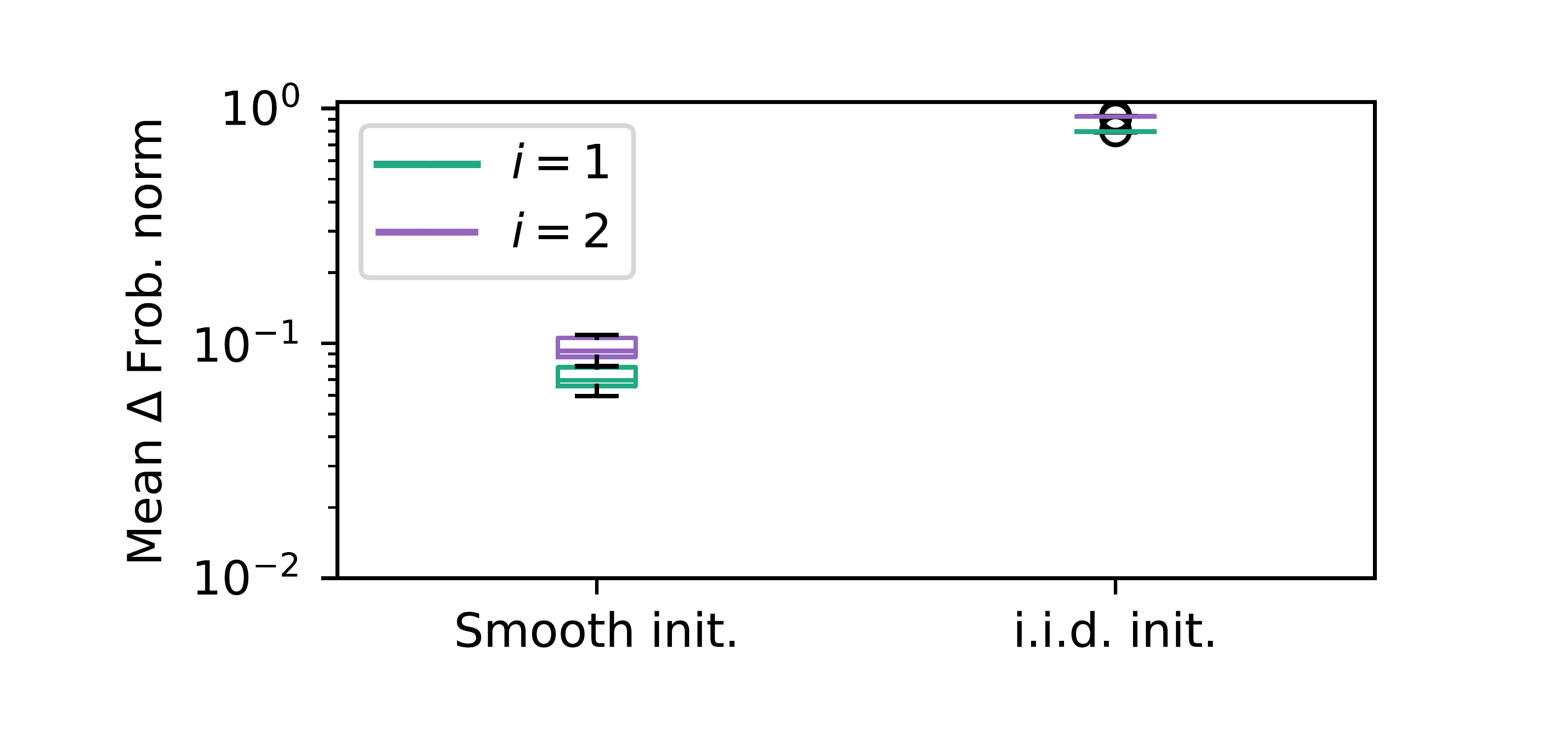}
\caption{Average (across layers) of the Frobenius norm of the difference between two successive weights in the convolutional ResNets after training on CIFAR-10, depending on the initialization strategy.}
\end{figure}

\end{document}